\title[Pure Exploration with Multiple Correct Answers]{%
  Pure Exploration with Multiple Correct Answers
}
\let\set\undefined 
\DeclareMathOperator{\conv}{conv} 
\DeclareMathOperator*{\argmax}{argmax}
\DeclareMathOperator*{\argmin}{argmin}
\DeclareMathOperator{\ex}{\mathbb E}
\DeclareMathOperator{\pr}{\mathbb P}
\DeclareMathOperator{\qr}{\mathbb Q}
\DeclareMathOperator{\R}{\mathbb R}
\DeclareMathOperator{\N}{\mathbb N}
\DeclareMathOperator{\KL}{KL}
\let\inf\undefined
\let\min\undefined
\let\max\undefined
\DeclareMathOperator*{\inf}{\vphantom{sup}inf}
\DeclareMathOperator*{\min}{\vphantom{sup}min}
\DeclareMathOperator*{\max}{\vphantom{sup}max}
\DeclareBoldMathCommand{\vmu}{\mu}
\DeclareBoldMathCommand{\vlambda}{\lambda}
\DeclareBoldMathCommand{\vtheta}{\theta}
\DeclareBoldMathCommand{\w}{w}
\DeclareBoldMathCommand{\q}{q}
\DeclareBoldMathCommand{\p}{p}
\DeclareBoldMathCommand{\e}{e}
\DeclareBoldMathCommand{\a}{a}
\DeclareBoldMathCommand{\u}{u}
\let\top\intercal 
\newcommand{\ihat}{\hat \imath}
\begin{document}

\maketitle

\begin{abstract}%
  We determine the sample complexity of pure exploration bandit problems with multiple good answers. We derive a lower bound using a new game equilibrium argument. We show how continuity and convexity properties of \emph{single-answer} problems ensures that the Track-and-Stop algorithm has asymptotically optimal sample complexity. However, that convexity is lost when going to the \emph{multiple-answer} setting. We present a new algorithm which extends Track-and-Stop to the multiple-answer case and has asymptotic sample complexity matching the lower bound.
\end{abstract}

\begin{keywords}%
  Pure exploration, multi-armed bandits, best arm identification.
\end{keywords}



\section{Introduction}


In \emph{pure exploration} aka \emph{active testing} problems the learning system interacts with its environment by sequentially performing experiments to quickly and reliably identify the answer to a particular pre-specified question.
Practical applications range from simple queries for cost-constrained physical regimes, i.e.\ clinical drug testing, to complex queries in structured environments bottlenecked by computation, i.e.\ simulation-based planning.
The theory of pure exploration is studied in the multi-armed bandit framework.
The scientific challenge is to develop tools for characterising the sample complexity of new pure exploration problems, and methodologies for developing (matching) algorithms.
With the aim of understanding power and limits of existing methodology, we study an extended problem formulation where each instance may have multiple correct answers. We find that multiple-answer problems present a phase transition in complexity, and require a change in our thinking about algorithms.

The existing methodology for developing asymptotically instance-optimal algorithms, Track-and-Stop by \citet{GK16}, exploits the so-called \emph{oracle weights}. These probability distributions on arms naturally arise in sample complexity lower bounds, and dictate the optimal sampling proportions for an ``oracle'' algorithm that needs to be successful only on exactly the current problem instance. The main idea is to track the oracle weights at a converging estimate of the instance.
The analysis of Track-and-Stop requires continuity of the oracle weights as a function of the bandit model. For the core Best Arm Identification problem, discontinuity only occurs at degenerate instances where the sample complexity explodes. So this assumption seemed harmless.

\paragraph{Our contribution}
We show that the oracle weights may be non-unique, even for single-answer problems, and hence need to be regarded as a set-valued mapping. We show this mapping is always (upper hemi-)continuous. We show that it is convex for single-answer problems, and this allows us to extend the Track-and-Stop methodology to all such problems. At instances with non-singleton set-valued oracle weights more care is needed: of the two classical tracking schemes ``C'' converges to the convex set, while ``D'' may fail entirely.

We show that for multiple-answer problems convexity is violated. There are instances where two distinct oracle weights are optimal, while no mixture is. Unmodified tracking converges in law (experimentally) to a distribution on the full convex hull, and suffers as a result. We propose a ``sticky'' modification to stabilise the approach, and show that now it converges to only the corners. We prove that Sticky Track-and-Stop is asymptotically optimal.

\paragraph{Related work}

Multi-armed bandits have been the subject of intense study in their role as a model for medical testing and reinforcement learning. For the objective of reward maximisation \citep{Berry:Fristedt85,LaiRobbins85bandits,Aueral02,Bubeck:Survey12} the main challenge is balancing exploration and exploitation. The field of pure exploration (active testing) focuses on generalisation vs sample complexity, in the fixed confidence, fixed budget and simple regret scalarisations. It took off in machine learning with the \emph{multiple-answer} problem of $(\epsilon,\delta)$-Best Arm Identification (BAI) \citep{DBLP:conf/colt/Even-DarMM02}. Early results focused on worst-case sample complexity guarantees in sub-Gaussian bandits. Successful approaches include
\emph{Upper and Lower confidence bounds} \citep{Bubeckal11,Shivaramal12,Gabillon:al12,COLT13,Jamiesonal14LILUCB},
\emph{Racing or Successive Rejects/Eliminations} \citep{MaronMoore:97,EvenDaral06,Bubeck10BestArm,COLT13,Karnin:al13}.

Fundamental information-theoretic barriers \citep{castro2014,on.the.complexity,GK16} for each specific problem instance refined the worst-case picture, and sparked the development of instance-optimal algorithms for single-answer problems based on
\emph{Track-and-Stop} \citep{GK16} and \emph{Thompson Sampling} \citep{Russo16}. For multiple-answer problems the elegant KL-contraction-based lower bound is not sharp, and new techniques were developed by \cite{eps.del.PAC.bai}.

Recent years also saw a surge of interest in pure exploration with \emph{complex queries} and \emph{structured environments}. \citet{Shivaram:al10} identify the top-$M$ set, \citet{thresholding} the arm closet to a threshold, and \citet{Chen14ComBAI,gabillon16} the optimiser over an arbitrary combinatorial class. For arms arranged in a matrix \citet{rank.one.bandit} study BAI under a rank-one assumption, while \citet{pmlr-v70-zhou17b} seek to identify a Nash equilibrium. For arms arranged in a minimax tree there is significant interest in finding the optimal move at the root \citet{FindTopWinner, maximinarm, structured.bestarm, mcts.by.bai, minimums}, as a theoretical model for studying Monte Carlo Tree search (which is a planning sub-module of many advanced reinforcement learning systems).







\newcommand{\high}{\text{hi}}
\newcommand{\low}{\text{lo}}
\newcommand{\none}{\text{no}}
\newcommand{\onebox}[1]{\tikz[baseline=.25em] \path[fill=#1,semitransparent] (0,0) rectangle (1em,1em);}
\newcommand{\twobox}[2]{\begin{tikzpicture}[baseline=.25em]
  \path[fill=#1,semitransparent] (0,0) rectangle (1em,1em);
  \path[fill=#2,semitransparent] (0,0) rectangle (1em,1em);
\end{tikzpicture}}
\newcommand{\eps}{0.2}
\newcommand{\gam}{0.6}

\begin{table}[t!]
\centering
\renewcommand{\arraystretch}{2}
\begin{tabular}{>{\raggedright\arraybackslash}p{\widthof{Thresholding}}p{\widthof{blablablablabla}}lp{\widthof{example}}p{\widthof{alternative}}}
  Identification Problem
  & Possible answers $\mathcal I$
  & Correct answers $i^*(\vmu) \subseteq \mathcal I$
  & Correct  $i^*(\vmu)$
  & Oracle $i_F(\vmu)$
  \\
  \midrule
  $\epsilon$ Best Arm
  & $[K]$
  & $\setc{k}{\mu_k \ge \max_j \mu_j - \epsilon}$
  & \begin{tikzpicture}[baseline=.5cm]
  \path[fill=red,semitransparent] (0,0) -- (1,0) -- (1,1) -- (1-\eps,1) -- (0,\eps) -- cycle;
  \path[fill=blue,semitransparent] (0,0) -- (0,1) -- (1,1) -- (1,1-\eps) -- (\eps,0) -- cycle;
  \draw  (0,0) rectangle (1,1);
\end{tikzpicture}
  & \begin{tikzpicture}[baseline=.5cm]
  \path[fill=red,semitransparent] (0,0) -- (1,1) -- (1,0) -- cycle;
  \path[fill=blue,semitransparent] (0,0) -- (1,1) -- (0,1) -- cycle;
  \draw  (0,0) rectangle (1,1);
\end{tikzpicture}
  \\
  Thresholding Bandit
  & $2^K$
  & $\set*{\setc*{k}{\mu_k \le \gamma}}$
  & \begin{tikzpicture}[baseline=.5cm]
  \path[fill=brown,semitransparent] (0,0) rectangle (\gam,\gam);
  \path[fill=teal,semitransparent] (\gam,0) rectangle (1,\gam);
  \path[fill=yellow,semitransparent] (0,\gam) rectangle (\gam,1);
  \path[fill=orange,semitransparent] (\gam,\gam) rectangle (1,1);
  \draw  (0,0) rectangle (1,1);
\end{tikzpicture}
  & \begin{tikzpicture}[baseline=.5cm]
  \path[fill=brown,semitransparent] (0,0) rectangle (\gam,\gam);
  \path[fill=teal,semitransparent] (\gam,0) rectangle (1,\gam);
  \path[fill=yellow,semitransparent] (0,\gam) rectangle (\gam,1);
  \path[fill=orange,semitransparent] (\gam,\gam) rectangle (1,1);
  \draw  (0,0) rectangle (1,1);
\end{tikzpicture}
  \\
  $\epsilon$ Minimum Threshold
  & $\set{\low,\high}$
  & $\renewcommand{\arraystretch}{1}
    \begin{array}[t]{@{}ll}
      \set{\low} & \text{if $\min_k \mu_k < \gamma - \epsilon$}
      \\
      \set{\high} & \text{if $\min_k \mu_k > \gamma + \epsilon$}
      \\
      \set{\low, \high} & \text{o.w.}
    \end{array}
    $
  & \begin{tikzpicture}[baseline=.5cm]
  \path[fill=green,semitransparent] (0,0) -- (1,0) -- (1,\gam+\eps) -- (\gam+\eps,\gam+\eps) -- (\gam+\eps,1) -- (0,1) -- cycle;
  \path[fill=orange,semitransparent] (\gam-\eps,\gam-\eps) rectangle (1,1);
  \draw  (0,0) rectangle (1,1);
\end{tikzpicture}
  & \begin{tikzpicture}[baseline=.5cm]
  \newcommand{\descent}{(0.603601, 1.) (0.6061, 0.868323) (0.6086, 0.799213) (0.611099,
  0.754821) (0.613599, 0.723129) (0.616098, 0.698967) (0.618598,
  0.679694) (0.621097, 0.663804) (0.623597, 0.650366) (0.626096,
  0.638769) (0.628595, 0.628595) (0.628595, 0.628595) (0.638769,
  0.626096) (0.650366, 0.623597) (0.663804, 0.621097) (0.679694,
  0.618598) (0.698967, 0.616098) (0.723129, 0.613599) (0.754821,
  0.611099) (0.799213, 0.6086) (0.868323, 0.6061) (1., 0.603601)}
\path[fill=green,semitransparent] (0,0) -- (0,1) -- plot coordinates {\descent}  -- (1,0) -- cycle;
  \path[fill=orange,semitransparent] (1,1) -- (0,1) -- plot coordinates {\descent}  -- (1,0) -- cycle;
  \draw  (0,0) rectangle (1,1);
\end{tikzpicture}
  \\
  Any Low Arm
  & $[K] \cup \set{\none}$
  &
    $\renewcommand{\arraystretch}{1}
    \begin{array}[t]{@{}ll}
      \setc*{k}{\mu_k \le \gamma}
      & \text{if $\min_k \mu_k < \gamma$}
      \\
      \set*{\none} & \text{if $\min_k \mu_k > \gamma$}
    \end{array}$
& \begin{tikzpicture}[baseline=.5cm]
  \path[fill=red,semitransparent] (0,0) rectangle (1,\gam);
  \path[fill=blue,semitransparent] (0,0) rectangle (\gam,1);
  \path[fill=orange,semitransparent] (\gam,\gam) rectangle (1,1);
  \draw  (0,0) rectangle (1,1);
\end{tikzpicture}
& \begin{tikzpicture}[baseline=.5cm]
  \path[fill=red,semitransparent] (0,0) -- (1,0) -- (1,\gam) --  (\gam,\gam) -- cycle;
  \path[fill=blue,semitransparent] (0,0) -- (0,1) -- (\gam,1) -- (\gam,\gam) -- cycle;
  \path[fill=orange,semitransparent] (\gam,\gam) rectangle (1,1);
  \draw  (0,0) rectangle (1,1);
\end{tikzpicture}
  \\
  Any Sign
  & $[K] \times \set{\low, \high}$
  &
    $
    \setc*{
    (k,\low)
    }{\mu_k \le \gamma}
    \cup
    \setc*{
    (k,\high)
    }{\mu_k \ge \gamma}
    $
& \begin{tikzpicture}[baseline=.5cm]
  \path[fill=Melon,semitransparent] (0,0) rectangle (1,\gam);
  \path[fill=OliveGreen,semitransparent] (0,\gam) rectangle (1,1);
  \path[fill=RubineRed,semitransparent] (0,0) rectangle (\gam,1);
  \path[fill=ProcessBlue,semitransparent] (\gam,0) rectangle (1,1);
  \draw  (0,0) rectangle (1,1);
\end{tikzpicture}
  & \begin{tikzpicture}[baseline=.5cm]
    \path[fill=RubineRed,semitransparent] (0,0) -- (1,0) -- (1,2*\gam-1) -- (\gam,\gam) -- cycle;
    \path[fill=OliveGreen,semitransparent] (1,1) -- (1,2*\gam-1) -- (\gam,\gam) -- cycle;
    \path[fill=ProcessBlue,semitransparent] (1,1) -- (\gam,\gam) -- (2*\gam-1,1) -- cycle;
    \path[fill=Melon,semitransparent] (0,0) -- (\gam,\gam) -- (2*\gam-1,1) -- (0,1) -- cycle;
  \draw  (0,0) rectangle (1,1);
\end{tikzpicture}
\end{tabular}
\caption[Identification Problems]{Collection of Identification Problems. The diagrams depict 2-arm instances, parameterised by the two means, with colours showing the set of correct answers: \textbf{one correct answer:}
\onebox{red}~$\set{1}$,
\onebox{blue}~$\set{2}$,
\onebox{yellow}~$\set{\set{1}}$,
\onebox{teal}~$\set{\set{2}}$,
\onebox{brown}~$\set{\set{1,2}}$,
\onebox{green}~$\set{\low}$,
\onebox{orange}~$\set{\emptyset}$/$\set{\high}$/$\set{\none}$,
\onebox{Melon}~$\set{(1,\low)}$,
\onebox{OliveGreen}~$\set{(1,\high)}$,
\onebox{RubineRed}~$\set{(2,\low)}$,
\onebox{ProcessBlue}~$\set{(2,\high)}$,
and \textbf{two correct answers:}
\twobox{red}{blue}~$\set{1,2}$,
\twobox{green}{orange}~$\set{\low,\high}$,
\twobox{Melon}{RubineRed}~$\set{(1,\low), (2,\low)}$,
\twobox{OliveGreen}{RubineRed}~$\set{(1,\low), (2,\high)}$,
\twobox{Melon}{ProcessBlue}~$\set{(1,\high), (2,\low)}$,
\twobox{OliveGreen}{ProcessBlue}~$\set{(1,\high), (2,\high)}$
.
}\label{tab:problems}
\end{table}

\section{Notations}
We work in a given one-parameter one-dimensional canonical exponential family, with mean parameter in an open interval $\mathcal{O} \subseteq \R$. We denote by $d(\mu, \lambda)$ the KL divergence from the distribution with mean $\mu$ to that with mean $\lambda$. A $K$-armed bandit model is identified by its vector $\vmu\in \mathcal O^K$ of mean parameters. We denote by $\mathcal M \subseteq \mathcal{O}^K$ the set of possible mean parameters in a specific bandit problem. Excluding parts of $\mathcal{O}^K$ from $\mathcal{M}$ may be used to encode a known structure of the problem. We assume that there is a finite domain $\mathcal I$ of answers, and that the \emph{correct answer} for each bandit model is specified by a set-valued function $i^*: \mathcal M \to 2^{\mathcal I}$. We include visual examples in Table~\ref{tab:problems}.

\paragraph{Algorithms.}
A learning strategy is parametrised by a stopping rule $\tau_\delta \in \N$ depending on a parameter $\delta\in[0,1]$, a sampling rule $A_1, A_2, \ldots \in [K]$, and a recommendation rule $\ihat \in \mathcal I$. When a learning strategy meets a bandit model $\vmu$, they interactively generate a history $A_1, X_1, \ldots, A_\tau, X_\tau, \ihat$, where $X_t \sim \mu_{A_t}$. We allow the possibility of non-termination $\tau_\delta = \infty$, in which case there is no recommendation $\ihat$. At stage $t\in\N$, we denote by $N_t=(N_{t,1}, \ldots, N_{t,K})$ the number of samples (or ``pulls'') of the arms, and by $\hat{\vmu}_t$ the vector of empirical means of the samples of each arm.

\paragraph{Confidence and sample complexity.}
For confidence parameter $\delta \in (0,1)$, we say that a strategy is $\delta$-correct (or $\delta$-PAC) for bandit model $\vmu$ if it recommends a correct answer with high probability, i.e.\ $\pr_\vmu\del[\big]{\text{$\tau_\delta < \infty$ and $\ihat \in i^*(\vmu)$}} \ge 1-\delta$. We call a given strategy $\delta$-correct if it is $\delta$-correct for every $\vmu \in \mathcal M$. We measure the statistical efficiency of a strategy on a bandit model $\vmu$ by its \emph{sample complexity} $\ex_\vmu[\tau_\delta]$. We are interested in $\delta$-correct algorithms minimizing sample complexity.

\paragraph{Divergences.}
For any answer $i \in \mathcal I$, we define the \emph{alternative to $i$}, denoted $\neg i$, to be the set of bandit models on which answer $i$ is incorrect, i.e.\
\[
  \neg i
  ~\df~
  \setc*{\vmu \in \mathcal M}{i \notin i^*(\vmu)} \: .
\]

We define the ($\w$-weighted) divergence from $\vmu \in \mathcal M$ to $\vlambda \in \mathcal M$ or $\Lambda \subseteq \mathcal M$ by
\begin{align*}
  D(\w, \vmu, \vlambda) &~=~ \sum_k w_k d(\mu_k, \lambda_k)
  &
  D(\w, \vmu, \Lambda) &~=~ \inf_{\vlambda \in \Lambda} D(\w, \vmu, \vlambda)
  \\
  D(\vmu, \Lambda) &~=~ \sup_{\w \in \triangle_K} D(\w, \vmu, \Lambda)
  &
  D(\vmu) &~=~ \max_{i \in \mathcal I} D(\vmu, \neg i)
\end{align*}
Note that $D(\w, \vmu, \Lambda) = 0$ whenever $\vmu \in \Lambda$. We denote by $i_F(\vmu)$ the $\argmax$ (set of maximisers) of $i \mapsto D(\vmu, \neg i)$, and we call $i_F(\vmu)$ the \emph{oracle answer(s)} at $\vmu$. We write $\w^*(\vmu, \neg i)$ for the maximisers of $\w\mapsto D(\w, \vmu, \neg i)$, and call these the \emph{oracle weights for answer $i$} at $\vmu$. We write  $\w^*(\vmu) = \bigcup_{i \in i_F(\vmu)} \w^*(\vmu, \neg i)$ for the set of \emph{oracle weights} among all oracle answers. We include expressions for the divergence when $i^*$ is generated by half-spaces, minima and spheres in Appendix~\ref{app:divergences}.

The function $i_F(\vmu) = \{i\in\mathcal{I}: D(\vmu,\neg i) = D(\vmu)\}$ is set valued, as is $\w^*$. They are singletons with continuous value over some connected subsets of $\mathcal{M}$, and are multi-valued on common boudaries of two or more such sets. $i_F$ and $\w^*$ can be thought of as having single values, unless $\vmu$ sits on such a boundary, in which case we will prove that they are equal to the union (or convex hull of the union) of their values in the neighbouring regions.

\section{Lower Bound}

We show a lower bound for any algorithm for multiple-answer problems. Our lower bound extends the single-answer result of \cite{GK16}. We are further inspired by \cite{eps.del.PAC.bai}, who analyse the $\epsilon$-BAI problem. They prove lower bounds for algorithms with a sampling rule independent of $\delta$, imposing the further restriction that either $K=2$ or that the algorithm ensures that $N_{t,k}/t$ converges as $t \to \infty$. The new ingredient in this section is a game-theoretic equilibrium argument, which allows us to analyse any $\delta$-correct algorithm in any multiple answer problem. Our main lower bound is the following.

\begin{theorem}\label{thm:lbd}
Any $\delta$-correct algorithm verifies
\begin{align*}
  \liminf_{\delta\to 0}\frac{\ex_\vmu[\tau_\delta]}{\log(1/\delta)} ~\geq~
  T^*(\vmu)
  ~\df~
  D(\vmu)^{-1}
  \quad
  \text{where}
  \quad
  D(\vmu)
  ~=~
  \max_{i\in i^*(\vmu)} \max_{\w \in \triangle_K} \inf_{\vlambda \in \neg i} \sum_{k=1}^K w_k d(\mu_k,\lambda_k)
\end{align*}
for any multiple answer instance $\vmu$ with  sub-Gaussian arm distributions.
\end{theorem}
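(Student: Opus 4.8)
The plan is to follow the change-of-measure philosophy of \citet{GK16,on.the.complexity}: relate the expected sample size to the information needed to separate $\vmu$ from the models on which a recommended answer is incorrect. The basic tool is the transportation inequality — for a $\delta$-correct strategy, any $\vlambda\in\mathcal M$ and any event $\mathcal E$ measurable at the stopping time,
\[
\sum_k \ex_\vmu[N_{\tau_\delta,k}]\,d(\mu_k,\lambda_k) ~\geq~ \kl\del{\pr_\vmu(\mathcal E),\pr_\vlambda(\mathcal E)},
\]
which comes from Wald's identity for the log-likelihood ratio and the data-processing inequality. In the single-answer case one uses $\mathcal E=\set{\ihat=i^\star}$ for the unique correct $i^\star$, so $\pr_\vmu(\mathcal E)\ge 1-\delta$ while $\pr_\vlambda(\mathcal E)\le\delta$ for every $\vlambda\in\neg i^\star$, and optimising over $\vlambda\in\neg i^\star$ and over the normalised expected pull counts $\w=\ex_\vmu[N_{\tau_\delta}]/\ex_\vmu[\tau_\delta]\in\triangle_K$ yields $\ex_\vmu[\tau_\delta]\,D(\vmu,\neg i^\star)\ge\kl(1-\delta,\delta)$. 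The difficulty with multiple answers is that no single recommendation event need carry $\pr_\vmu$-mass close to $1$: the mass $\pr_\vmu(\ihat\in i^*(\vmu))\ge1-\delta$ is spread over the correct answers, and using $\mathcal E=\set{\ihat=i}$ for the most-probable $i$ loses a factor $|i^*(\vmu)|$.

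To recover that factor I would decompose the analysis over the recommendation. For $i\in i^*(\vmu)$ let $p_i=\pr_\vmu(\ihat=i)$ and let $n^{(i)}_k=\ex_\vmu[N_{\tau_\delta,k}\,\mathbb{1}\set{\ihat=i}]$ be the pull counts \emph{restricted to the event that $i$ is recommended}, with conditional allocation $w^{(i)}_k = n^{(i)}_k/\sum_j n^{(i)}_j$. The refined change-of-measure I would invoke is the pointwise bound
\[
\ex_\vmu\big[\mathbb{1}_A \log\tfrac{d\pr_\vmu}{d\pr_\vlambda}\big] ~\geq~ \pr_\vmu(A)\log\tfrac{\pr_\vmu(A)}{\pr_\vlambda(A)},
\]
obtained by writing the left side as $\pr_\vmu(A)\big(\KL(\pr_\vmu(\cdot\mid A)\|\pr_\vlambda)-\log\tfrac{1}{\pr_\vmu(A)}\big)$ and using $\KL(\pr_\vmu(\cdot\mid A)\|\pr_\vlambda)\ge\log\tfrac{1}{\pr_\vlambda(A)}$. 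With $A=\set{\ihat=i}$ and any $\vlambda^{(i)}\in\neg i$ (for which $\pr_{\vlambda^{(i)}}(\ihat=i)\le\delta$) this gives $\ex_\vmu[\mathbb{1}\set{\ihat=i}\,L^{(i)}_{\tau_\delta}]\ge p_i\log(p_i/\delta)$, where $L^{(i)}$ is the log-likelihood ratio between $\vmu$ and $\vlambda^{(i)}$. Summing over $i\in i^*(\vmu)$ and using $\sum_i p_i\ge 1-\delta$ together with the (bounded) term $\sum_i p_i\log p_i$, the right-hand side is $\log(1/\delta)(1-o(1))$. On the left, decomposing $L^{(i)}$ into drift plus martingale shows $\sum_i\ex_\vmu[\mathbb{1}\set{\ihat=i}\,L^{(i)}_{\tau_\delta}]=\sum_i\sum_k n^{(i)}_k d(\mu_k,\lambda^{(i)}_k)+o(\log(1/\delta))$: the drift of $L^{(i)}$ accumulates exactly $\sum_k n^{(i)}_k d(\mu_k,\lambda^{(i)}_k)$, while the martingale part (with $\vlambda^{(i)}$ fixed) has $\ex_\vmu$-mean zero and, by an $L^2$ estimate relying on sub-Gaussianity, size $O(\sqrt{\ex_\vmu[\tau_\delta]})$, which is lower-order once one disposes of the trivial case $\ex_\vmu[\tau_\delta]/\log(1/\delta)\to\infty$. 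Choosing each $\vlambda^{(i)}$ as a (near-)minimiser of $\vlambda\mapsto\sum_k w^{(i)}_k d(\mu_k,\lambda_k)$ over $\neg i$ — the equilibrium/``game'' step, which is where the compactness and continuity properties of the divergence and of its set of minimisers enter — turns $\sum_k n^{(i)}_k d(\mu_k,\lambda^{(i)}_k)$ into (essentially) $\big(\sum_k n^{(i)}_k\big)\,D(\w^{(i)},\vmu,\neg i)$.

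Putting the pieces together gives $\sum_{i\in i^*(\vmu)}\big(\sum_k n^{(i)}_k\big)\,D(\w^{(i)},\vmu,\neg i)\ge\log(1/\delta)(1-o(1))$. Since $D(\w^{(i)},\vmu,\neg i)\le D(\vmu,\neg i)\le D(\vmu)$ and $\sum_{i\in i^*(\vmu)}\sum_k n^{(i)}_k\le\ex_\vmu[\tau_\delta]$, we get $\ex_\vmu[\tau_\delta]\,D(\vmu)\ge\log(1/\delta)(1-o(1))$; dividing by $\log(1/\delta)$ and letting $\delta\to0$ gives the theorem. I expect the crux to be the second paragraph: making the conditional change-of-measure rigorous — showing the martingale remainder is genuinely lower-order, and handling the best-response alternatives $\vlambda^{(i)}$ uniformly (existence, or suitable boundary approximations, when $\w^{(i)}$ is not of full support so the infimum defining $D(\w^{(i)},\vmu,\neg i)$ is only approached at the edge of $\mathcal M$) — together with the $\kl(p_i,\delta)$-versus-$p_i\log(1/\delta)$ bookkeeping that makes the dilution across answers vanish in the limit. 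The sub-Gaussian hypothesis is used only to control those martingale remainders.
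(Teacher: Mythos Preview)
Your proposal is viable and its endgame is correct, but it takes a genuinely different route from the paper's proof. The paper does \emph{not} condition on the recommendation or choose alternatives adapted to the algorithm's allocation. Instead it first proves the minimax identity (Lemma~\ref{lem:nash_finitely_supported}) and extracts, for each $i\in i^*(\vmu)$, a finitely supported distribution $\q$ on $\neg i$ with the saddle property $\sum_k q_k d(\mu_j,\lambda^k_j)\le D(\vmu,\neg i)$ for every arm $j$. It then forms the mixture $\pr_\q=\sum_k q_k\pr_{\vlambda^k}$ and uses a high-probability change-of-measure (the inequality $\pr(A)\le e^\gamma\qr(A)+\pr\{L_n>\gamma\}$) together with Azuma--Hoeffding at a \emph{fixed} time $T$; the saddle property kills the drift of $L_T$ uniformly in the algorithm's pull counts, so the only remaining term is the martingale tail. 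Markov's inequality $\ex_\vmu[\tau_\delta]\ge T(1-\pr_\vmu(\tau_\delta\le T))$ with $T=(1-\eta)T^*(\vmu)\log(1/\delta)$ then closes the argument after summing $\pr_\vmu(\tau_\delta\le T,\ihat=i)$ over $i\in i^*(\vmu)$.

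The contrast is this: the paper's alternatives $\vlambda^1,\dots,\vlambda^K$ depend only on $\vmu$ and work against \emph{any} sampling rule (that is precisely what the equilibrium buys), so no $\delta$-uniformity issue arises and the martingale constants are fixed once and for all. Your scheme picks $\vlambda^{(i)}$ as a best response to the algorithm-dependent, $\delta$-dependent conditional allocation $\w^{(i)}$; as you note, this forces you to control $\vlambda^{(i)}$ (and hence the $L^2$ martingale constant) uniformly along $\delta\to0$, which is the delicate point --- it can be handled by extracting a subsequence along which each $\w^{(i)}$ converges and using continuity of $(\w,\vmu)\mapsto D(\w,\vmu,\neg i)$, but it is extra work that the paper's equilibrium argument sidesteps. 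Conversely, your route avoids Sion's theorem entirely and works directly in expectation at the stopping time rather than via Markov plus a fixed-horizon tail bound, which is arguably more elementary. Your use of ``equilibrium/game step'' for choosing a best-response $\vlambda^{(i)}$ is not what the paper means by its game argument; the paper's game is on the $\inf$ side (a \emph{distribution} over alternatives), not a single point.
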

The proof is in Appendix~\ref{pf:lbd}, where we also discuss how the convenient sub-Gaussian assumption can be relaxed. We would like to point out one salient feature here. To show sample complexity lower bounds at $\vmu$, one needs to find problems that are hard to distinguish from it statistically, yet require a different answer. We obtain these problems by means of a minimax result.
\begin{lemma}\label{lem:nash_finitely_supported}\label{lem:equilibrium}
For any answer $i \in \mathcal I$, the divergence from $\vmu$ to $\neg i$ equals
\[
  D(\vmu, \neg i)
  ~=~
  \inf_{\pr} \max_{k \in [K]}
  \ex_{\vlambda \sim \pr} \sbr*{
    d(\mu_k, \lambda_k)
  }
  .
\]
where the infimum ranges over probability distributions on $\neg i$ supported on (at most) $K$ points.
\end{lemma}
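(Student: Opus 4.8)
The statement is a minimax (Nash equilibrium) equality for the zero-sum game in which one player picks $\w\in\triangle_K$, the other a distribution on $\neg i$, with payoff $\ex_{\vlambda\sim\pr}\sum_k w_k d(\mu_k,\lambda_k)$; the plan is to reduce it to Sion's theorem after passing to a finite-dimensional picture, and then to pin down the support size by a basic-feasible-solution argument. First I would put both sides in a common shape. A linear function on the simplex attains its maximum at a vertex, so $\max_{k\in[K]}\ex_{\vlambda\sim\pr}\sbr*{d(\mu_k,\lambda_k)}=\max_{\w\in\triangle_K}\sum_k w_k\,\ex_{\vlambda\sim\pr}\sbr*{d(\mu_k,\lambda_k)}$; and since $\pr\mapsto\ex_{\vlambda\sim\pr}\sum_k w_k d(\mu_k,\lambda_k)$ is affine, its infimum over distributions on $\neg i$ is approached at point masses, hence equals $\inf_{\vlambda\in\neg i}\sum_k w_k d(\mu_k,\lambda_k)$. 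Introduce the vector $v(\vlambda)\df(d(\mu_k,\lambda_k))_{k\in[K]}\in\R_{\ge0}^K$ and the convex set $V\df\conv\setc*{v(\vlambda)}{\vlambda\in\neg i}\subseteq\R^K$. Since a linear functional has the same infimum over a set as over its convex hull, the left-hand side of the lemma equals $\sup_{\w\in\triangle_K}\inf_{\u\in V}\langle\w,\u\rangle$, and the right-hand side --- restricted first to finitely supported $\pr$, which is all we will need --- equals $\inf_{\u\in V}\max_k u_k=\inf_{\u\in V}\max_{\w\in\triangle_K}\langle\w,\u\rangle$. (If $\neg i=\emptyset$ both sides are $+\infty$; otherwise any fixed $\vlambda_0\in\neg i$ makes both sides finite.) It remains to (i) exchange $\sup_\w$ and $\inf_\u$, and (ii) show the infimum over $V$ is unchanged when $\u$ ranges only over convex combinations of at most $K$ of the $v(\vlambda)$.

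For step (i), the payoff $(\w,\u)\mapsto\langle\w,\u\rangle$ is bilinear, hence continuous and both quasi-concave in $\w$ and quasi-convex in $\u$; the simplex $\triangle_K$ is compact and convex, and $V$ is convex. Sion's minimax theorem needs only \emph{one} of the two strategy sets to be compact, and --- applied to $-\langle\cdot,\cdot\rangle$ with $\triangle_K$ in the minimised role --- it gives $\sup_{\w\in\triangle_K}\inf_{\u\in V}\langle\w,\u\rangle=\inf_{\u\in V}\max_k u_k$. Together with the rewriting above, this is $D(\vmu,\neg i)=\inf_{\u\in V}\max_k u_k=\inf_\pr\max_k\ex_{\vlambda\sim\pr}\sbr*{d(\mu_k,\lambda_k)}$, the infimum over finitely supported distributions on $\neg i$. (The bound $D(\vmu,\neg i)\le\inf_{\u\in V}\max_k u_k$ is elementary weak duality, from $\langle\w,\u\rangle\le\max_k u_k$; Sion supplies the reverse.)

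For step (ii), write any $\u\in V$ as $\sum_{j=1}^r q_j v(\vlambda^{(j)})$; Carathéodory in $\R^K$ lets us take $r\le K+1$. If $r>K$, re-optimise the weights: minimising $t$ over $\setc*{(\q,t)}{\q\in\triangle_r,\ \sum_j q_j\, v(\vlambda^{(j)})_k\le t\text{ for all }k}$ is a feasible, bounded-below linear program, so its optimum is attained at a vertex, where $r+1$ linearly independent constraints are active. The equality $\sum_j q_j=1$ is always one of them and only $K$ constraints have the form ``$\le t$'', so at least one constraint $q_j\ge0$ is active, i.e.\ some $q_j$ vanishes, while the optimal $t$ does not exceed $\max_k u_k$. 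Iterating, we reach weights supported on at most $K$ of the $\vlambda^{(j)}$ with objective $\le\max_k u_k$ --- a distribution on at most $K$ points of $\neg i$ that is no worse. Combined with the trivial reverse inequality (restricting the support cannot lower the infimum), this proves the lemma.

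I expect the minimax exchange in step (i) to be the main obstacle: $V$ is genuinely non-compact --- the alternative $\neg i$ is typically unbounded and need not be connected --- and it sits on the \emph{infimum} side, which is exactly why one must use a minimax theorem tolerating non-compactness of one player (Sion's), applied to $-\langle\cdot,\cdot\rangle$ so that the compact simplex is the minimising player; a naive repair that intersects $V$ with a large compact box and invokes von Neumann's theorem fails, since shrinking $V$ can strictly raise $\inf_{\u}\langle\w,\u\rangle$ for the relevant $\w$. The only other non-routine point is obtaining the sharp support bound $K$ rather than the $K+1$ of plain Carathéodory, which the basic-feasible-solution refinement in step (ii) delivers.
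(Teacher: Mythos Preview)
Your proposal is correct and follows essentially the same route as the paper: pass to the convex hull of divergence vectors $S=\setc*{(d(\mu_k,\lambda_k))_k}{\vlambda\in\neg i}$, apply Sion's minimax theorem to the bilinear pairing on $\triangle_K\times\conv(S)$ with the simplex as the compact side, and then bound the support size. The only difference is in presentation: the paper's proof is a two-line sketch that cites \cite[Theorem~2.4.2]{BG54} for the support reduction to $K$ points, whereas you spell out that result via the explicit LP/basic-feasible-solution argument.
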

The proof of Theorem~\ref{thm:lbd} then challenges any algorithm for $\vmu$ by obtaining a witness $\pr$ for $D(\vmu) = \max_i D(\vmu, \neg i)$ from Lemma~\ref{lem:nash_finitely_supported}, sampling a model $\vlambda \sim \pr$, and showing that if the algorithm stops early, it must make a mistake w.h.p.\ on at least one model from the support. The equilibrium property of $\pr$ allows us to control a certain likelihood ratio martingale regardless of the sampling strategy employed by the algorithm.

We discuss the novel aspect of Theorem~\ref{thm:lbd} and its lessons for the design of optimal algorithms. First of all, for single-answer instances $\card{i^*(\vmu)}{=}1$ we recover the known asymptotic lower bound \cite[Remark~2]{GK16}. For multiple-answer instances the bound says the following. The optimal sample complexity hinges on the \emph{oracle answers} $i_F(\vmu)$. That is, for $i_f \in i_F(\vmu)$, the complexity of problem $\vmu$ is governed by the difficulty of discriminating $\vmu$ from the set of models on which answer $i_f$ is incorrect.

Is the bound tight? We argue yes. Consider the following oracle strategy, which is specifically designed to be very good at $\vmu$. First, it computes a pair $(i, \w)$ witnessing the two outer maxima in $D(\vmu)$. The algorithm samples according to $\w$. It stops when it can statistically discriminate $\hat\vmu_t$ from $\neg i$, and outputs $\ihat = i$. This algorithm will indeed have expected sample complexity equal to $D(\vmu)^{-1}$ at $\vmu$, and it will be correct.

The above oracle viewpoint presents an idea for designing algorithms, following \cite{GK16}. Perform a lower-order amount of forced exploration of all arms to ensure $\hat\vmu_t \to \vmu$. Then at each time point compute the empirical mean vector $\hat{\vmu}_t$ and oracle weights $\w_t \in \w^*(\hat\vmu_t)$. Then sample according to $\w_t$. This approach is successful for single-answer bandits with unique and continuous oracle weights. We argue in Section~\ref{sec:tas} below that it extends to points of discontinuity by exploiting upper hemicontinuity and convexity of $\w^*$.

For multiple-answer bandits, we argue that the set of maximisers $\w^*(\vmu)$ is no longer convex when $i_F(\vmu)$ is not a singleton. It can then happen that $\hat\vmu_t \to \vmu$, while at the same time $\w^*(\hat\vmu_t)$ keeps oscillating. If the algorithm tracks $\w^*(\hat\vmu_t)$, its sampling proportions will end up in the convex hull of $\w^*(\vmu)$. However, as $\w^*(\vmu)$ is not convex itself, these proportions will not be optimal. We present empirical evidence for that effect in section~\ref{sec:failure_tas}. The lesson here is that the oracle needs to pick an answer and ``stick with it''. This will be the basis of our algorithm design in Section~\ref{sec:alg}.


\section{Properties of the Optimal Allocation Sets}

The Track-and-Stop sampling strategy aims at ensuring that the sampling proportions converge to oracle weights. In the case of a singleton-valued oracle weights set $\w^*(\vmu)$ for single answer problems, that convergence was proven in \citep{GK16}. We study properties of that set with the double aim of extending Track-and-Stop to points $\vmu$ where $\w^*(\vmu)$ is not a singleton and of highlighting what properties hold only for the single-answer case, but not in general.

\subsection{Continuity}

We first prove continuity properties of $D$ and $\w^*$. We show how the convergence of $\hat{\vmu}_t$ to $\vmu$ translates into properties of the divergences from $\hat{\vmu}_t$ to the alternative sets.

For a set $B$, let $\mathbb{S}(B)=2^B\setminus \{\emptyset\}$ be the set of all \textit{non-empty} subsets of $B$.

\begin{definition}[Upper hemicontinuity]
A set-valued function $\Gamma:A \to \mathbb{S}(B)$ is upper hemicontinuous at $a\in A$ if for any open neighbourhood $V$ of $\Gamma(a)$ there exists a neighbourhood $U$ of $a$ such that for all $x\in U$, $\Gamma(x)$ is a subset of $V$. 
\end{definition}

\begin{theorem}\label{th:continuity}
For all $i\in\mathcal{I}$,
\begin{enumerate}
\item the function $(\w,\vmu)\mapsto D(\w, \vmu, \neg i)$ is continuous on $\triangle_K\times\mathcal{M}$,
\item $\vmu\mapsto D(\vmu, \neg i)$ and $\vmu\mapsto D(\vmu)$ are continuous on $\mathcal{M}$,
\item $\vmu\mapsto \w^*(\vmu, \neg i)$, $\vmu\mapsto \w^*(\vmu)$ and $\vmu\mapsto i_F(\vmu)$ are upper hemicontinuous on $\mathcal{M}$ with non-empty and compact values,
\end{enumerate}
\end{theorem}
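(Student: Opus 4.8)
\emph{Overview.} The plan is to prove part~1 (joint continuity of $(\w,\vmu)\mapsto g_i(\w,\vmu)$, where I abbreviate $g_i(\w,\vmu) \df D(\w,\vmu,\neg i)$) and then to obtain parts~2 and~3 from it essentially formally: they are an instance of Berge's maximum theorem applied to the problem of maximising $g_i(\w,\vmu)$, respectively $\max_{i\in\mathcal I} g_i(\w,\vmu)$, over $\w\in\triangle_K$, together with a one-line argument for the finite-valued correspondence $i_F$. The substantive step is part~1, and within it the only delicacy is joint continuity at points $(\w,\vmu)$ where $\w$ lies on the boundary of the simplex.

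\emph{Part 1.} Upper semicontinuity of $g_i$ on $\triangle_K\times\mathcal M$ is immediate: for each fixed $\vlambda\in\neg i$ the map $(\w,\vmu)\mapsto\sum_k w_k d(\mu_k,\lambda_k)$ is jointly continuous since $d$ is continuous on $\mathcal O\times\mathcal O$, and $g_i$ is their pointwise infimum. (If $\neg i=\emptyset$ then $g_i\equiv+\infty$ and there is nothing to prove, so assume $\neg i\neq\emptyset$.) For lower semicontinuity the key observation is that $g_i(\w',\vmu')$ depends only on the coordinates in the support of $\w'$: writing $P=\{k : w_k>0\}$ and letting $\pi_P$ denote projection onto those coordinates, one has $g_i(\w,\vmu)=\inf_{\nu\in\pi_P(\neg i)}\sum_{k\in P}w_k d(\mu_k,\nu_k)$, and this infimum is unchanged if $\pi_P(\neg i)$ is replaced by its closure in $\mathcal O^P$ (continuity of the objective and density). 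Now take $(\w_n,\vmu_n)\to(\w,\vmu)$ and, passing to a subsequence, assume $g_i(\w_n,\vmu_n)\to\liminf_n g_i(\w_n,\vmu_n)$; pick near-minimisers $\vlambda_n\in\neg i$ with $\sum_k w_{n,k}d(\mu_{n,k},\lambda_{n,k})\le g_i(\w_n,\vmu_n)+1/n$. Comparing against any fixed $\vlambda_0\in\neg i$ shows the left-hand side, hence $\sum_{k\in P}w_{n,k}d(\mu_{n,k},\lambda_{n,k})$, is bounded; since $w_{n,k}\to w_k>0$ for $k\in P$, each $d(\mu_{n,k},\lambda_{n,k})$ with $k\in P$ is bounded; and since $d(\mu,\cdot)$ has compact sublevel sets in $\mathcal O$, locally uniformly in $\mu$, the projected points $(\lambda_{n,k})_{k\in P}\in\pi_P(\neg i)$ lie in a fixed compact subset of $\mathcal O^P$ and therefore converge, along a further subsequence, to a point of $\overline{\pi_P(\neg i)}$. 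Plugging that limit into the continuous objective $\sum_{k\in P}w_k d(\mu_k,\cdot)$ and passing to the limit gives $g_i(\w,\vmu)\le\liminf_n g_i(\w_n,\vmu_n)$, i.e.\ lower semicontinuity. The merit of passing to $\pi_P(\neg i)$ is precisely that it discards the zero-weight coordinates of $\vlambda_n$, which are otherwise uncontrolled; this is the point I expect to be the crux, and the analytic input it relies on is the compactness of sublevel sets of the divergence.

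\emph{Parts 2 and 3.} Granting the joint continuity of $g_i$, Berge's maximum theorem applied to $\w\mapsto g_i(\w,\vmu)$ over the constant — hence continuous — compact-valued constraint set $\triangle_K$ shows that $\vmu\mapsto\max_{\w\in\triangle_K}g_i(\w,\vmu)=D(\vmu,\neg i)$ is continuous and that $\vmu\mapsto\w^*(\vmu,\neg i)$ is upper hemicontinuous with non-empty compact values. Then $\vmu\mapsto D(\vmu)=\max_{i\in\mathcal I}D(\vmu,\neg i)$ is continuous as a finite maximum of continuous functions. For $i_F(\vmu)=\argmax_{i\in\mathcal I}D(\vmu,\neg i)$ the values are non-empty and, being subsets of the finite set $\mathcal I$, compact; upper hemicontinuity amounts to: if $\vmu_n\to\vmu$ and a fixed $i$ lies in $i_F(\vmu_n)$ for infinitely many $n$, then $i\in i_F(\vmu)$ — which holds since $D(\vmu_n,\neg i)=D(\vmu_n)$ along that subsequence while both sides converge, by the continuity just established, to $D(\vmu,\neg i)$ and $D(\vmu)$.

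\emph{The set $\w^*(\vmu)$.} Rather than reason about the union $\bigcup_{i\in i_F(\vmu)}\w^*(\vmu,\neg i)$ directly, I would use the identity $\w^*(\vmu)=\argmax_{\w\in\triangle_K}\max_{i\in\mathcal I} g_i(\w,\vmu)$. This holds because $\max_{\w\in\triangle_K}\max_{i\in\mathcal I} g_i(\w,\vmu)=\max_{i\in\mathcal I}D(\vmu,\neg i)=D(\vmu)$: if $\w$ attains the outer maximum and $i$ attains the inner one at that $\w$, then $D(\vmu)=g_i(\w,\vmu)\le D(\vmu,\neg i)\le D(\vmu)$, forcing $i\in i_F(\vmu)$ and $\w\in\w^*(\vmu,\neg i)$, hence $\w\in\w^*(\vmu)$; the reverse inclusion is immediate from the definitions. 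Since $(\w,\vmu)\mapsto\max_{i\in\mathcal I} g_i(\w,\vmu)$ is continuous (a finite maximum of the continuous functions from part~1), a second application of Berge's maximum theorem with the same constraint set $\triangle_K$ shows $\vmu\mapsto\w^*(\vmu)$ is upper hemicontinuous with non-empty compact values, which completes the proof.
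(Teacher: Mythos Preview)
Your proposal is correct and follows the same overall architecture as the paper: establish part~1 first, then deduce parts~2 and~3 by Berge's maximum theorem over the compact constant correspondence $\triangle_K$; your treatment of $\w^*(\vmu)$ via the identity $\w^*(\vmu)=\argmax_{\w}\max_i g_i(\w,\vmu)$ and of $i_F$ by a direct sequential argument matches the paper's Berge applications essentially verbatim.

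The one genuine difference is in how part~1 is obtained. The paper first invokes a non-compact variant of Berge's theorem due to Feinberg--Kasyanov--Voorneveld (verifying $\mathbb K$-inf-compactness of $D(\w,\vmu,\vlambda)$ on $\mathcal Q_+^\varepsilon\times\mathcal O^K$) to get joint continuity on the \emph{interior} of the simplex, and then extends to the boundary by the projection-onto-support trick you also use. You instead give a single, self-contained sequential-compactness argument that handles interior and boundary simultaneously: pick near-minimisers, control them on the support coordinates via compact sublevel sets of $d(\mu,\cdot)$, and pass to a limit in $\overline{\pi_P(\neg i)}\subseteq\mathcal O^P$. Both routes hinge on the same analytic input (coercivity of $\lambda\mapsto d(\mu,\lambda)$ at the boundary of $\mathcal O$) and the same projection idea; yours is more elementary and avoids the FKV machinery, while the paper's route additionally yields, as a byproduct, non-emptiness, compactness and upper hemicontinuity of the minimising-$\vlambda$ correspondence on $\mathring\triangle_K\times\mathcal M$.
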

Proof in Appendix~\ref{sec:continuity_proof}. It uses Berge's maximum theorem and a modification thereof due to \citep{FKV14}. Related continuity results using this type of arguments, but restricted to single-valued functions, appeared for the regret minimization problem in \citep{combes2017minimal}.

\subsection{Convexity}

\begin{proposition}
For each $i\in\mathcal{I}$, for all $\vmu\in\mathcal{M}$ the set $\w^*(\vmu, \neg i)$ is convex.

If $i_F(\vmu)$ is a singleton, then $\w^*(\vmu) = \cup_{i\in i_F(\vmu)} \w^*(\vmu, \neg i)$ is convex.
\end{proposition}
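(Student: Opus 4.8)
The plan is to exploit the variational form of the divergence and the fact that $\triangle_K$ is convex. Fix $i\in\mathcal{I}$ and $\vmu\in\mathcal{M}$. By definition $D(\w,\vmu,\neg i) = \inf_{\vlambda\in\neg i}\sum_k w_k d(\mu_k,\lambda_k)$, and for each fixed $\vlambda$ the map $\w\mapsto \sum_k w_k d(\mu_k,\lambda_k)$ is linear (hence concave) in $\w$ on $\triangle_K$. A pointwise infimum of concave functions is concave, so $\w\mapsto D(\w,\vmu,\neg i)$ is concave on the convex set $\triangle_K$. Since each $d(\mu_k,\lambda_k)\ge 0$, this infimum is valued in $[0,\infty)$ and is never $-\infty$; in the degenerate boundary case $\neg i=\emptyset$ it is the constant $+\infty$, and then $\w^*(\vmu,\neg i)=\triangle_K$ is trivially convex, so we may assume $\neg i\neq\emptyset$ below.

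Next I would invoke the elementary fact that the set of maximisers of a concave function over a convex set is convex. Concretely, writing $M \df D(\vmu,\neg i) = \max_{\w\in\triangle_K}D(\w,\vmu,\neg i)$, suppose $\w_0,\w_1\in\w^*(\vmu,\neg i)$, so that $D(\w_0,\vmu,\neg i)=D(\w_1,\vmu,\neg i)=M$. Then for any $t\in[0,1]$, concavity gives
\[
D(t\w_1+(1-t)\w_0,\vmu,\neg i) ~\ge~ tD(\w_1,\vmu,\neg i)+(1-t)D(\w_0,\vmu,\neg i) ~=~ M,
\]
and since $M$ is the maximum value this must be an equality, whence $t\w_1+(1-t)\w_0\in\w^*(\vmu,\neg i)$. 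This proves the first claim; non-emptiness and compactness of $\w^*(\vmu,\neg i)$, which make the statement non-vacuous, are already supplied by Theorem~\ref{th:continuity}.

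For the second claim, suppose $i_F(\vmu)=\{i_0\}$ is a singleton. Then by the definition $\w^*(\vmu)=\bigcup_{i\in i_F(\vmu)}\w^*(\vmu,\neg i)=\w^*(\vmu,\neg i_0)$, which is convex by the first part.

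I do not expect a genuine obstacle here: the entire content is the observation that $\w\mapsto D(\w,\vmu,\neg i)$ is concave as an infimum of linear functions, after which convexity of the $\argmax$ is a one-line textbook argument, and the two bookkeeping points (finiteness of the infimum, and the degenerate case $\neg i=\emptyset$) are dispatched as above. It is worth noting that this is exactly the place where the single-answer (singleton $i_F$) hypothesis is essential: without it, $\w^*(\vmu)$ is a union of the convex sets $\w^*(\vmu,\neg i)$ over $i\in i_F(\vmu)$, and such a union need not be convex — indeed the subsequent sections of the paper exhibit multiple-answer instances where it is not.
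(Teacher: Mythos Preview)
Your proof is correct and takes essentially the same approach as the paper: the paper's entire argument is the one-line observation that $\w\mapsto D(\w,\vmu,\neg i)$ is concave as an infimum of linear functions, from which convexity of the maximiser set follows. Your write-up simply spells this out in full detail (including the harmless edge case $\neg i=\emptyset$), and your handling of the singleton-$i_F$ case is identical.
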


This is a consequence of the concavity of $\w\mapsto D(\w,\vmu,\neg i)$ (which is an infimum of linear functions). In single-answer problems, we obtain that the oracle weights set $\w^*(\vmu)$ is convex everywhere. This is however not the case in general for multiple-answer problems, as illustrated by the next example, which is called \emph{Any Low Arm} in Table~\ref{tab:problems}.

Consider two arms with Gaussian distributions $(\mathcal{N}(\mu_k, 1))_{k=1,2}$ for $\vmu \in \R^2$. The query is: ``is there an arm $k$ with $\mu_k <0$ ? if yes, return one''. The possible answers are $\{\none, 1, 2\}$. For $\vmu$ with two negative coordinates both answers 1 and 2 are correct. In that case, the divergence from $\vmu$ to the alternatives is $D(\vmu) = \sup_{\w\in\triangle_2}\max_{k=1,2} w_k d(\mu_k, 0) = \max_{k=1,2}d(\mu_k, 0)$.

For $\mu_1 < \mu_2 < 0$, $\w^*(\vmu) = \{(1,0)\}$.
For $\mu_2 < \mu_1 < 0$, $\w^*(\vmu) = \{(0,1)\}$.

For $\mu_1=\mu_2<0$, $\w^*(\vmu) = \{(1,0), (0,1)\}$, which is not convex.

That example also illustrates the upper hemicontinuity of $\w^*(\vmu)$: since $\vmu$ of the form $(\mu,\mu)$ is the limit of a sequence $(\vmu_t)_{t\in\N}$ with $\mu_{t,1} {<} \mu_{t,2}$, we obtain that $\{(1,0)\} \subseteq \w^*(\vmu)$. Similarly, using a sequence with $\mu_{t,1} {>} \mu_{t,2}$, $\{(0,1)\} \subseteq \w^*(\vmu)$. Playing intermediate weights $\w = (\alpha,1-\alpha)$ results in strictly sub-optimal $D(\vmu, \w) = \max\set*{\alpha,1-\alpha} d(\mu, 0) < d(\mu,0) = D(\vmu)$.

\subsection{Consequences for Track-and-Stop}\label{sec:tas}

The original analysis of Track-and-Stop excludes the mean vectors $\vmu\in\mathcal{M}$ for which $\w^*(\vmu)$ is not a singleton. We show that the upper hemicontinuity and convexity properties of $\w^*(\vmu)$ allow us to extend that analysis to all $\vmu$ with a single oracle answer (in particular all single-answer bandit problems), at least for one of the two Track-and-Stop variants. Indeed, that algorithm was introduced with two possible subroutines, dubbed C-tracking and D-tracking \citep{GK16}. Both variants compute oracle weights $\w_t$ at the point $\hat{\vmu}_t$, but the arm pulled differs.

\textit{C-tracking}: compute the projection $\w^{\varepsilon_t}_t$ of $\w_t$ on $\triangle^{\varepsilon_t}_K = \{\w\in\triangle_K: \forall k\in[K], w_k \geq \varepsilon_t\}$, where $\varepsilon_t>0$. Pull the arm with index $k_t = \arg\min_{k\in[K]} N_{t,k} - \sum_{s=1}^t w_{s,k}^{\varepsilon_s}$.

\textit{D-tracking}: if there is an arm $j$ with $N_{t,j} \leq \sqrt{t}-K/2$, then pull $k_t = j$. Otherwise, pull the arm $k_t = \arg\min_{k\in[K]} N_{t,k} - t w_{t,k}$ .

The proof of the optimal sample complexity of Track-and-Stop for C-tracking remains essentially unchanged but we replace Proposition 9 of \citep{GK16} by the following lemma, proved in Appendix~\ref{sec:proof_lem_avegage_w_converges}.

\begin{lemma}\label{lem:average_w_converges_to_optimal_if_convex}
Let a sequence $(\hat{\vmu}_t)_{t\in\N}$ verify $\lim_{t\to+\infty}\hat{\vmu}_t = \vmu$ . For all $t\geq 0$, let $\w_t\in \w^*(\hat{\vmu}_t)$ be arbitrary oracle weights for $\hat{\vmu}_t$ . If $\w^*(\vmu)$ is convex, then 
\begin{align*}
\lim_{t\to +\infty}\inf_{\w \in \w^*(\vmu)} \left\Vert\frac{1}{t}\sum_{s=1}^t \w_s - \w\right\Vert_\infty = 0 \: .
\end{align*}
\end{lemma}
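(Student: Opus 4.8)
\textbf{Proof proposal for Lemma~\ref{lem:average_w_converges_to_optimal_if_convex}.}

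The plan is to argue by contradiction, exploiting upper hemicontinuity of $\w^*$ (Theorem~\ref{th:continuity}) together with the convexity hypothesis and the convexity of the averaging operation. First I would record the key geometric consequence of upper hemicontinuity: for every $\varepsilon>0$ there is a neighbourhood $U$ of $\vmu$ such that for all $\vnu\in U$ we have $\w^*(\vnu)\subseteq (\w^*(\vmu))^\varepsilon$, the $\varepsilon$-fattening of $\w^*(\vmu)$ in $\|\cdot\|_\infty$; this uses that $(\w^*(\vmu))^\varepsilon$ is an open neighbourhood of the (compact, non-empty) set $\w^*(\vmu)$. Since $\hat\vmu_t\to\vmu$, there is $t_0$ such that $\w_t\in(\w^*(\vmu))^\varepsilon$ for all $t\geq t_0$.

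Next I would split the Cesàro average $\frac1t\sum_{s=1}^t \w_s$ into the initial block $s\le t_0$ and the tail block $t_0<s\le t$. The tail average is a convex combination of points in the convex set $(\w^*(\vmu))^\varepsilon$ — here is where convexity of $\w^*(\vmu)$ enters, since the $\varepsilon$-fattening of a convex set is convex — hence the tail average itself lies in $(\w^*(\vmu))^\varepsilon$, so its $\|\cdot\|_\infty$-distance to $\w^*(\vmu)$ is at most $\varepsilon$. The contribution of the initial block is $\frac1t\sum_{s=1}^{t_0}\w_s$ minus $\frac{t_0}{t}$ times the tail average (to re-weight), a vector of $\ell_\infty$-norm at most $O(t_0/t)$ since all $\w_s\in\triangle_K$ are bounded; this $\to 0$ as $t\to\infty$. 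Combining, $\limsup_t \inf_{\w\in\w^*(\vmu)}\|\frac1t\sum_{s=1}^t\w_s-\w\|_\infty \le \varepsilon$, and letting $\varepsilon\downarrow 0$ finishes the proof. (One may equivalently phrase this without contradiction; I find the direct $\varepsilon$-argument cleanest.)

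The only subtlety — and the step I would be most careful about — is making sure the tail average genuinely stays inside a \emph{convex} set: this is exactly why the lemma's hypothesis is the convexity of $\w^*(\vmu)$ and why it fails for multiple-answer problems, where $\w^*(\vmu)$ may be a non-convex union and the Cesàro average can drift into its convex hull (the phenomenon flagged in Section~\ref{sec:failure_tas}). I should also double-check the claim that upper hemicontinuity with compact values yields containment in \emph{every} $\varepsilon$-fattening, rather than merely in some arbitrary open neighbourhood; since $\w^*(\vmu)$ is compact, the metric fattenings $(\w^*(\vmu))^\varepsilon$ form a neighbourhood basis of $\w^*(\vmu)$, so this is immediate from the definition. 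Everything else — boundedness of $\triangle_K$, convexity of $\varepsilon$-neighbourhoods of convex sets, the triangle inequality for the split average — is routine.
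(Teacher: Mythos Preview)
Your proposal is correct and follows essentially the same route as the paper: upper hemicontinuity of $\w^*$ (Theorem~\ref{th:continuity}) gives $\w_t\in(\w^*(\vmu))^\varepsilon$ for $t\ge t_0$, the Ces\`aro average is split into an initial block of vanishing weight and a tail block, and convexity of $\w^*(\vmu)$ ensures the tail average stays $\varepsilon$-close. The only cosmetic difference is that the paper isolates the ``average of points $\varepsilon$-close to a convex set $A$ is itself $\varepsilon$-close to $A$'' step as a separate helper lemma (proved by averaging the nearest points $\w_s^*\in A$), whereas you argue it equivalently via convexity of the Minkowski sum $A+B_\infty(0,\varepsilon)$; both formulations are correct and interchangeable.
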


The average of oracle weights for $\hat{\vmu}_t$ converges to the set of oracle weights for $\vmu$. C-tracking then ensures that the proportion of pulls $N_t/t$ is close to that average by Lemma 7 of \citep{GK16}, hence $N_t/t$ gets close to oracle weights.

\begin{theorem}\label{th:tas_sample_complexity}
For all $\vmu\in\mathcal{M}$ such that $i_F(\vmu)$ is a singleton (in particular all single-answer problems), Track-and-Stop with C-tracking is $\delta$-correct with asymptotically optimal sample complexity.
\end{theorem}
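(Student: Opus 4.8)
The plan is to follow the skeleton of the Track-and-Stop analysis of \citet{GK16}, establishing the two usual pillars: (i) the stopping rule (a Chernoff / generalized likelihood ratio test against the alternative $\neg i$ for the current empirical best answer) guarantees $\delta$-correctness and, on the event that the sampling proportions are near-optimal, stops at time $\approx T^*(\vmu)\log(1/\delta)$; and (ii) the C-tracking sampling rule forces $N_t/t$ to converge to the oracle set $\w^*(\vmu)$, so that the near-optimal event has probability one. Since $i_F(\vmu)$ is a singleton, say $i_F(\vmu)=\{i_F\}$, there is a positive gap $D(\vmu,\neg i_F) - \max_{j\neq i_F} D(\vmu,\neg j) > 0$; by the continuity of $\vmu\mapsto D(\vmu,\neg i)$ from Theorem~\ref{th:continuity}, there is a neighbourhood of $\vmu$ on which the empirical oracle answer agrees with $i_F$. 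This localisation is what replaces the genuinely new difficulty introduced by multiple answers, and it is why the theorem is stated only for singleton $i_F(\vmu)$.

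Concretely, I would argue as follows. \textbf{Correctness:} the stopping threshold $\beta(t,\delta)$ is chosen exactly as in \citet{GK16} so that $\pr_\vmu(\tau_\delta<\infty,\ \ihat\notin i^*(\vmu))\le\delta$ by a union bound over answers together with the standard martingale / self-normalised deviation control for exponential families; this part is unchanged because it does not use convexity or uniqueness of the weights. \textbf{Sample complexity:} fix $\xi>0$. The forced exploration built into C-tracking (the projection onto $\triangle_K^{\varepsilon_t}$ with $\varepsilon_t\downarrow 0$) gives $\hat\vmu_t\to\vmu$ almost surely. By upper hemicontinuity of $\w^*$ (Theorem~\ref{th:continuity}) the oracle weights $\w_t\in\w^*(\hat\vmu_t)$ eventually lie in any neighbourhood of $\w^*(\vmu)$; since $\w^*(\vmu)$ is convex here (the Proposition, using that $i_F(\vmu)$ is a singleton), Lemma~\ref{lem:average_w_converges_to_optimal_if_convex} yields that the running average $\frac1t\sum_{s\le t}\w_s^{\varepsilon_s}$ converges to $\w^*(\vmu)$; and Lemma~7 of \citet{GK16} shows $N_t/t$ stays within $O\!\left(\tfrac{\log t}{t}\right)$ of that average. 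Combining, $N_t/t\to\w^*(\vmu)$. Then continuity of $(\w,\vmu)\mapsto D(\w,\vmu,\neg i)$ gives that for $t$ large, $D(N_t/t,\hat\vmu_t,\neg i_F)\ge D(\vmu)-\xi$, so the GLR statistic against $\neg i_F$ crosses $\beta(t,\delta)$ once $t\ge \frac{\beta(t,\delta)}{D(\vmu)-\xi}$, i.e.\ $\tau_\delta \lesssim \frac{\log(1/\delta)}{D(\vmu)-\xi}$. Taking expectations (using a standard argument that $\tau_\delta$ has light tails, via the almost-sure convergence above made quantitative on a high-probability event, exactly as in \citet{GK16}) and then $\delta\to 0$, $\xi\to 0$ gives $\limsup_\delta \ex_\vmu[\tau_\delta]/\log(1/\delta)\le T^*(\vmu)$, matching the lower bound of Theorem~\ref{thm:lbd}.

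The main obstacle — and the place where something genuinely needs the hypothesis — is showing that the empirical problem $\hat\vmu_t$ eventually ``commits'' to the correct oracle answer $i_F$, so that tracking $\w^*(\hat\vmu_t)$ really means tracking weights relevant to $\neg i_F$ and not to some competing answer. With $i_F(\vmu)$ a singleton this is immediate from continuity and the strict gap, and it is exactly here that convexity of $\w^*(\vmu)=\w^*(\vmu,\neg i_F)$ (a single convex piece) can be invoked to apply Lemma~\ref{lem:average_w_converges_to_optimal_if_convex}. When $i_F(\vmu)$ is not a singleton both of these fail: $\w^*(\hat\vmu_t)$ can oscillate between oracle sets of different answers whose union is non-convex, the Cesàro average drifts into the (sub-optimal) convex hull, and the proof breaks — which is precisely the phenomenon motivating the sticky modification of Section~\ref{sec:alg}. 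I would also take care to check that the convergence $\hat\vmu_t\to\vmu$ needed to invoke the continuity theorems holds on an event of probability one (not merely in probability), which the C-tracking forced-exploration guarantees, and to note that the argument uses only C-tracking because, as remarked in the paper, D-tracking need not converge to the convex set when $\w^*$ is set-valued.
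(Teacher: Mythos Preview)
Your proposal is correct and uses essentially the same ingredients as the paper: upper hemicontinuity of $\w^*$, convexity of $\w^*(\vmu)$ under the singleton hypothesis, Lemma~\ref{lem:average_w_converges_to_optimal_if_convex}, and the C-tracking lemma of \citet{GK16}. The paper organises the argument slightly differently---it first proves the general convex-hull bound of Theorem~\ref{th:tas_sample_complexity_general} via explicit concentration events $\mathcal E_T,\mathcal E_T'$ (Lemmas~\ref{lem:tau_delta_decomposition}--\ref{lem:N_t_converges_to_w*}) and then specialises by observing $\conv(\w^*(\vmu))=\w^*(\vmu)$---but the substance is the same; one small slip: the C-tracking guarantee is $\|N_t/t-\tfrac1t\sum_s\w_s\|_\infty=O(1/\sqrt t)$, not $O(\log t/t)$.
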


Proof in Appendix \ref{sec:proof_TaS_sample_complexity}. We encourage the reader to first proceed to section~\ref{sec:alg}, since the proof considers the result as a special case of the multiple-answers setting.

\begin{remark}\label{rem:D_tracking}
If $\w^*(\vmu)$ is not a singleton, Track-and-Stop using D-tracking may fail to converge to $\w^*(\vmu)$, even when it is convex.
\end{remark}

While we do not prove that D-tracking fails to converge to $\w^*(\vmu)$ on a specific example of a bandit, we provide empirical evidence in Appendix~\ref{sec:failure_D_tracking}. The reason for the failure of D-tracking is that it does not in general converge to the convex hull of the points it tracks. Suppose that $\w_t = \w^{(1)} = (1/2, 1/2, 0)$ for $t$ odd and $\w_t = \w^{(2)} = (1/2, 0, 1/2)$ for $t$ even. Then D-tracking verifies $\lim_{t\to +\infty} N_t/t = (1/3, 1/3, 1/3)$. This limit is outside of the convex hull of $\{\w^{(1)}, \w^{(2)}\}$.


\section{Algorithms for the Multiple-Answers Setting}\label{sec:alg}
\tikzset{
  pt/.style={fill,circle,inner sep=.25ex},
}
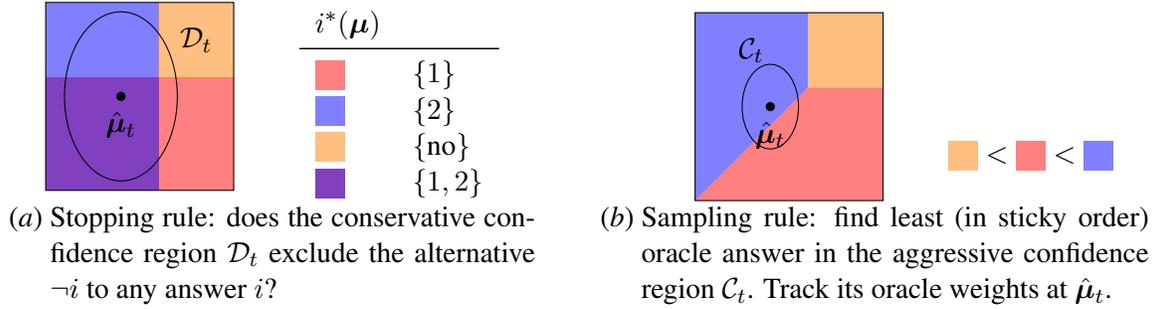
\begin{figure}[t]
  \centering
  \subfigure[Stopping rule: does the conservative confidence region $\mathcal D_t$ exclude the alternative $\neg i$ to any answer $i$?]{
    \quad
    \begin{tikzpicture}[baseline=0,scale=2.5]
      \path[fill=red,semitransparent] (0,0) rectangle (1,\gam);
      \path[fill=blue,semitransparent] (0,0) rectangle (\gam,1);
      \path[fill=orange,semitransparent] (\gam,\gam) rectangle (1,1);
      \draw  (0,0) rectangle (1,1);
      \coordinate (hmu) at (.4,.5);
      \node[pt] at (hmu) [label=270:{$\hat\vmu_t$}] {};
      \draw (hmu) ellipse (.3 and .45);
      \node at (.8,.8) {$\mathcal D_t$};
    \end{tikzpicture}
    \qquad
    \begin{tabular}[b]{ll}
      $i^*(\vmu)$
      \\
      \midrule
      \onebox{red} &$\set{1}$
      \\
      \onebox{blue} &$\set{2}$
      \\
      \onebox{orange} &$\set{\none}$
      \\
      \twobox{red}{blue} &$\set{1,2}$
    \end{tabular}
    \quad
  }
  \qquad
  \subfigure[Sampling rule: find least (in sticky order) oracle answer in the aggressive confidence region $\mathcal C_t$. Track its oracle weights at $\hat \vmu_t$.
  ]{
    \qquad\quad
    \begin{tikzpicture}[baseline=.5cm,scale=2.5]
      \path[fill=red,semitransparent] (0,0) -- (1,0) -- (1,\gam) --  (\gam,\gam) -- cycle;
      \path[fill=blue,semitransparent] (0,0) -- (0,1) -- (\gam,1) -- (\gam,\gam) -- cycle;
      \path[fill=orange,semitransparent] (\gam,\gam) rectangle (1,1);
      \draw  (0,0) rectangle (1,1);
      \coordinate (hmu) at (.4,.5);
      \node[pt] at (hmu) [label=270:{$\hat\vmu_t$}] {};
      \draw (hmu) ellipse (.15 and .225);
      \node at (.3,.8) {$\mathcal C_t$};
    \end{tikzpicture}
    \qquad
    $\text{\onebox{orange}} <
    \text{\onebox{red}} <
    \text{\onebox{blue}}$
    \quad
  }
  \\
  \caption{Sticky Track-and-Stop: The two main ideas, illustrated on the \textit{Any Low Arm} problem.
  }
\end{figure}

We can prove for Track-and-Stop the following suboptimal upper bound on the sample complexity, based on the fact that it ensures convergence of $N_t/t$ to the convex hull of the oracle weight set.

\begin{theorem}\label{th:tas_sample_complexity_general}
Let $\conv(A)$ be the convex hull of a set $A$. For all $\vmu\in\mathcal{M}$ in a multi-answer problem, Track-and-Stop with C-tracking is $\delta$-correct and verifies
\begin{align*}
\lim_{\delta \to 0} \frac{\ex_\vmu[\tau_\delta]}{\log(1/\delta)} \leq \max_{\w\in \conv(\w^*(\vmu))} \frac{1}{D(\w, \vmu)} \: .
\end{align*}
\end{theorem}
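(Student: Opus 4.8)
The plan is to separate the claim into a correctness part and a sample complexity part, since Theorem~\ref{th:tas_sample_complexity_general} asserts both. Correctness ($\delta$-PAC) should follow verbatim from the generic stopping-rule analysis of \citet{GK16}: the stopping rule compares a GLR-type statistic against a threshold calibrated so that, uniformly over $\vmu\in\mathcal{M}$, the probability of ever stopping with an incorrect recommendation is at most $\delta$. This argument does not use uniqueness or convexity of $\w^*$, nor anything about the sampling rule, so it transfers directly; I would state this and point to the appendix. The substance is the upper bound on $\limsup_\delta \ex_\vmu[\tau_\delta]/\log(1/\delta)$.

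For the sample complexity, the standard route is: (i) show that on the ``good event'' where $\hat\vmu_t\to\vmu$, the C-tracking sampling rule forces the empirical proportions $N_t/t$ to get arbitrarily close to $\conv(\w^*(\vmu))$; (ii) conclude that the stopping statistic grows at rate at least $\inf_{\w\in\conv(\w^*(\vmu))} D(\w,\vmu)\cdot t$... but wait, that is the wrong direction. More carefully: C-tracking only guarantees $N_t/t$ lands \emph{somewhere} in (a neighbourhood of) $\conv(\w^*(\vmu))$, determined by the oscillating choices $\w_t\in\w^*(\hat\vmu_t)$; the adversarial-looking worst case is that it lands at the point of $\conv(\w^*(\vmu))$ minimising $D(\cdot,\vmu)$, which is exactly $\max_{\w\in\conv(\w^*(\vmu))} 1/D(\w,\vmu)$ in the bound. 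So the key step is: the averaged weights $\frac1t\sum_{s\le t}\w_s$ converge to $\conv(\w^*(\vmu))$ — this is the natural generalisation of Lemma~\ref{lem:average_w_converges_to_optimal_if_convex}, now \emph{without} the convexity hypothesis (since $\w_s\in\w^*(\hat\vmu_s)$, the average lies in the convex hull of a shrinking neighbourhood of $\w^*(\vmu)$, by upper hemicontinuity from Theorem~\ref{th:continuity}), combined with Lemma~7 of \citet{GK16} which says C-tracking keeps $N_t/t$ close to that average. Hence $\liminf_t \operatorname{dist}_\infty(N_t/t, \conv(\w^*(\vmu))) = 0$ on the good event; but one actually needs control for \emph{all} large $t$, not just a liminf, so one shows $N_t/t$ is eventually within any $\varepsilon$ of $\conv(\w^*(\vmu))$.

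Given this, the argument concludes as in \citet{GK16}: let $Z_t$ be the stopping statistic, which on the good event satisfies $Z_t \ge t\inf_{i}\inf_{\w\text{ near }N_t/t} D(\w,\hat\vmu_t,\neg i)$ up to lower-order forced-exploration terms; using continuity of $D$ (Theorem~\ref{th:continuity}), $\hat\vmu_t\to\vmu$, and $N_t/t$ eventually within $\varepsilon$ of $\conv(\w^*(\vmu))$, we get $\liminf_t Z_t/t \ge \min_{\w\in\conv(\w^*(\vmu))} D(\w,\vmu) - o_\varepsilon(1)$. Since the threshold is $\approx \log(1/\delta)$, the algorithm stops once $Z_t \gtrsim \log(1/\delta)$, giving $\tau_\delta \lesssim \log(1/\delta) / \min_{\w\in\conv(\w^*(\vmu))} D(\w,\vmu)$ on the good event; a standard argument bounds the contribution of the complement of the good event (whose probability decays fast enough) to keep the expectation in line. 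Letting $\varepsilon\to0$ and $\delta\to0$ yields the stated bound.

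The main obstacle I anticipate is step (i): pinning down precisely \emph{which} point of $\conv(\w^*(\vmu))$ the proportions converge near. Upper hemicontinuity gives that $\w_s$ lies in a shrinking neighbourhood $V$ of $\w^*(\vmu)$, so averages lie in $\conv(V)$, which shrinks to $\conv(\w^*(\vmu))$; but because $\w^*(\vmu)$ need not be convex, $N_t/t$ genuinely can approach any point of the hull depending on the (uncontrolled) tie-breaking in the choice of $\w_s\in\w^*(\hat\vmu_s)$, which is exactly why only the worst-case $\max_{\w\in\conv(\w^*(\vmu))} 1/D(\w,\vmu)$ can be claimed — and making that ``worst case is really attainable in the bound'' rigorous (rather than accidentally proving something stronger or weaker) requires care. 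A secondary technical point is handling the forced-exploration lower-order terms and the $\varepsilon$-inflation of the simplex in C-tracking uniformly, so that the $\limsup$ is clean; these are routine given the machinery of \citet{GK16} but must be threaded through the set-valued setting using Theorem~\ref{th:continuity}.
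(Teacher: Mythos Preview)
Your proposal is correct and follows essentially the same route as the paper: upper hemicontinuity of $\w^*$ (Theorem~\ref{th:continuity}) forces the C-tracking averages, and hence $N_t/t$, into an $\varepsilon$-neighbourhood of $\conv(\w^*(\vmu))$; one then defines the neighbourhood quantity $C_{\varepsilon,\xi}(\vmu)=\inf_{\vmu',\w'} D(\w',\vmu')$ and runs the same good-event/bad-event decomposition and stopping-time bound as for Sticky Track-and-Stop, taking $\varepsilon,\xi\to 0$ at the end. One small slip: in your lower bound on $Z_t$ you wrote $\inf_i$, but the stopping statistic is $t\,D(N_t/t,\hat\vmu_t)=t\max_i D(N_t/t,\hat\vmu_t,\neg i)$, and it is this $\max_i$ (equivalently $D(\w,\vmu)$) that you correctly use in the line that follows.
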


\subsection{Sticky Track-and-Stop}

\begin{algorithm}
\caption{Sticky Track-and-Stop.}
\label{alg:sticky}
	Input: $\delta>0$, strict total order on $\mathcal{I}$. Set $t = 1$ , $\hat{\vmu}_0 = 0$ , $N_0 = 0$  .\\
	\While{not stopped}{
  	  Let $\mathcal{C}_t = \{\vmu'\in\mathcal{M}\: : \: D(N_{t-1}, \hat{\vmu}_{t-1}, \vmu') \leq \log(f(t-1))\}$ .
          \tcp*[f]{small conf.\ reg.}
          \\
  	  Compute $I_t = \bigcup_{\vmu' \in \mathcal{C}_t} i_F(\vmu')$ .\\
      Pick the first alternative $i_t\in I_t$ in the order on $\mathcal{I}$.\\
      Compute $\w_t \in \w^*(\hat{\vmu}_{t-1}, \neg i_t)$.\\
      Pull an arm $a_t$ according to the C-tracking rule and receive $X_t \sim \nu_{a_t}$ .\\
      Set $N_t = N_{t-1} + \e_{a_t}$  and 
      $\hat{\vmu}_t = \hat{\vmu}_{t-1} + \frac{1}{N_{t,a_t}}(X_t-\hat{\mu}_{t-1,a_t})\e_{a_t}$ . \\
      Let $\mathcal{D}_t = \{\vmu'\in\mathcal{M}\: : \: D(N_t, \hat{\vmu}_t, \vmu') \leq \beta(t, \delta)\}$ .
      \tcp*[f]{large conf.\ reg.}
      \\
	  \If{there exists $i\in\mathcal{I}$ such that $\mathcal{D}_t \cap \neg i = \emptyset$}
	  	{stop and return $i$.}
	  $t \leftarrow t+1$ .
	}
\end{algorithm}

The cases of multiple-answers problems for which Track-and-Stop is inadequate are $\vmu\in\mathcal{M}$ with cardinal of $i_F(\vmu)$ greater than 1. When convexity does not hold, $\w^*(\vmu)$ is the union of the convex sets $(\w^*(\vmu, \neg i))_{i\in i_F(\vmu)}$. If an algorithm can a priori select $i_f\in i_F(\vmu)$ and track allocations $\w_t$ in $\w^*(\hat{\vmu}_t, \neg i_f)$, then using Track-and-Stop on that restricted problem will ensure that $N_t/t$ converge to oracle weights. Our proposed algorithm, Sticky Track-and-Stop, which we display in Algorithm~\ref{alg:sticky}, uses a confidence region around the current estimate $\hat{\vmu}_t$ to determine what $i\in\mathcal{I}$ can be the oracle answer for $\vmu$. It selects one of these answers according to an arbitrary total order on $\mathcal{I}$ and does not change it (sticks to it) until no point in the confidence region has it in its set of oracle answers.

\begin{theorem}
For $\beta(t, \delta) = \log(C t^2/\delta)$, with $C$ such that $C \geq e \sum_{t=1}^{+\infty} (\frac{e}{K})^K \frac{(\log^2(Ct^2)\log(t))^K}{t^2}$, Sticky Track-and-Stop is $\delta$-correct.
\end{theorem}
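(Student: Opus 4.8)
The plan is to establish $\delta$-correctness by bounding the probability that Sticky Track-and-Stop stops and returns an incorrect answer. By definition, the algorithm stops at time $t$ and returns $i$ only when $\mathcal{D}_t \cap \neg i = \emptyset$, i.e.\ when $D(N_t, \hat{\vmu}_t, \vmu') > \beta(t,\delta)$ for all $\vmu' \in \neg i$. If the true model is $\vmu$ and $i \notin i^*(\vmu)$, then $\vmu \in \neg i$, so the stopping event at time $t$ implies $D(N_t, \hat{\vmu}_t, \vmu) > \beta(t,\delta)$. Hence
\begin{align*}
  \pr_\vmu\del[\big]{\text{stop and return some } i \notin i^*(\vmu)}
  ~\leq~
  \pr_\vmu\del[\big]{\exists t \geq 1: D(N_t, \hat{\vmu}_t, \vmu) > \beta(t,\delta)} .
\end{align*}
So it suffices to show the right-hand side is at most $\delta$ for the stated choice of $\beta$.

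The key step is a deviation bound controlling $\pr_\vmu(\exists t: \sum_k N_{t,k}\, d(\hat\mu_{t,k}, \mu_k) > \beta(t,\delta))$ uniformly in time. I would use the standard method-of-mixtures / peeling argument for exponential families (as in \citet{GK16} and follow-ups): for each arm $k$, the quantity $N_{t,k}\, d(\hat\mu_{t,k},\mu_k)$ is controlled by a self-normalized martingale, and a union bound over arms combined with a covering of the number of pulls gives a tail of the form $\pr_\vmu(\exists t: D(N_t,\hat\vmu_t,\vmu) > x) \le (\text{polynomial in } x)^K e^{-x}$. Plugging $x = \beta(t,\delta) = \log(Ct^2/\delta)$ and summing over $t$, the condition $C \geq e\sum_{t\geq 1} (e/K)^K (\log^2(Ct^2)\log t)^K / t^2$ is exactly what makes $\sum_t (\text{poly}(\beta(t,\delta)))^K e^{-\beta(t,\delta)} \leq \delta$; the $\log(t)$ factor inside accounts for the forced-exploration / C-tracking contribution to the concentration (the allocation is a data-dependent sequence, not fixed, so one pays a further mild covering cost in $t$). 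The constant $e$ and the $(e/K)^K$ are the universal constants coming from the optimization of the peeling bound.

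The main obstacle is getting the right-order concentration inequality: one must show that, despite the sampling rule being adaptive (the $N_{t,k}$ depend on past observations through $\mathcal{C}_t$, the sticky choice $i_t$, and C-tracking), the aggregated divergence $\sum_k N_{t,k} d(\hat\mu_{t,k},\mu_k)$ still enjoys a uniform-in-time tail matching that of a fixed allocation up to logarithmic factors. This is handled by the deviation inequality of \citet{GK16} (their Theorem/Lemma on the concentration of the generalized likelihood ratio), or the refinements in the self-normalized concentration literature, applied arm-wise; I would cite that result rather than re-derive it. Note that $\delta$-correctness only concerns the stopping and recommendation rules and the large confidence region $\mathcal{D}_t$ — the sticky sampling rule, $\mathcal{C}_t$, $I_t$, $i_t$, and the oracle weights $\w_t$ play no role here (they only matter for the sample-complexity upper bound), so the proof reduces cleanly to a single uniform deviation bound. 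Finally I would remark that the argument is identical to the correctness proof of ordinary Track-and-Stop, since the stopping rule is unchanged; the novelty of Sticky Track-and-Stop lies entirely in the sampling rule and its analysis, treated in the sample-complexity theorem.
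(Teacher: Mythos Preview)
Your proposal is correct and matches the paper's approach: the paper's entire proof is the one-line remark ``That result is a consequence of Proposition~12 of \citep{GK16}'', which is precisely the uniform deviation inequality $\pr_\vmu\{\exists t:\sum_k N_{t,k}d(\hat\mu_{t,k},\mu_k)>\beta(t,\delta)\}\le\delta$ (stated in the appendix and attributed to \citet{MCP14}) that you reduce to. Your observation that correctness depends only on the stopping rule and not on the sticky sampling mechanism is exactly right; one minor quibble is that the extra $\log t$ factor in the threshold arises from the peeling over sample counts in the self-normalised bound, not from C-tracking per se.
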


That result is a consequence of Proposition 12 of \citep{GK16}.

\subsection{Sample Complexity}

\begin{theorem}\label{th:STaS_optimal_sample_complexity}
Sticky Track-and-Stop is asymptotically optimal, i.e. it verifies for all $\vmu\in\mathcal{M}$,
\begin{align*}
\lim_{\delta \to 0} \frac{\ex_\vmu[\tau_\delta]}{\log(1/\delta)} \to \frac{1}{D(\vmu)} \: .
\end{align*}
\end{theorem}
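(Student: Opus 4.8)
The plan is to prove the matching upper bound $\limsup_{\delta\to 0}\ex_\vmu[\tau_\delta]/\log(1/\delta) \le 1/D(\vmu)$, since the lower bound already follows from Theorem~\ref{thm:lbd}. The core of the argument is to show that Sticky Track-and-Stop eventually ``commits'' to a single oracle answer $i_f \in i_F(\vmu)$ and from that point on behaves like ordinary C-tracking Track-and-Stop for the \emph{single-answer} restricted problem with alternative $\neg i_f$, whose oracle-weight set $\w^*(\vmu,\neg i_f)$ is convex by the Proposition in Section on Convexity.

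\textbf{Step 1: Concentration and forced exploration.} First I would invoke the standard ingredients: the C-tracking rule guarantees $N_{t,k} \ge \sqrt{t} - $ (const) for all $k$ (Lemma~7 of \citep{GK16} style), so $\hat\vmu_t \to \vmu$ almost surely, and for $\delta$-correctness the choice of $\beta(t,\delta)$ with the stated constant $C$ controls the bad event that $\vmu \notin \mathcal D_t$ uniformly. Restrict attention to the event (of probability $\ge 1-\delta$, and also to an almost-sure event for the a.s.\ convergence statements) on which $\hat\vmu_t\to\vmu$.

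\textbf{Step 2: Stabilisation of the sticky answer.} This is where the ``sticky'' construction does its work, and I expect it to be the main obstacle. I would argue that there is a (random, a.s.\ finite) time $t_0$ after which the chosen answer $i_t$ is constant, equal to some fixed $i_f\in i_F(\vmu)$. The mechanism: since $D(N_{t-1},\hat\vmu_{t-1},\vmu')\le\log f(t-1)$ and $N_{t-1,k}\to\infty$ while $\log f(t-1) = o(N_{t-1,k})$, the small confidence region $\mathcal C_t$ shrinks to $\{\vmu\}$, so by upper hemicontinuity of $i_F$ (Theorem~\ref{th:continuity}) we get $I_t = \bigcup_{\vmu'\in\mathcal C_t} i_F(\vmu') \subseteq i_F(\vmu)$ for $t$ large. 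The delicate point is that $I_t$ need not \emph{equal} $i_F(\vmu)$ and need not be monotone — it could wobble among subsets of $i_F(\vmu)$. But because $i_t$ is always the \emph{minimum} of $I_t$ in the fixed total order, and once $I_t\subseteq i_F(\vmu)$ the minimum lies in the finite set $i_F(\vmu)$, I would show the minimum can only \emph{decrease} finitely often: more carefully, the sticky rule keeps the current $i_t$ as long as it remains in $I_t$, so one needs that the currently-held answer does not get ejected from $I_t$ once things have settled. Here I would use that for any $i_f\in i_F(\vmu)$ with the additional stability property (namely $\vmu$ lies in the region where $i_f$ is an oracle answer in a full neighbourhood, or on its boundary), $i_f\in i_F(\vmu')$ for all $\vmu'$ in a neighbourhood of $\vmu$ — combined with the fact that $\mathcal{C}_t \to \{\vmu\}$ this guarantees the held answer stays valid. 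Since the order is total and $i_F(\vmu)$ finite, after finitely many switches $i_t$ freezes at some $i_f\in i_F(\vmu)$.

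\textbf{Step 3: Reduction to single-answer Track-and-Stop.} For $t\ge t_0$ the algorithm computes $\w_t\in\w^*(\hat\vmu_{t-1},\neg i_f)$ and C-tracks it. By Lemma~\ref{lem:average_w_converges_to_optimal_if_convex} applied to the convex set $\w^*(\vmu,\neg i_f)$, the running average $\frac1t\sum_{s\le t}\w_s$ converges to $\w^*(\vmu,\neg i_f)$, and C-tracking (Lemma~7 of \citep{GK16}) gives $\|N_t/t - \frac1t\sum_s \w_s\|_\infty \to 0$, so $N_t/t$ approaches $\w^*(\vmu,\neg i_f)$. Then the stopping statistic satisfies, for any $\varepsilon>0$ and $t$ large, $\inf_{\vmu'\in\neg i_f} D(N_t,\hat\vmu_t,\vmu') \ge t(D(\vmu,\neg i_f)-\varepsilon) = t(D(\vmu)-\varepsilon)$ by continuity (Theorem~\ref{th:continuity}) and $i_f\in i_F(\vmu)$. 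Since $\mathcal D_t\cap\neg i_f=\emptyset$ exactly when this statistic exceeds $\beta(t,\delta)=\log(Ct^2/\delta)$, and $\beta(t,\delta)\sim\log(1/\delta)$, the stopping time satisfies $\tau_\delta \le \inf\{t : t(D(\vmu)-\varepsilon) \ge \log(Ct^2/\delta)\}$, which after the usual argument (e.g.\ the technical lemma bounding $\sup\{t: t\le a\log(bt^2)\}$ in \citep{GK16}) yields $\limsup_{\delta\to 0}\tau_\delta/\log(1/\delta) \le 1/(D(\vmu)-\varepsilon)$ on the good event. Finally I would convert the high-probability bound on $\tau_\delta$ into a bound on $\ex_\vmu[\tau_\delta]$ by the standard truncation: outside the good event use a crude almost-sure finiteness / deterministic fallback bound on $\tau_\delta$ that is $o(1/\delta^{-1}\log(1/\delta))$ in expectation, as in \citep[proof of Theorem 14]{GK16}. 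Let $\varepsilon\to0$ to conclude, and combine with Theorem~\ref{thm:lbd} for equality.

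The main obstacle is Step~2: ordinary Track-and-Stop analysis has nothing like it, and one must rule out the pathology that the confidence-region-based answer set $I_t$ oscillates forever, dragging $i_t$ with it. The total order plus finiteness of $\mathcal{I}$ plus upper hemicontinuity of $i_F$ are exactly what prevent this, but making the ``the held answer is never ejected after $t_0$'' step precise — i.e.\ identifying which $i_f\in i_F(\vmu)$ can be stuck to, and why $\mathcal C_t$ shrinking guarantees $i_f$ persists in every $I_t$ — is the crux and the place where the specific design of the algorithm (aggressive region $\mathcal C_t$ vs.\ conservative region $\mathcal D_t$) genuinely matters.
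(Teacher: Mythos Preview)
Your overall architecture (stabilise the chosen answer, then reduce to single-answer C-tracking on the convex set $\w^*(\vmu,\neg i_f)$, then use continuity) matches the paper. However, Step~2 contains a genuine gap that the paper resolves very differently, and you also slightly misread the algorithm.

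First, the algorithm has no memory in its answer selection: it simply sets $i_t=\min I_t$ every round. There is no ``keep $i_t$ as long as it remains in $I_t$'' mechanism; the stickiness is an \emph{emergent} property of the fact that $I_t$ itself becomes constant. So your argument about ``the held answer is never ejected'' and about identifying elements of $i_F(\vmu)$ with a ``stability property'' is attacking a phantom problem.

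Second, and this is the real gap, you only argue the inclusion $I_t\subseteq i_F(\vmu)$ (via shrinking of $\mathcal C_t$ and upper hemicontinuity of $i_F$). You say ``$I_t$ need not equal $i_F(\vmu)$ and need not be monotone --- it could wobble among subsets of $i_F(\vmu)$''. The paper's key observation is that on the concentration event $\mathcal E_T=\bigcap_{t=h(T)}^T\{\vmu\in\mathcal C_t\}$, one has $\vmu\in\mathcal C_t$ and therefore $i_F(\vmu)\subseteq I_t$ trivially. Combined with your shrinking argument, this gives $I_t=i_F(\vmu)$ \emph{exactly} for $t\ge\max(h(T),T_\Delta)$, and hence $i_t=\min I_t=\min i_F(\vmu)\eqqcolon i_\vmu$ is a \emph{deterministic} element of $i_F(\vmu)$, the same on every run. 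No wobbling, no random $i_f$, no appeal to stability of individual oracle answers in neighbourhoods (which upper hemicontinuity does not provide).

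This determinism also cleans up your Step~3 expectation argument. The paper does not work with a random almost-sure time $t_0$; instead it defines the family of events $\mathcal E_T\cap\mathcal E'_T$, shows $\mathcal E_T\cap\mathcal E'_T\subseteq\{\tau_\delta\le T\}$ for $T\ge T_0(\delta)$ with $T_0(\delta)/\log(1/\delta)\to 1/C^*_{\varepsilon,\xi}(\vmu)$, and bounds $\ex_\vmu[\tau_\delta]\le T_0+\sum_{T\ge T_0}\pr(\mathcal E_T^c)+\sum_{T\ge T_0}\pr(\mathcal E'_T^c)$ with both tails summable (using $h(T)=\sqrt{T}$ and $f(t)=Ct^{10}$). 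Your ``crude almost-sure finiteness / deterministic fallback'' is not obviously available here, and the paper's decomposition avoids needing it.
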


Let $i_\vmu= \min i_F(\vmu)$ in the arbitrary order on answers. For $\varepsilon,\xi>0$, we define $C_{\varepsilon,\xi}^*(\vmu)$, the minimal value of $D(\w',\vmu',\neg i_\vmu)$ for $\w'$ and $\vmu'$ in $\varepsilon$ and $\xi$-neighbourhoods of $\w^*(\vmu)$ and $\vmu$.
\begin{align*}
C_{\varepsilon,\xi}^*(\vmu) =
\inf_{
	\substack{\vmu':\Vert\vmu' - \vmu\Vert_\infty \leq \xi\\
	 \w': \inf_{\w\in\w^*(\vmu, \neg i_\vmu)}\Vert \w' - \w \Vert_\infty \leq 3\varepsilon}
}
D(\w',\vmu',\neg i_\vmu) \: . 
\end{align*}

Our proof strategy is to show that under a concentration event defined below, for $t$ big enough, $(\hat{\vmu}_t, N_t/t)$ belongs to that $(\xi, \varepsilon)$ neighbourhood of $(\vmu, \w^*(\vmu, \neg i_\vmu))$. From that fact, we obtain $D(N_t, \hat{\vmu}_t, \neg i_\vmu) \geq t C_{\varepsilon,\xi}^*(\vmu)$. Furthermore, if the algorithm does not stop at stage $t$, we also get an upper bound on $D(N_t, \hat{\vmu}_t, \neg i_\vmu)$ from the stopping condition. We obtain an upper bound on the stopping time, function of $\delta$ and $C_{\varepsilon,\xi}^*(\vmu)$. By continuity of $(\w,\vmu)\mapsto D(\w, \vmu, \neg i_\vmu)$ (from Theorem~\ref{th:continuity}), we have $\lim_{\varepsilon\to 0, \xi \to 0} C_{\varepsilon,\xi}^*(\vmu) = D(\vmu, \neg i_\vmu) = D(\vmu)$.

\paragraph{Two concentration events.} 

Let $\mathcal{E}_T^{}=\bigcap_{t=h(T)}^T\{\vmu \in \mathcal{C}_t\}$ be the event that the small confidence region contains the true parameter vector $\vmu$ for $t\geq h(T)$. The function $h:\N\to\R$, positive, increasing and going to $+\infty$, makes sure that each event $\{\vmu \in \mathcal{C}_t\}$ appears in finitely many $\mathcal{E}_T$, which will be essential in the concentration results. We will eventually use $h(T) = \sqrt{T}$.

In order to define the second event, we first highlight a consequence of Theorem~\ref{th:continuity}.

\begin{corollary}\label{lem:w*_continuity}
For all $\varepsilon>0$, for all $\vmu\in\mathcal{M}$, for all $i\in\mathcal{I}$, there exists $\xi>0$ such that
\begin{align*}
\Vert \vmu' - \vmu\Vert_\infty \leq \xi \Rightarrow \forall \w' \in \w^*(\vmu', \neg i)\: \exists \w \in \w^*(\vmu, \neg i), \: \Vert \w' - \w \Vert_\infty \leq \varepsilon \: .
\end{align*}
\end{corollary}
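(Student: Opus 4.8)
The plan is to read this off directly from the upper hemicontinuity of $\vmu \mapsto \w^*(\vmu, \neg i)$ established in Theorem~\ref{th:continuity}(3), by instantiating the definition of upper hemicontinuity at the particular open neighbourhood obtained by $\varepsilon$-fattening the target set $\w^*(\vmu, \neg i)$.

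Concretely, fix $\varepsilon > 0$, $\vmu \in \mathcal{M}$ and $i \in \mathcal{I}$. By Theorem~\ref{th:continuity}(3) the value $\w^*(\vmu, \neg i)$ is a non-empty compact subset of $\triangle_K$. Define
\[
V ~=~ \setc*{\w' \in \triangle_K}{\textstyle\inf_{\w \in \w^*(\vmu, \neg i)} \Vert\w' - \w\Vert_\infty < \varepsilon} ,
\]
the open $\varepsilon$-neighbourhood of $\w^*(\vmu, \neg i)$; it is open because $\w' \mapsto \inf_{\w \in \w^*(\vmu,\neg i)} \Vert\w'-\w\Vert_\infty$ is continuous (distance to a fixed set), and it obviously contains $\w^*(\vmu, \neg i)$. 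Applying the definition of upper hemicontinuity at $\vmu$ to this $V$ yields a neighbourhood $U$ of $\vmu$ with $\w^*(\vmu', \neg i) \subseteq V$ for all $\vmu' \in U$. Since $U$ is a neighbourhood of $\vmu$ it contains a closed sup-norm ball $\setc*{\vmu'}{\Vert\vmu' - \vmu\Vert_\infty \le \xi}$ for some $\xi > 0$, and this $\xi$ is the one we claim.

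It then remains only to unpack the conclusion. Given any $\vmu'$ with $\Vert\vmu' - \vmu\Vert_\infty \le \xi$ and any $\w' \in \w^*(\vmu', \neg i)$, we have $\w' \in V$, i.e.\ $\inf_{\w \in \w^*(\vmu, \neg i)} \Vert\w' - \w\Vert_\infty < \varepsilon$; since $\w^*(\vmu, \neg i)$ is compact and $\w \mapsto \Vert\w' - \w\Vert_\infty$ is continuous, this infimum is attained at some $\w \in \w^*(\vmu, \neg i)$, which then satisfies $\Vert\w' - \w\Vert_\infty < \varepsilon$, hence a fortiori $\le \varepsilon$. This is exactly the asserted implication. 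I expect no real obstacle: the corollary is a quantifier-level reformulation of upper hemicontinuity combined with compactness of the limiting set, both already provided by Theorem~\ref{th:continuity}; the only points needing (routine) care are that the $\varepsilon$-fattening $V$ is genuinely open and that a topological neighbourhood of $\vmu$ contains a sup-norm ball.
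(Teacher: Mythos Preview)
Your proof is correct and matches the paper's approach: the paper states this corollary without a separate proof, simply presenting it as ``a consequence of Theorem~\ref{th:continuity}'', and your argument is precisely the routine unpacking of upper hemicontinuity (Theorem~\ref{th:continuity}(3)) together with compactness of $\w^*(\vmu,\neg i)$ to attain the infimum.
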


Let $\mathcal{E}_T' = \bigcap_{t=h(T)}^T\{\Vert \hat{\vmu}_t - \vmu\Vert_\infty \leq \xi\}$ be the event that the empirical parameter vector is close to $\vmu$, where $\xi$ is chosen as in the previous corollary for $i=i_\vmu$. The analysis of Sticky Track-and-Stop consists of two parts: first show that $\mathcal{E}_T^c$ and ${\mathcal{E}_T'}^c$ happen rarely enough to lead only to a finite term in $\ex_\vmu[\tau_\delta]$; then show than under $\mathcal{E}_T^{} \cap \mathcal{E}_T'$ there is an upper bound on $\tau_\delta$.

\begin{lemma}\label{lem:tau_delta_decomposition}
Suppose that there exists $T_0$ such that for $T\geq T_0$, $\mathcal{E}_T^{} \cap \mathcal{E}_T' \subset \{\tau_\delta \leq T\}$. Then
\begin{align}
\ex_\vmu[\tau_\delta]
\leq T_0 + \sum_{T=T_0}^{+\infty} \pr_\vmu(\mathcal{E}_T^c) + \sum_{T=T_0}^{+\infty} \pr_\vmu({\mathcal{E}_T'}^c) \: .\label{eq:tau_delta_decomposition}
\end{align}
\end{lemma}
\begin{proof}
Since $\tau_\delta$ is a non-negative integer-valued random variable, $\ex_\vmu[\tau_\delta] = \sum_{t=0}^{+\infty} \pr_\vmu\{\tau_\delta>t\}$. For $t\geq T_0$, $\pr_\vmu\{\tau_\delta > t\} \leq \pr_\vmu(\mathcal{E}_T^{c} \cup {\mathcal{E}_T'}^c) \leq \pr_\vmu(\mathcal{E}_T^{c}) + \pr_\vmu( {\mathcal{E}_T'}^c)$.
\end{proof}

The sums depending on the events $\mathcal{E}_T^{}$ and $\mathcal{E}_T'$ in~\eqref{eq:tau_delta_decomposition} are finite for well chosen $h(T)$ and $\mathcal{C}(t)$.

\begin{lemma}\label{lem:when_concentration_fails}
For $h(T) = \sqrt{T}$ and $f(t) = \exp(\beta(t, 1/t^5))=Ct^{10}$ in the definition of the confidence region $\mathcal{C}_t$, the sum $\sum_{T=T_0}^{+\infty} \pr_\vmu(\mathcal{E}_T^c) + \sum_{T=T_0}^{+\infty} \pr_\vmu({\mathcal{E}_T'}^c)$ is finite.
\end{lemma}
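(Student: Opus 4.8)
The plan is to bound each tail by a union bound over the time steps appearing in the corresponding event, and then exchange the order of summation, exploiting that the choice $h(T)=\sqrt{T}$ forces every fixed time index $t$ to lie in the window $[h(T),T]$ for at most $t^2$ values of $T$ (namely $\max(T_0,t)\leq T\leq t^2$) --- this is precisely the purpose of $h$ noted after the definition of $\mathcal{E}_T$. Writing $\mathcal{E}_T^c=\bigcup_{t=h(T)}^{T}\{\vmu\notin\mathcal{C}_t\}$ and noting $\{\vmu\notin\mathcal{C}_t\}=\{D(N_{t-1},\hat\vmu_{t-1},\vmu)>\log f(t-1)\}$, a union bound together with exchanging the order of the (nonnegative) double series would give
\[
\sum_{T\geq T_0}\pr_\vmu(\mathcal{E}_T^c)
~\leq~
\sum_{T\geq T_0}\sum_{t=h(T)}^{T}\pr_\vmu(\vmu\notin\mathcal{C}_t)
~\leq~
\sum_{t}t^2\,\pr_\vmu(\vmu\notin\mathcal{C}_t),
\]
and the same manipulation applied to ${\mathcal{E}_T'}^c=\bigcup_{t=h(T)}^{T}\{\|\hat\vmu_t-\vmu\|_\infty>\xi\}$ would give $\sum_{T\geq T_0}\pr_\vmu({\mathcal{E}_T'}^c)\leq\sum_t t^2\,\pr_\vmu(\|\hat\vmu_t-\vmu\|_\infty>\xi)$. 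It then remains to show that $t^2\,\pr_\vmu(\vmu\notin\mathcal{C}_t)$ and $t^2\,\pr_\vmu(\|\hat\vmu_t-\vmu\|_\infty>\xi)$ are summable in $t$.

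For the first sum I would note that $\pr_\vmu(\vmu\notin\mathcal{C}_t)=\pr_\vmu\bigl(\sum_k N_{t-1,k}\,d(\hat\mu_{t-1,k},\mu_k)>\log f(t-1)\bigr)$ is a deviation of the empirical transportation cost to the true model, exactly the quantity controlled by the concentration inequality underpinning the $\delta$-correctness proof (Proposition~12 of \citep{GK16} and the sub-Gaussian deviation bound it uses), whose tail at level $x$ decays like $e^{-x}$ times a polynomial of degree $K$ in $x$, with a $(e/K)^K$ constant coming from the union over the $O(t^{K})$ possible values of the count vector $N_{t-1}$. Since $\log f(t-1)=\beta(t-1,1/(t-1)^5)$ is logarithmic in $t$ with a large coefficient (a polynomial of large degree, here $f(t)\sim Ct^{10}$), this makes $\pr_\vmu(\vmu\notin\mathcal{C}_t)$ of order $(\log t)^K/t^{m}$ for a large power $m$, so that $\sum_t t^2(\log t)^K/t^m<\infty$. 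Here it is worth stressing why $f$ must be a large polynomial rather than the $O(t^2)$ growth of $\exp(\beta(t,\delta))$ that already suffices for $\delta$-correctness: the extra $t^2$ created by the double counting above must still be absorbed.

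For the second sum I would use that the C-tracking sampling rule performs enough forced exploration (Lemma~7 of \citep{GK16}) that $N_{t,k}\geq\sqrt{t}-c_K$ holds surely for every arm $k$ and all $t$, with $c_K$ depending only on $K$. Since $\hat\mu_{t,k}$ is the average of the first $N_{t,k}$ i.i.d.\ samples of arm $k$, the event $\{\|\hat\vmu_t-\vmu\|_\infty>\xi\}$ entails, for some arm $k$, that the average of the first $n$ samples of arm $k$ differs from $\mu_k$ by more than $\xi$ for some $n\geq\sqrt{t}-c_K$ (indeed $n=N_{t,k}$ witnesses it); a union bound over $n\geq\sqrt{t}-c_K$ of the sub-Gaussian (Hoeffding) tail then yields $\pr_\vmu(\|\hat\vmu_t-\vmu\|_\infty>\xi)\leq a_1 K e^{-a_2\sqrt{t}}$ for constants $a_1,a_2>0$ depending on $\xi$ and the sub-Gaussian parameter. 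Then $\sum_t t^2 a_1 K e^{-a_2\sqrt{t}}<\infty$, which would close the second sum.

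The main obstacle is the first bound: one needs the empirical-transportation-cost deviation inequality at the \emph{random, data-dependent} count vector $N_{t-1}$, uniform over all compositions it can take, with a tail light enough to absorb simultaneously the ``effective confidence'' $1/t^5$ hard-coded into $f$ and the extra factor $t^2$ stemming from the fact that a single time index $t$ belongs to up to $t^2$ of the events $\mathcal{E}_T$. That is exactly why $f$ must grow as a comfortably large power of $t$; the remaining steps are routine union bounds and sub-Gaussian tail estimates.
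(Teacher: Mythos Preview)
Your proposal is correct and matches the paper's argument closely. For $\mathcal{E}_T^c$ the paper does exactly your union bound and exchange of summation, invoking the time-uniform deviation inequality of \citet{MCP14} at confidence $\delta=1/t^5$ to get $\pr_\vmu(\vmu\notin\mathcal{C}_t)\le 1/t^5$ and then $\sum_T\sum_{t=\sqrt{T}}^{T}1/t^5<\infty$; for $\mathcal{E}_T'^c$ the paper instead cites Lemma~19 of \citet{GK16}, which packages your forced-exploration plus sub-Gaussian tail argument into the single estimate $\pr_\vmu({\mathcal{E}_T'}^c)\le BT\,e^{-C\sqrt{h(T)}}$ and sums that directly over $T$, but the underlying ingredients are the ones you name.
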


Proof in appendix~\ref{sec:proof_lem_when_concentration_fails}. The remainder of the proof is concerned with finding a suitable $T_0$. First, we show that if $\hat{\vmu}_t$ and $N_t$ are in an $(\xi, \varepsilon)$ neighbourhood of $\vmu$ and $\w^*(\vmu, \neg i_\vmu)$, then such an upper bound $T_0$ on $\tau_\delta$ can be obtained. The next lemma is proved in Appendix~\ref{sec:proof_lem_optimal_pulling_implies_small_stopping_time}.

\begin{lemma}\label{lem:optimal_pulling_implies_small_stopping_time}
Suppose that there exists a time $T_1$ and a function $\eta(t) < t-1$ such that if $\mathcal{E}_T\cap\mathcal{E}_T'$ holds then for $t\geq \eta(T)$, $D(N_t,\hat{\vmu}_t,\neg i_\vmu) \geq t C^*_{\varepsilon, \xi}(\vmu)$. Then when that event holds,
\begin{align*}
\tau_\delta \leq T_0 = \max(T_1, \inf\{T\in\N: 1 + \frac{\beta(T,\delta)}{C^*_{\varepsilon,\xi}(\vmu)} \leq T\}) \: .
\end{align*}
\end{lemma}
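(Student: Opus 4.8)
The plan is to prove Lemma~\ref{lem:optimal_pulling_implies_small_stopping_time} by combining the lower bound on $D(N_t, \hat\vmu_t, \neg i_\vmu)$ furnished by the hypothesis with the upper bound on that same quantity implied by the algorithm \emph{not} having stopped. Concretely, under $\mathcal E_T \cap \mathcal E_T'$, suppose the algorithm has not stopped by some time $t$ with $\eta(T) \le t \le T$. Since the algorithm only returns answer $i$ when $\mathcal D_t \cap \neg i = \emptyset$, non-termination at $t$ means in particular that $\mathcal D_t \cap \neg i_\vmu \neq \emptyset$, i.e.\ there exists $\vmu' \in \neg i_\vmu$ with $D(N_t, \hat\vmu_t, \vmu') \le \beta(t,\delta)$. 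Taking the infimum over $\vmu' \in \neg i_\vmu$ gives $D(N_t, \hat\vmu_t, \neg i_\vmu) \le \beta(t,\delta)$. Combined with the hypothesis $D(N_t, \hat\vmu_t, \neg i_\vmu) \ge t\, C^*_{\varepsilon,\xi}(\vmu)$, this forces $t\, C^*_{\varepsilon,\xi}(\vmu) \le \beta(t,\delta)$, hence $t \le \beta(t,\delta)/C^*_{\varepsilon,\xi}(\vmu)$.

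Next I would convert this self-referential inequality into an explicit bound on the stopping time. Define $T_0 = \max\bigl(T_1, \inf\{T \in \N : 1 + \beta(T,\delta)/C^*_{\varepsilon,\xi}(\vmu) \le T\}\bigr)$, which is finite because $\beta(T,\delta) = \log(CT^2/\delta)$ grows logarithmically in $T$ while the right-hand side grows linearly, so the set is non-empty. I claim that on $\mathcal E_{T_0} \cap \mathcal E_{T_0}'$ the algorithm has stopped by time $T_0$. Suppose not: then it is still running at $t = T_0$. We need $\eta(T_0) \le T_0$ so that the hypothesis applies at $t = T_0$; this holds since $\eta(t) < t-1$ by assumption and $T_0 \ge T_1$. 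Applying the displayed consequence at $t = T_0$ yields $T_0 \le \beta(T_0,\delta)/C^*_{\varepsilon,\xi}(\vmu) < 1 + \beta(T_0,\delta)/C^*_{\varepsilon,\xi}(\vmu) \le T_0$ by definition of $T_0$, a contradiction. Hence $\tau_\delta \le T_0$ whenever $\mathcal E_{T_0} \cap \mathcal E_{T_0}'$ holds, which is the claim.

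One subtlety to handle carefully is the interplay of the time indices: the hypothesis is stated for $\mathcal E_T \cap \mathcal E_T'$ guaranteeing the lower bound for all $t \ge \eta(T)$, and the events $\mathcal E_T, \mathcal E_T'$ are indexed by the horizon $T$, not by the running time $t$. In the argument above I evaluate everything at the single horizon $T = T_0$ and the single time $t = T_0$, so I only need that $\eta(T_0) \le T_0$, which is immediate, and that $T_1 \le T_0$, which holds by the max in the definition. I should also note that the statement of the lemma is exactly the assertion ``when that event holds, $\tau_\delta \le T_0$'', matching what I have shown.

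I expect the only real obstacle is bookkeeping rather than a genuine difficulty: one must make sure that the hypothesised lower bound is available at precisely the time at which one invokes the non-stopping upper bound, and that $T_0$ as defined is both finite and large enough for $\eta(T_0) \le T_0$. The finiteness of $\inf\{T : 1 + \beta(T,\delta)/C^*_{\varepsilon,\xi}(\vmu) \le T\}$ uses only that $\beta(T,\delta)$ is sublinear in $T$ (logarithmic) and that $C^*_{\varepsilon,\xi}(\vmu) > 0$, the latter being guaranteed for small enough $\varepsilon,\xi$ since $C^*_{\varepsilon,\xi}(\vmu) \to D(\vmu) > 0$ as $\varepsilon,\xi \to 0$ by the continuity in Theorem~\ref{th:continuity}. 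No further estimates are needed for this particular lemma; the heavy lifting (establishing the lower bound $D(N_t,\hat\vmu_t,\neg i_\vmu) \ge t C^*_{\varepsilon,\xi}(\vmu)$ via the stickiness of the answer selection and the C-tracking convergence) is deferred to the lemma that supplies the hypothesis.
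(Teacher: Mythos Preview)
Your proof is correct and follows the same core idea as the paper: non-stopping at time $t$ gives $D(N_t,\hat\vmu_t,\neg i_\vmu)\le\beta(t,\delta)$, which together with the hypothesised lower bound $D(N_t,\hat\vmu_t,\neg i_\vmu)\ge t\,C^*_{\varepsilon,\xi}(\vmu)$ yields a contradiction at $t=T_0$. The paper reaches the same contradiction via a slightly more elaborate sum decomposition $\min(\tau_\delta,T)\le\lceil\eta(T)\rceil+\sum_{t}\mathbb I\{\tau_\delta>t-1\}$, but your direct evaluation at the single time $t=T_0$ is equivalent and a bit cleaner; the only small point to make explicit is that $1+\beta(T_0,\delta)/C^*_{\varepsilon,\xi}(\vmu)\le T_0$ also holds when $T_0=T_1>\inf\{\cdot\}$, which follows since $T\mapsto T-\beta(T,\delta)/C^*_{\varepsilon,\xi}(\vmu)$ is eventually increasing (the paper glosses over this too).
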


\paragraph{The oracle answer $i_t$ becomes constant.} Due to the forced exploration present in the C-tracking procedure, the confidence region $\mathcal{C}_t$ shrinks. After some time, when concentration holds, the set of possible oracle answers $I_t$ becomes constant over $t$ and equal to $i_F(\vmu)$.

\begin{lemma}\label{lem:convergence_oracle_answer}
If an algorithm guaranties that for all $k\in[K]$ and all $t\geq 1$, $N_{t,k} \geq n(t)>0$ with $\lim_{t\to +\infty}n(t)/\log(f(t)) = +\infty$, then there exists $T_\Delta$ such that under the event $\mathcal{E}_T$, for $t\geq \max(h(T), T_\Delta)$, $I_t = i_F(\vmu)$ and $\min I_t = i_\vmu = \min i_F(\vmu)$.
\end{lemma}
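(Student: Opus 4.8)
\textbf{Proof plan for Lemma~\ref{lem:convergence_oracle_answer}.}
The plan is to show that the small confidence region $\mathcal{C}_t$ eventually shrinks to a small neighbourhood of $\vmu$, and then to invoke the upper hemicontinuity of $\vmu' \mapsto i_F(\vmu')$ (Theorem~\ref{th:continuity}, item~3) together with the fact that $\vmu \in \mathcal{C}_t$ under $\mathcal{E}_T$ to pin $I_t$ down to $i_F(\vmu)$.

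First I would control the ``radius'' of $\mathcal{C}_t$ under the forced-exploration guarantee. For any $\vmu' \in \mathcal{C}_t$ we have $\sum_k N_{t-1,k} d(\hat{\mu}_{t-1,k}, \mu'_k) \leq \log(f(t-1))$, hence for every coordinate $k$, $d(\hat\mu_{t-1,k}, \mu'_k) \leq \log(f(t-1))/n(t-1)$. Since $d(\cdot,\cdot)$ has a positive-definite local behaviour (it is $C^2$ with locally bounded-below curvature on compacts of $\mathcal{O}$) this forces $\|\vmu' - \hat\vmu_{t-1}\|_\infty \leq g(t-1)$ for some $g(t) \to 0$, as soon as $\log(f(t))/n(t) \to 0$, which holds by the assumption $n(t)/\log(f(t)) \to \infty$. (One must restrict attention to a compact neighbourhood of $\vmu$, which is legitimate once $\hat\vmu_{t-1}$ is close to $\vmu$; under $\mathcal{E}_T$ for $t \geq h(T)$ we have $\vmu \in \mathcal{C}_t$, and combining this with the diameter bound also gives $\|\hat\vmu_{t-1} - \vmu\|_\infty \leq g(t-1)$, so $\mathcal{C}_t \subseteq \{\vmu' : \|\vmu' - \vmu\|_\infty \leq 2 g(t-1)\}$.) Thus under $\mathcal{E}_T$, for $t \geq h(T)$, the region $\mathcal{C}_t$ is contained in a Euclidean ball around $\vmu$ whose radius tends to $0$ as $t\to\infty$, uniformly in $T$.

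Next I would use upper hemicontinuity of $i_F$. Since $i_F$ takes finitely many values (subsets of the finite set $\mathcal{I}$) and is upper hemicontinuous with compact values, there is $\rho > 0$ such that $\|\vmu' - \vmu\|_\infty \leq \rho$ implies $i_F(\vmu') \subseteq i_F(\vmu)$; indeed, take $V$ to be any open set with $i_F(\vmu) \subseteq V$ and $V \cap \mathcal{I} = i_F(\vmu)$ (possible because $\mathcal{I}$ is finite, hence $i_F(\vmu)$ is both open and closed in $\mathcal{I}$), and let $\rho$ be the radius of the neighbourhood $U$ supplied by the definition. Pick $T_\Delta$ large enough that $2 g(t-1) \leq \rho$ for all $t \geq T_\Delta$. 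Then for $t \geq \max(h(T), T_\Delta)$ and under $\mathcal{E}_T$, every $\vmu' \in \mathcal{C}_t$ satisfies $i_F(\vmu') \subseteq i_F(\vmu)$, so $I_t = \bigcup_{\vmu'\in\mathcal{C}_t} i_F(\vmu') \subseteq i_F(\vmu)$. For the reverse inclusion, note that $\vmu \in \mathcal{C}_t$ under $\mathcal{E}_T$ (for $t \geq h(T)$), so $i_F(\vmu) \subseteq I_t$. Hence $I_t = i_F(\vmu)$, and consequently $\min I_t = \min i_F(\vmu) = i_\vmu$ for the fixed total order on $\mathcal{I}$.

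The main obstacle is the first step: turning the KL-sublevel-set constraint defining $\mathcal{C}_t$ into a genuine $\|\cdot\|_\infty$ diameter bound. This requires a quantitative lower bound on $d(\mu, \lambda)$ in terms of $(\mu - \lambda)^2$ that is uniform over a compact neighbourhood of $\vmu$ — standard for one-parameter exponential families but needing the restriction to a compact set, which itself has to be justified by first establishing $\hat\vmu_{t-1}$ is close to $\vmu$ under $\mathcal{E}_T$. Care is also needed that $g$ (equivalently $T_\Delta$) can be chosen independently of $T$, which it can because the bound $\log(f(t-1))/n(t-1)$ depends only on $t$; and that the C-tracking rule actually supplies a valid lower bound $n(t)$ with $n(t)/\log(f(t)) \to \infty$, which follows from its built-in $\sqrt{t}$-scale forced exploration together with the choice $f(t) = \Theta(t^{10})$ from Lemma~\ref{lem:when_concentration_fails}, since $\sqrt{t}/\log t \to \infty$.
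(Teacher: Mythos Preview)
Your proposal is correct and follows the same two-step skeleton as the paper: (i) under $\mathcal{E}_T$ and forced exploration, the confidence region $\mathcal{C}_t$ shrinks around $\vmu$; (ii) upper hemicontinuity of $i_F$ plus $\vmu\in\mathcal{C}_t$ then pin down $I_t=i_F(\vmu)$.

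The one implementation difference worth noting: you measure the shrinking of $\mathcal{C}_t$ in $\|\cdot\|_\infty$ after converting the per-coordinate KL bound $d(\hat\mu_{t-1,k},\mu'_k)\le \log f(t-1)/n(t-1)$ into a Euclidean bound via local curvature, which (as you flag) needs a bootstrap through a compact neighbourhood. The paper instead introduces the symmetrised semi-metric $ch(\vmu,\vmu')=\inf_{\vlambda}\sum_k\bigl(d(\lambda_k,\mu_k)+d(\lambda_k,\mu'_k)\bigr)$ and observes directly that $\vmu,\vmu'\in\mathcal{C}_t$ gives $ch(\vmu,\vmu')\le 2\log f(t-1)/n(t-1)$ by taking $\vlambda=\hat\vmu_{t-1}$; the KL-versus-Euclidean comparison is then needed only once, at the fixed point $\vmu$, to transfer the $\|\cdot\|_\infty$-based hemicontinuity of $i_F$ to a $ch$-ball. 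This sidesteps the compactness bootstrap and yields the explicit threshold $T_\Delta=\inf\{t:2\log f(t)/n(t)\le\Delta\}$, but the net content of the two arguments is the same.
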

Proof in Appendix~\ref{sec:proof_lem_convergence_oracle_answer}. Note that Lemma~\ref{lem:convergence_oracle_answer} depends only on the amount of forced exploration and not on other details of the algorithm. Any algorithm using C-tracking verifies the hypothesis with $n(t) = \sqrt{t + K^2} - 2K$ by Lemma~\ref{lem:tracking} \citep[Lemma 7]{GK16}.

\paragraph{Convergence to the neighbourhood of $(\vmu, \w^*(\vmu, \neg i_\vmu))$.} Once $i_t=i_\vmu$, we fall back to tracking points from a convex set of oracle weights. The estimate $\hat{\vmu}_t$ and $N_t/t$ both converge, to $\vmu$ and to the set $\w^*(\vmu, \neg i_\vmu)$.
The Lemma below is proved in Appendix~\ref{sec:proof_lem_N_t_converges_to_w*}.

\begin{lemma}\label{lem:N_t_converges_to_w*}
Let $T_\Delta$ be defined as in Lemma~\ref{lem:convergence_oracle_answer}. For $T$ such that $h(T)\geq T_\Delta$, it holds that on $\mathcal{E}_T^{} \cap \mathcal{E}_T'$ Sticky Track-and-Stop with C-Tracking verifies
\begin{align*}
\forall t\geq h(T), \: \Vert \hat{\vmu}_t - \vmu\Vert_\infty \leq \xi \: ,
\quad
\mbox{and}
\quad
\forall t\geq 4\frac{K^2}{\varepsilon^2} + 3\frac{h(T)}{\varepsilon}, \: \inf_{\w\in \w^*(\vmu, \neg i_\vmu)} \Vert \frac{\bm{N}_t}{t} - \w\Vert_\infty \leq 3\varepsilon \: .
\end{align*}
\end{lemma}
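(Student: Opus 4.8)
The plan is to prove Lemma~\ref{lem:N_t_converges_to_w*} in two parts, mirroring its statement. The first claim --- that $\|\hat{\vmu}_t - \vmu\|_\infty \le \xi$ for all $t \ge h(T)$ --- is immediate from the definition of the event $\mathcal{E}_T' = \bigcap_{t=h(T)}^T \{\|\hat{\vmu}_t - \vmu\|_\infty \le \xi\}$, but one has to be slightly careful: $\mathcal{E}_T'$ only controls $t$ in the range $[h(T), T]$, and here we are also making a claim for $t > T$; I would note that the relevant statement is used inside the analysis only for $t \le T$ (or else restate it that way), so this causes no trouble. The substance of the lemma is the second claim, about $N_t/t$ converging to within $3\varepsilon$ of $\w^*(\vmu, \neg i_\vmu)$.

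For that, first invoke Lemma~\ref{lem:convergence_oracle_answer}: since $h(T) \ge T_\Delta$, on $\mathcal{E}_T$ we have $i_t = i_\vmu$ for all $t \ge h(T)$. Hence from time $h(T)$ onwards the algorithm is exactly C-tracking on the single-answer restricted problem with alternative $\neg i_\vmu$: at each such step it picks $\w_t \in \w^*(\hat{\vmu}_{t-1}, \neg i_\vmu)$ and feeds it to the C-tracking rule. Now I would split the cumulative weight vector $\sum_{s=1}^t \w_s^{\varepsilon_s}$ (the $\varepsilon_s$-projected weights actually tracked) into the part before $h(T)$ and the part after. The prefix $\sum_{s=1}^{h(T)} \w_s^{\varepsilon_s}$ has $\ell_\infty$-norm at most $h(T)$, so it contributes at most $h(T)/t$ to $N_t/t$ --- negligible once $t \ge 3h(T)/\varepsilon$. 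For the suffix, I want to apply Lemma~\ref{lem:average_w_converges_to_optimal_if_convex} (the Cesàro-convergence lemma for convex oracle-weight sets): the Proposition in Section on Convexity tells us $\w^*(\vmu, \neg i_\vmu)$ is convex, and on $\mathcal{E}_T'$ we have $\hat{\vmu}_{s-1} \to$ a $\xi$-neighbourhood of $\vmu$; combined with Corollary~\ref{lem:w*_continuity} (with $i = i_\vmu$), each $\w_s \in \w^*(\hat{\vmu}_{s-1}, \neg i_\vmu)$ lies within $\varepsilon$ of $\w^*(\vmu, \neg i_\vmu)$, and the projection onto $\triangle_K^{\varepsilon_s}$ moves it by a further $o(1)$ (bounded by $K\varepsilon_s \to 0$). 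Averaging these perturbed weights keeps the Cesàro mean within, say, $2\varepsilon$ of the convex set $\w^*(\vmu, \neg i_\vmu)$ for $t$ large (quantified by the $4K^2/\varepsilon^2$ term, which is where the C-tracking concentration Lemma~\ref{lem:tracking} of \citep{GK16} enters to bound $\|N_t - \sum_s \w_s^{\varepsilon_s}\|_\infty$ by $O(\sqrt{t} + K^2/\varepsilon)$-type quantities). Putting the prefix term, the projection slack, the closeness-of-$\w_s$ slack, and the $\|N_t - \sum \w_s^{\varepsilon_s}\|$ tracking slack together and choosing the thresholds $4K^2/\varepsilon^2$ and $3h(T)/\varepsilon$ exactly absorbs all four into the total budget $3\varepsilon$.

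The main obstacle I anticipate is the bookkeeping of the "warm-up" window $[1, h(T))$, during which $i_t$ need not equal $i_\vmu$ and the tracked weights $\w_s$ are essentially arbitrary in $\triangle_K$. One must be sure that this window's contribution is genuinely $O(h(T)/t)$ and not worse --- in particular that C-tracking's guarantee $N_{t,k} \approx \sum_{s\le t} w_{s,k}^{\varepsilon_s}$ holds uniformly, including across the regime change at $h(T)$, which it does because Lemma~\ref{lem:tracking} bounds the tracking error by a quantity depending only on $t$ and $K$, independent of how the weights were chosen. The secondary subtlety is the interaction between the three shrinking quantities ($\varepsilon_s$ in the projection, the continuity radius $\xi$ from the corollary, and the Cesàro convergence rate) --- but since $\xi$ is fixed once $\varepsilon$ is fixed (via Corollary~\ref{lem:w*_continuity}), and $\varepsilon_s \to 0$ is under our control, the only real quantitative content is the $4K^2/\varepsilon^2 + 3h(T)/\varepsilon$ threshold, obtained by demanding each of the four error terms be at most (roughly) $\varepsilon$ and solving for $t$.
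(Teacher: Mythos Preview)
Your approach is correct and essentially identical to the paper's: invoke Lemma~\ref{lem:convergence_oracle_answer} to freeze $i_t = i_\vmu$ after time $h(T)$, split the cumulative weight sum into a prefix of length $h(T)$ and a suffix, use upper hemicontinuity (Corollary~\ref{lem:w*_continuity}) together with convexity of $\w^*(\vmu,\neg i_\vmu)$ (via Lemma~\ref{lem:convex_implies_average_also_converges_to_set}) to keep the suffix average within $\varepsilon$ of the target set, and use the C-tracking Lemma~\ref{lem:tracking} to bound $\|N_t - \sum_s \w_s\|_\infty$. The only cosmetic difference is that the paper's tracking lemma already absorbs the $\varepsilon_s$-projection error into its $K(1+\sqrt t)$ bound, so the paper works with three error terms ($2K/\sqrt t + h(T)/t + \varepsilon$) rather than your four; the resulting threshold $4K^2/\varepsilon^2 + 3h(T)/\varepsilon$ is obtained exactly as you describe.
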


\paragraph{Remainder of the proof.}

Let $T_\Delta$ be defined as in Lemma~\ref{lem:convergence_oracle_answer}. Let $T$ be such that $h(T)\geq T_\Delta$. Let $\eta(T) = 4\frac{K^2}{\varepsilon^2} + 3\frac{h(T)}{\varepsilon}$ .
By Lemma \ref{lem:N_t_converges_to_w*}, for $t\geq \eta(T)$, if $\mathcal{E}_T^{}\cap \mathcal{E}_T'$ then $D(N_t, \hat{\vmu}_t, \neg i_\vmu) \geq t C^*_{\varepsilon, \xi}(\vmu)$.

We now apply Lemma~\ref{lem:optimal_pulling_implies_small_stopping_time}. $\eta(T)< T-1$ if $h(T) < \frac{\varepsilon}{3}(T-1) - \frac{4}{3}\frac{K^2}{\varepsilon}$. For $h(T) = \sqrt{T}$ and $T$ bigger than a constant $T_\eta$ depending on $K$ and $\varepsilon$, this is true. Then under $\mathcal{E}_T^{}\cap \mathcal{E}_T'$, the hypotheses of Lemma~\ref{lem:optimal_pulling_implies_small_stopping_time} are verified with $T_1 = h^{-1}(\max(T_\Delta, T_\eta))$.
 
The hypotheses of Lemma~\ref{lem:tau_delta_decomposition} are verified for
$
T_0
= \max(T_1, \inf\{T: 1 + \frac{\beta(T,\delta)}{C^*_{\varepsilon, \xi}(\vmu)} \leq T\})
$ .

Note that $\lim_{\delta \to 0} \frac{T_0}{\log(1/\delta)} = \frac{1}{C_{\varepsilon, \xi}^*(\vmu)}$. Taking $\varepsilon\to 0$ (hence $\xi\to 0$ as well), we obtain $\lim_{\delta\to 0}\frac{\ex_\vmu[\tau_\delta]}{\log(1/\delta)} = \frac{1}{\lim_{\varepsilon\to 0}C_{\varepsilon,\xi}^*(\vmu)} = \frac{1}{D(\vmu)}$ . We proved Theorem~\ref{th:STaS_optimal_sample_complexity}.

\section{Vanilla Track and Stop Fails for Multiple Answers}\label{sec:failure_tas}

\begin{figure}
  \subfigure[Histogram of stopping time $\tau$]{%
    \includegraphics[width=.5\textwidth]{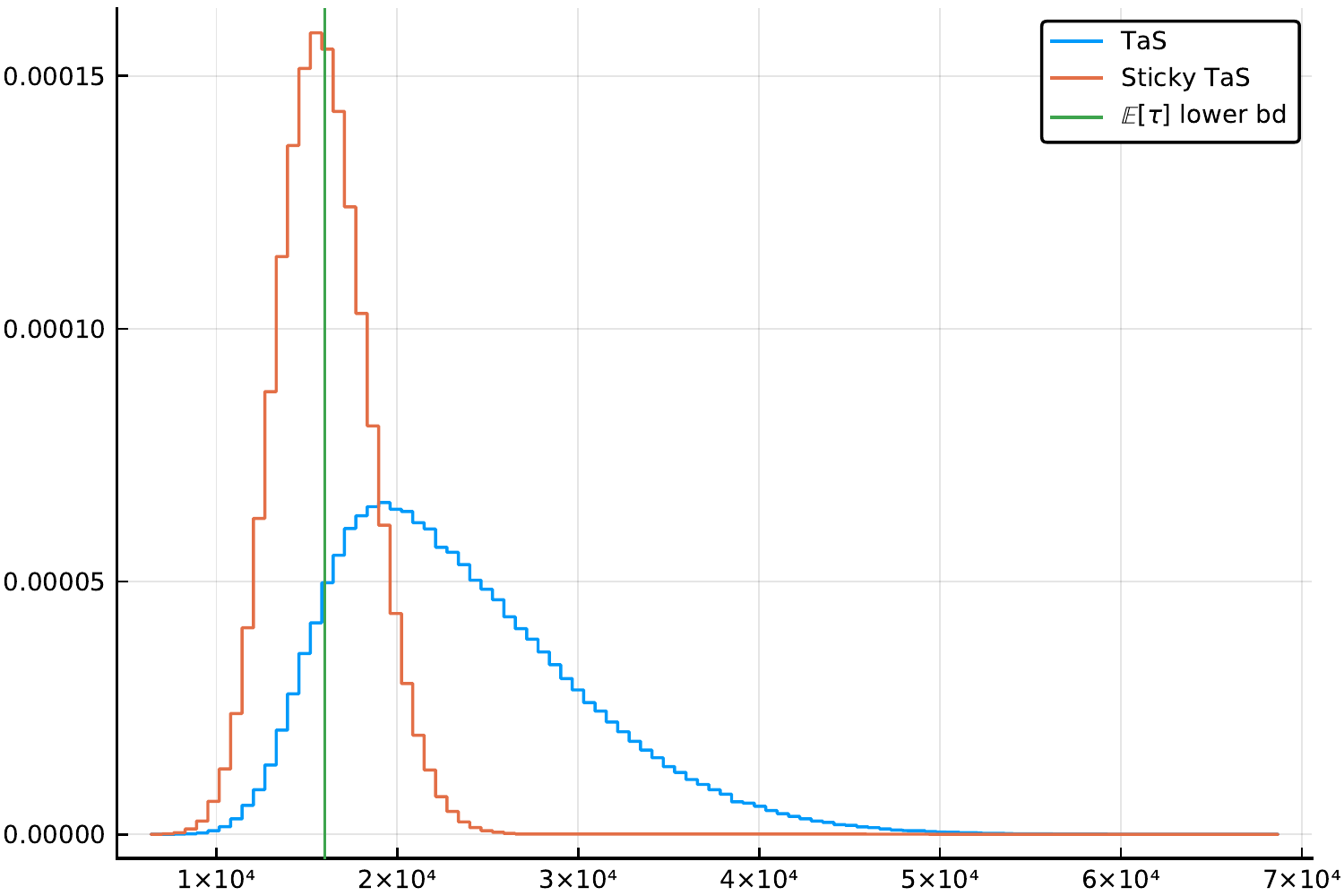}
  }
  \subfigure[Distance of $\bm{N}_\tau/\tau$ to $\w^*(\vmu)$ for TaS]{%
    \includegraphics[width=.5\textwidth]{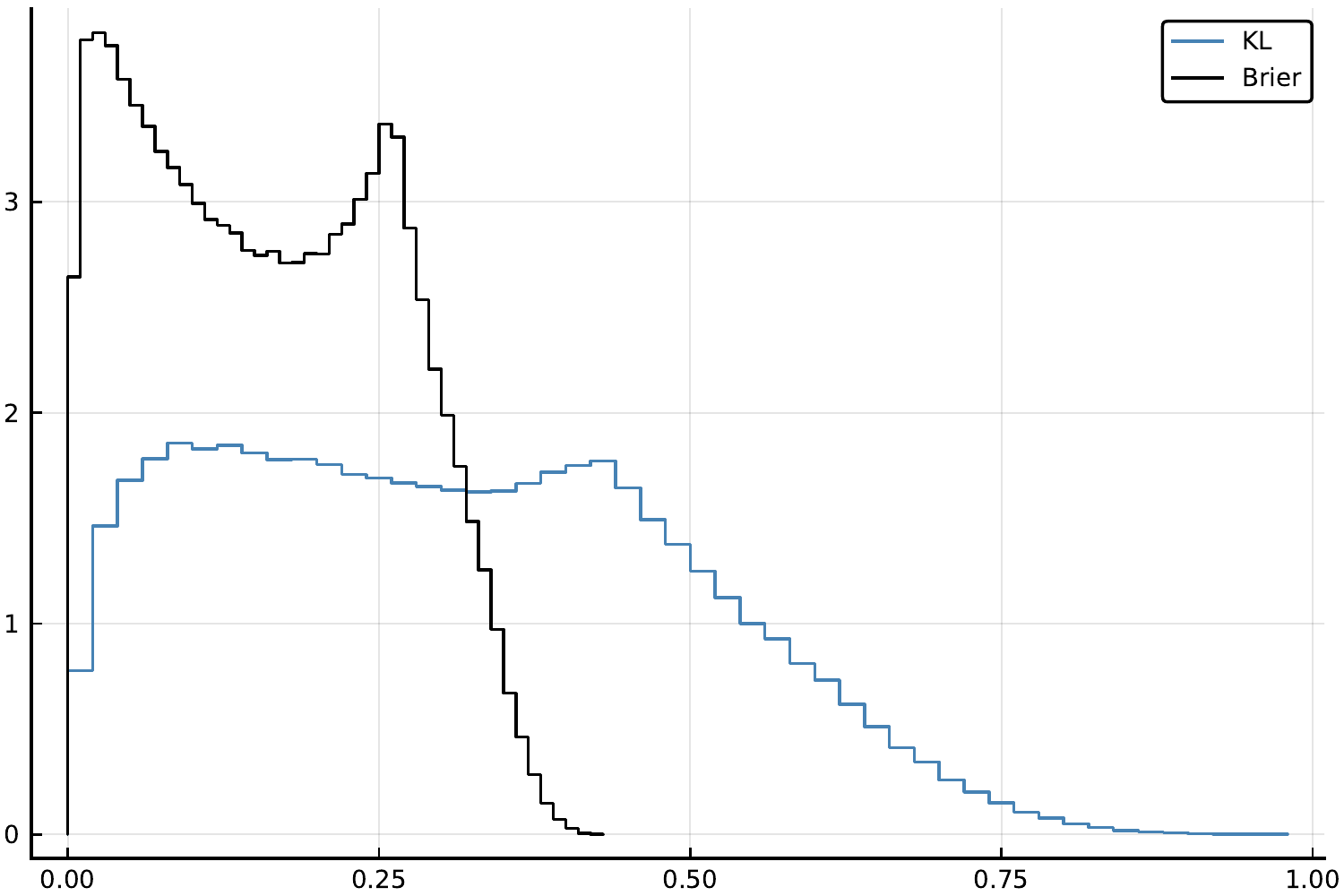}
  }
  \caption{Suboptimality of Track-and-Stop with $C$-tracking on a $K=10$ arm instance of the \textit{Any Half-Space} problem, where each $\mu_k = -1/10$. We run the algorithm with an excessively small $\delta = e^{-80}$ to focus on the asymptotic regime, using $500K$ repetitions. \textit{Left}: empirical distribution of the stopping time of Track-and-Stop and of Sticky Track-and-Stop, with resprctive means 24K and 16K. \textit{Right}: the reason for the suboptimality is that the sampling proportions of Track-and-Stop do not converge to the oracle weights.
  }\label{fig:TaS.fails}
\end{figure}

We argue that Track and Stop in general does not ensure convergence of $N_t/t$ to $\w^*(\vmu)$ when that set is not convex. We illustrate our claim on the \textit{Any Half-Space} problem, a generalisation of \emph{Any Sign} from Table~\ref{tab:problems}. Given $n\in\N$ hyperplanes of $\R^K$ passing through 0, parametrized for $m\in[n]$ by normal vectors $\u_m \in \R^K$ with $\Vert \u_m \Vert_1 = 1$, the algorithm has to return $(m,s) \in [n]\times\{-1,1\}$ such that $ s \vmu^\top \u_m \geq 0$. i.e. it must return any of the half-spaces in which $\vmu$ lies. See Figure~\ref{fig:halfspace}. The arms have Gaussian distributions with variance 1.

This problem is chosen for the simplicity of its $\w^*$ mapping. Indeed $\w^*(\vmu, \neg (m,s))) = \{\u_m\}$ if $s\vmu^\top \u_m\geq 0$ and $\w^*(\vmu, \neg (m,s))) =\triangle_K$ otherwise. The distance to that alternative is $D(\vmu, \neg (m,s)) = \mathbb{I}\{s\vmu^\top \u_m \geq 0\} (\vmu^\top \u_m)^2$. The optimal weights set $\w^*(\vmu)$ is the union of those $\{\u_m\}$ for which the distance is the greatest, and can be non-convex.

\begin{wrapfigure}[15]{r}{.3\textwidth}
  \centering
\begin{tikzpicture}[
  pt/.style={fill,circle,inner sep=.25ex},
  ]

  \fill[orange,semitransparent] (0,0) -- (-2,-2) -- (-2,1) -- cycle;
  \fill[yellow,semitransparent] (0,0) -- (-2,-2) -- (1,-2) -- cycle;

  \draw (-.5,1) -- (1,-2);
  \draw (1,-.5) -- (-2,1);
  \draw[dotted] (0,0) -- (-2,-2);
  \node at (.6,-1.2) [pin=0:{$\u_1$}] {};
  \node[pt,label=45:{$(0,0)$}] at (0,0) {};
  \coordinate (hmu) at (-.7,-1.4);
  \node[pt] at (hmu) [label=270:{$\hat\vmu_t$}] {};
  \node[pt] at (-1,-1) [label={$\vmu$}] {};
\end{tikzpicture}
\caption[Short]{\emph{Any Half-Space} problem.
  The oracle answers $i_F(\vmu)$ are
  \onebox{orange}~$\set{\u_1}$ and
  \onebox{yellow}~$\set{\u_2}$ (and both on the diagonal).
}\label{fig:halfspace}
\end{wrapfigure}

For $a\in[0,1]$, let $K=2$, $n=2$, $\u_1 = (a, 1-a)$, $\u_2=(1-a, a)$ and $\vmu = (\mu_0, \mu_0)$ for some $\mu_0\in\R$. Suppose that after stage $t_0$ of Track and Stop, $\hat{\vmu}_t$ verifies that $\hat{\vmu}_t^\top\u_m$ has same sign as $\vmu^\top\u_m$ for both $m\in\{1, 2\}$ (in expectation this happens except on at most a finite number of stages). Then $\w^*(\hat{\vmu}_t) = \{\u_1\}$ iff $\hat{\mu}_{t,1} > \hat{\mu}_{t,2}$ and $\w^*(\hat{\vmu}_t) = \{\u_2\}$ iff $\hat{\mu}_{t,1} < \hat{\mu}_{t,2}$. The case $\hat{\mu}_{t,1}=\hat{\mu}_{t,2}$ has probability 0, hence we ignore it.

C-tracking ensures that $N_t/t$ is close to $\sum_{s=1}^t \w_s$ for $\w_s\in\w^*(\hat{\vmu}_s)$. Calling $T_1(t)$ the number of stages up to $t$ for which $\hat{\mu}_{t,1}>\hat{\mu}_{t,2}$ and neglecting the first $t_0$ stages, $N_t/t \approx T_1(t) \u_1 + (1-T_1(t))\u_2$.  In the nomenclature of random walks, $T_1(t)$ is the occupation time of the region below the diagonal on Figure~\ref{fig:halfspace}.

In order for Track and Stop to be optimal, $N_t/t$ need to be close to $\w^*(\vmu) = \{\u_1, \u_2\}$ at $t_{opt} = \log(1/\delta)/D(\vmu)$. For $\u_1\neq \u_2$ this means that the distribution of $T_1(t_{opt})/t_{opt}$ must be concentrated on $\{0,1\}$. If the limit distribution of $T_1(t)/t$ (assuming it exists) for $t\to+\infty$ has mass in $(0,1)$, Track and Stop likely has suboptimal asymptotic sample complexity.

In the case of $a=1/2$, $\u_1=\u_2=(1/2,1/2)$, fot $t$ even $N_{t,1}=N_{t,2}$ and $T_1(t)=\#\{s\leq t: \sum_{u=1}^{s/2}(X_{2u}^{(1)} - X_{2u+1}^{(2)})>0\}$ is the occupation time of $\R^+$ for a Gaussian random walk. Its limit distribution is the Arcsine distribution. But in that case $N_t/t$ is always optimal. Experimentally, when $a\neq 1/2$ (hence $\u_1\neq \u_2$ and $N_t/t$ not always optimal), we observe that the limit distribution for $T_1(t)/t$ is not Arcsine, but has mass in $(0,1)$ for $a\in(0,1)$. See Figure~\ref{fig:pseudo_arcsine}.

Figure~\ref{fig:TaS.fails} displays the stopping time of Track and Stop on such an hyperplane problem for $K=n=10$ and shows that Track and Stop is empirically suboptimal and that $N_\tau/\tau$, proportions of pulls at the stopping time, is not concentrated near $\w^*(\vmu)$.

\begin{figure}
	\subfigure[Histogram of pulling proportion $N_{t,1}/t$ for TaS.]{
    \includegraphics[width=.5\textwidth]{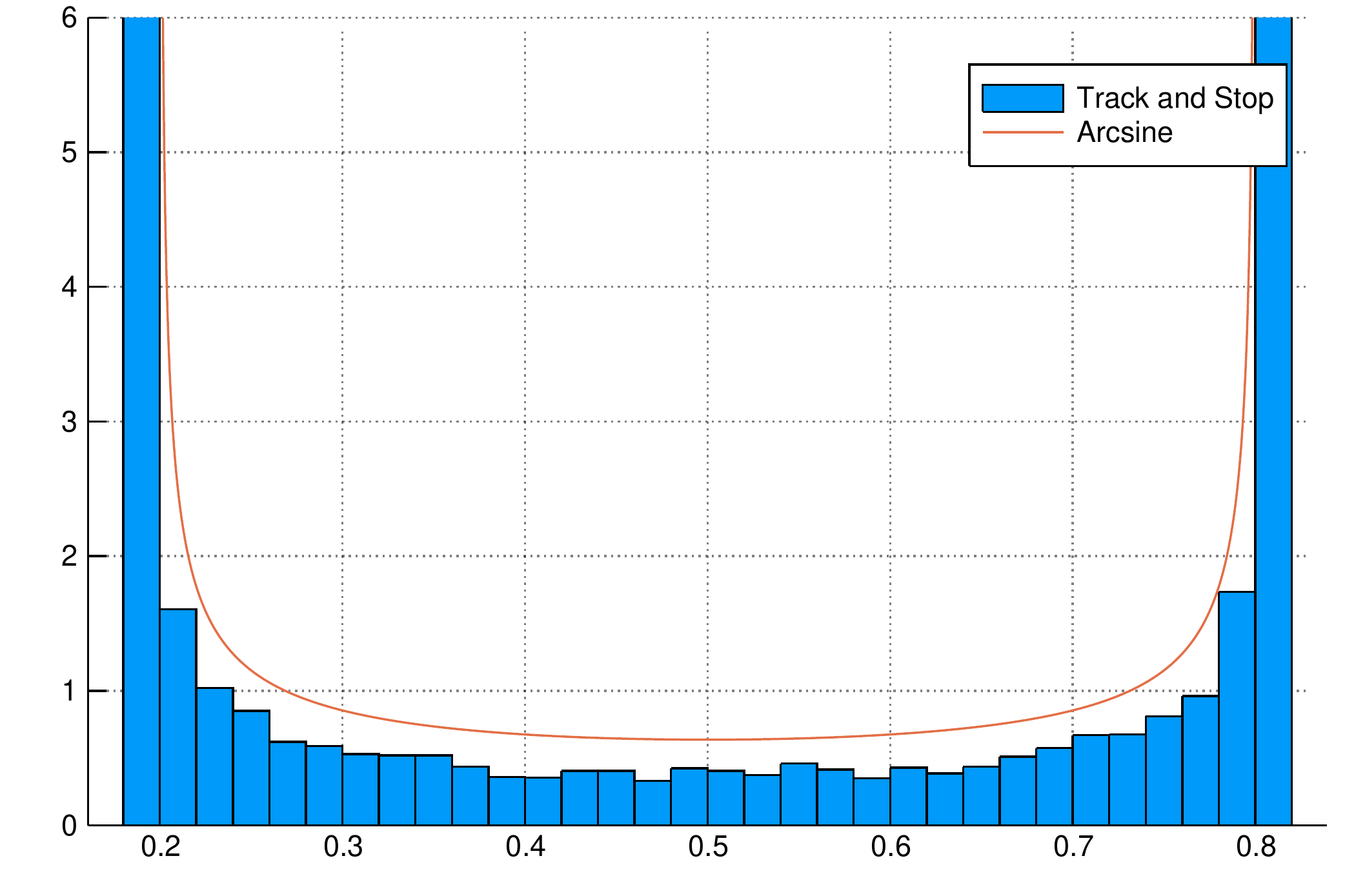}
  }
  \subfigure[Proportion of runs with $N_{t,1}/t$ in the interval $(1.01\times a,(1-a)/1.01)$ for TaS.]{
    \includegraphics[width=.5\textwidth]{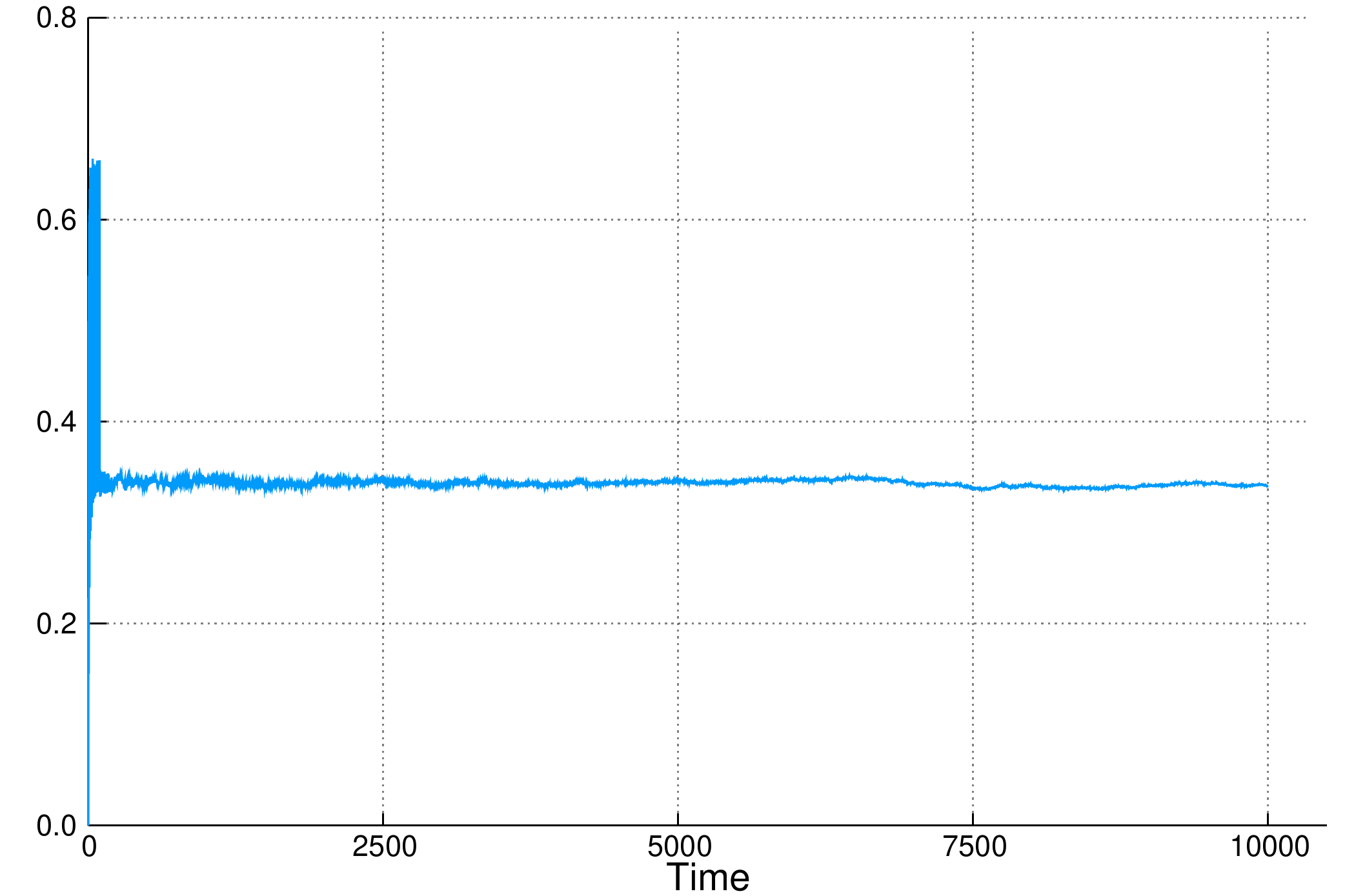}
  }
  \caption{Pulling proportions of Track-and-Stop with $C$-tracking on the \textit{Any Half-Space} problem with $K=2$ and $a=1/5$. Both figures use data from the same 10000 runs of Track-and-Stop. \textit{Left}: histogram at $T=10000$ of $N_{t,1}/t$, normalized such that the total area of the bars is 1. The values of the leftmost and rightmost bars are $\approx 15$. The probability distribution function of the Arcsine distribution is shown for comparison. \textit{Right}: evolution over time of the mass in the interval $(1.01\times a,\frac{1-a}{1.01})$ (non-extremal bars on the left).
  }\label{fig:pseudo_arcsine}
\end{figure}


\section{Conclusion}

We characterized the complexity of multiple-answers pure exploration bandit problems, showing a lower bound and exhibiting an algorithm with asymptotically matching sample complexity on all such problems. That study could be extended in several interesting directions and we now list a few.
\begin{itemize}[wide,nosep]
\item The computational complexity of Track-and-Stop is an important issue: it would be desirable to design a pure exploration algorithm with optimal sample complexity which does not need to solve a min-max problem at each step. Furthermore, the same would need to be done for the sticky selection of an answer for the multiple-answers setting.
\item Both lower bounds and upper bounds in this paper are asymptotic. In the upper bound case, only the forced exploration rounds are considered when evaluating the convergence of $\hat{\vmu}_t$ to $\vmu$, giving rise to potentially sub-optimal lower order terms. A finite time analysis with reasonably small $o(\log(1/\delta))$ terms for an optimal algorithm is desirable. In addition, while selecting one of the oracle answers to stick to has no asymptotic cost, it could have a lower order effect on the sample complexity and appear in a refined lower bound.
\item Current tools in the theory of Brownian motion are insufficient to characterise the asymptotic distribution of proportions induced by tracking, even for two arms. Without tracking the Arcsine law arises, so this slightly more challenging problem holds the promise of similarly elegant results.

\item Finally, the multiple answer pure exploration setting can be extended in various ways. Making $\mathcal I$ continuous leads to regression problems. The parametric assumption that the arms are in one-parameter exponential families could also be relaxed.
\end{itemize}

\newpage

\bibliography{bib}

\newpage
\appendix

\section{Notations}

\begin{table}[H]
  \begin{tabular}{lll}
    \textbf{Concept} & \textbf{Symbol}
    \\
    Exponential family mean parameter & $\mu \in \mathcal O$
    \\
    Bandit model & $\vmu\in\mathcal{M} \subseteq \mathcal{O}^K$
    \\
    Possible answers & $\mathcal{I}$
    \\
    Correct answer for bandit model $\vmu$ & $i^*(\vmu):\mathcal{M}\to 2^{\mathcal{I}}$
    \\
    Alternative to $i\in\mathcal{I}$ & $\neg i = \{\vmu\in\mathcal{M} : i\notin i^*(\vmu)\}$
    \\
    $K$-Simplex & $\triangle_K$
    \\
    Non-negative orthant & $\mathcal{Q}_+$
    \\
    interior, closure, convex hull of set $A$ & $\mathring{A}$, $\text{cl}(A)$, $\conv(A)$
    \\
    Oracle answers and weights & $i_F(\vmu):\mathcal{M}\to 2^{\mathcal{I}}$, $\w^*(\vmu):\mathcal{M} \to \mathcal{P}(\triangle_K)$
    \\
    Bandit arm & $k \in [K]$
    \\
    Number of samples of arm $k$ at time $t$ & $N_{t,k}$
    \\
    mean of samples of arm $k$ at time $t$ & $\hat{\mu}_{t,k}$
  \end{tabular}
\end{table}

\section{Lower bound Proofs}\label{pf:lbd}
In this section we build up to the proof of the lower bound Theorem~\ref{thm:lbd}. We start with the minimax result Lemma~\ref{lem:nash_finitely_supported}.

\subsection{Lemma~\ref{lem:nash_finitely_supported}: Characteristic time as the value of a game}
Let $\neg j$ be the set of bandit problems for which $j$ is not a valid answer. Let us define the characteristic time by
\[
  \frac{1}{T^*_j(\vmu)}
  ~=~
  D(\vmu, \neg j)
  ~=~
  \sup_\w \inf_{\vlambda \in \neg j} \sum_k w_k d(\mu_k, \lambda_k)
  .
\]
We now view the characteristic time problem as defining a two-player zero-sum game, with the purpose of obtaining a minimax optimal mixed strategy for the $\inf$ player. We will use such a mixed strategy to construct hard learning problems for proving sample complexity lower bounds in the next section. In this section we focus on the existence of the minimax strategy. We define the \emph{bandit complexity} game to be the semi-infinite two-player zero-sum simultaneous game where:
\begin{itemize}
\item MAX's pure strategies are arms $i \in [K] = \{1, \ldots, K\}$,
\item MIN's pure strategies are bandit models $\vlambda \in \neg j \subseteq \mathcal M$ (we may equivalently have MIN play a point $s \in S \df \setc*{\del[\big]{d(\mu_1, \lambda_1), \ldots, d(\mu_K, \lambda_K)}}{\vlambda \in \neg j} \subseteq [0,\infty)^K$),
\item the payoff function is $(i, \vlambda) \mapsto d(\mu_i, \lambda_i)$ (or, equivalently, $(i, s) \mapsto s_i$).
\end{itemize}
By definition, $D(\vmu, \neg j)$ is the optimal payoff when MAX randomises and plays first. We aim to show that a matching randomised strategy exists for when MIN plays first. That is, we want to establish a min-max theorem.
\begin{proof}[of Lemma~\ref{lem:nash_finitely_supported}]
  Combining (a) a standard application of Sion's minimax theorem to the bilinear function $f : \triangle \times \conv(S) \to \R$ defined by $f(\w, s) = \tuple{\w,s}$ and (b) the support size insight of \citet[Theorem~2.4.2]{BG54} yields the Lemma.
\end{proof}

For convenience, we will assume in the remainder that the infimum above is attained (e.g.\ when the convex hull of $S$ is compact), possibly on the closure of $\neg j$. If it is not, we need to apply the below arguments to a sequence of $\epsilon$-suboptimal $\pr$ and let $\epsilon \to 0$. At any rate, we assume there exist  $\vlambda^1,\ldots,\vlambda^K \in \neg j$ (or its closure) and $\q\in \triangle_K$ such that
\begin{equation}\label{eq:saddle}
  \forall i :
  \sum_k q_k d(\mu_i, \lambda^k_i)
  ~\le~
  D(\vmu, \neg j)
  .
\end{equation}


\subsection{Consequences of the Minimax result}
In this section we build up lower bounds by relating the probability of any event between two or more bandit problem.
We start with a useful change of measures observation, used in \citep{eps.del.PAC.bai} to derive a lower bound on the sample complexity of $\varepsilon$-Best Arm Identification.
\begin{proposition}
Consider two distributions $\pr$ and $\mathbb Q$.
Let us denote the log-likelihood ratio after $n$ rounds by $L_n = \ln \frac{\dif \pr}{\dif \qr}$. Then for any measurable event $A \in \mathcal F_n$ and threshold $\gamma \in \mathbb R$,
\begin{equation}\label{eq:com}
  \pr(A)
  ~\le~
  e^\gamma \qr(A)
  +
  \pr\set*{L_n > \gamma}
  .
\end{equation}
\end{proposition}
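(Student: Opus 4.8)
The plan is to split the event $A$ according to the value of the log-likelihood ratio relative to the threshold $\gamma$. Writing $\pr(A) = \pr\del*{A \cap \set{L_n \le \gamma}} + \pr\del*{A \cap \set{L_n > \gamma}}$, I would discard the second summand with the trivial bound $\pr\del*{A \cap \set{L_n > \gamma}} \le \pr\set*{L_n > \gamma}$, so that the entire content of the statement reduces to proving $\pr\del*{A \cap \set{L_n \le \gamma}} \le e^\gamma \qr(A)$.

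For that term I would change measure from $\pr$ to $\qr$. On the set $\set{L_n \le \gamma}$ the measure $\pr$ is absolutely continuous with respect to $\qr$ (on the complementary singular part the density is infinite and $L_n = +\infty$, which is excluded by $L_n \le \gamma$), and there the Radon--Nikodym derivative satisfies $\dif\pr/\dif\qr = e^{L_n} \le e^\gamma$. Hence
\[
  \pr\del*{A \cap \set{L_n \le \gamma}}
  ~=~ \ex_\qr\sbr*{e^{L_n}\, \mathbb I_{A \cap \set{L_n \le \gamma}}}
  ~\le~ e^\gamma\, \qr\del*{A \cap \set{L_n \le \gamma}}
  ~\le~ e^\gamma\, \qr(A) ,
\]
and adding the two bounds yields~\eqref{eq:com}.

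The argument is essentially one line, so I do not anticipate a real obstacle; the only point that deserves care is the measure-theoretic treatment of the part of $\pr$ that is singular with respect to $\qr$. This is benign: there $L_n$ equals $+\infty$, which lies in the event $\set{L_n > \gamma}$, so it inflates only the $\pr\set{L_n > \gamma}$ term and never the change-of-measure term. Moreover, in the bandit application the arm laws involved share a common support (all members of a one-parameter canonical exponential family do), so in fact $\pr \ll \qr$ on each $\sigma$-algebra $\mathcal F_n$ generated by a length-$n$ history, and even this subtlety is moot.
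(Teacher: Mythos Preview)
Your proof is correct and follows essentially the same approach as the paper: split $A$ according to whether $L_n \le \gamma$, control the piece on $\set{L_n \le \gamma}$ by a change of measure together with the bound $e^{L_n}\le e^\gamma$, and bound the remaining piece trivially by $\pr\set{L_n>\gamma}$. The paper happens to write the change of measure in the reverse direction (starting from $\qr(A)=\ex_\pr[e^{-L_n}\mathbb I_A]$ and lower-bounding), but the underlying argument is identical.
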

\begin{proof}
  \begin{align*}
    \qr(A) = \ex_{\pr}[\mathbb{I}_A e^{-L_n}]
    &\geq \ex_{\pr}[\mathbb{I}_{A\cap\{L_n\leq \gamma\}} e^{-L_n}]\\
    &\geq \pr(A\cap\{L_n\leq \gamma\}) e^{-\gamma}
    \leq e^{-\gamma} (\pr(A) - \pr\{L_n > \gamma\}) \: .
  \end{align*}

\end{proof}

\subsection{Likelihood ratio Martingales}

Next we investigate the specific form of the likelihood ratio between two bandit models. Fix bandit models $\vmu$ and $\vlambda$, and any sampling strategy. Then after $n$ rounds,
\begin{align*}
  \ln \frac{\dif \pr_\vmu}{\dif \pr_\vlambda }
  &~=~
  \sum_i N_{n,i} \KL(\nu_{\mu_i,i}, \nu_{\lambda_i,i})
  + M_n(\vmu, \vlambda)
\end{align*}
where $ M_n(\vmu, \vlambda)$ is a martingale. To see this, we write $\KL(\nu_{\mu,i}, \nu_{\lambda,i}) = d(\mu, \lambda) = \phi(\mu) - \phi(\lambda) - (\mu-\lambda) \phi'(\lambda)$, where we write $\phi$ for the convex generator of the Bregman divergence $d(\cdot, \cdot)$. Then
\begin{align*}
  \ln \frac{\dif \pr_\vmu}{\dif \pr_\vlambda }
  &~=~
  \sum_i N_{n,i}
  \del*{
    d(\hat \mu_{n,i}, \lambda_i)
    - d (\hat \mu_{n,i}, \mu_i)
    }
  \\
  &~=~
  \sum_i N_{n,i}
    \del[\Big]{
    d(\mu_i, \lambda_i)
    + \del[\big]{\phi'(\mu_i) - \phi'(\lambda_i)} \del[\big]{\hat \mu_{n,i} - \mu_i}
    }
\end{align*}
hence $M_n(\vmu, \vlambda) = \sum_i N_{n,i} \del[\big]{\phi'(\mu_i) - \phi'(\lambda_i)} \del[\big]{\hat \mu_{n,i} - \mu_i}$.

\subsection{Exploiting the Minimax Distribution}
We now bound the probability of any event between $\vmu$ and the hard problems given by the minimax distribution.

\begin{lemma}\label{lem:martingaled}
  Fix a bandit model $\vmu$ with sub-Gaussian arm distributions. Let $\q$ and $\vlambda^1, \ldots, \vlambda^K$ be a minimax witness from Lemma~\ref{lem:nash_finitely_supported}, and let us introduce the abbreviation $\alpha_i = \phi'(\mu_i) - \sum_k q_k \phi'(\lambda^k_i)$. Fix sample size $n$, and consider any event $A \in \mathcal F_n$. Then for any $\beta > 0$
\[
  \max_{k \in [K]}~ \pr_{\vlambda^k}\set{A}
  ~\ge~
  e^{-\frac{n}{T^*(\vmu)} - \beta}
  \del*{\pr_\vmu\set*{A} - \exp \del*{\frac{-\beta^2}{2 n \max_i \alpha_i^2}}}
.
\]
\end{lemma}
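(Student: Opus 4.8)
# Proof Plan for Lemma~\ref{lem:martingaled}

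The plan is to compare $\pr_\vmu$ not against a single hard model but against the $\q$-mixture $\bar\pr \df \sum_k q_k \pr_{\vlambda^k}$, since $\max_k \pr_{\vlambda^k}\set*{A} \ge \bar\pr\set*{A}$, so it suffices to prove the stated lower bound for $\bar\pr\set*{A}$. Applying the change-of-measure inequality~\eqref{eq:com} with $\pr$ taken to be $\pr_\vmu$ and $\qr$ to be $\bar\pr$, with $L_n \df \ln\frac{\dif\pr_\vmu}{\dif\bar\pr}$, and rearranging, gives for any threshold $\gamma$
\[
  \bar\pr(A) ~\ge~ e^{-\gamma}\del*{\pr_\vmu(A) - \pr_\vmu\set*{L_n > \gamma}} .
\]
So the remaining work is to choose $\gamma$ and to control $\pr_\vmu\set*{L_n > \gamma}$.

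The crux is an upper bound on $L_n$ that splits into a deterministic part handled by the saddle point and a martingale part handled by sub-Gaussianity. Writing the likelihood ratio decomposition above as $\ln\frac{\dif\pr_\vmu}{\dif\pr_{\vlambda^k}} = W_k + M_n(\vmu,\vlambda^k)$ with $W_k \df \sum_i N_{n,i} d(\mu_i,\lambda^k_i)$, I would write $e^{-L_n} = \frac{\dif\bar\pr}{\dif\pr_\vmu} = \sum_k q_k e^{-W_k - M_n(\vmu,\vlambda^k)}$ and apply Jensen to the convex map $x\mapsto e^{-x}$, obtaining $L_n \le \sum_k q_k W_k + \sum_k q_k M_n(\vmu,\vlambda^k)$. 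The deterministic part is bounded by summing the saddle point inequality~\eqref{eq:saddle} against the pull counts: $\sum_k q_k W_k = \sum_i N_{n,i}\sum_k q_k d(\mu_i,\lambda^k_i) \le \sum_i N_{n,i}\,D(\vmu,\neg j) = n\,D(\vmu,\neg j) \le n\,D(\vmu) = n/T^*(\vmu)$. For the martingale part, expanding $M_n(\vmu,\vlambda^k) = \sum_i N_{n,i}(\phi'(\mu_i)-\phi'(\lambda^k_i))(\hat\mu_{n,i}-\mu_i)$ and averaging against $\q$ collapses the $k$-dependence into the coefficient $\alpha_i = \phi'(\mu_i) - \sum_k q_k\phi'(\lambda^k_i)$, so that $\bar M_n \df \sum_k q_k M_n(\vmu,\vlambda^k) = \sum_i N_{n,i}(\hat\mu_{n,i}-\mu_i)\alpha_i = \sum_{t=1}^n \alpha_{A_t}(X_t-\mu_{A_t})$ is a $\pr_\vmu$-martingale whose increments are conditionally sub-Gaussian with variance proxy at most $\max_i\alpha_i^2$.

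To finish, take $\gamma = n/T^*(\vmu) + \beta$. Since $L_n \le n/T^*(\vmu) + \bar M_n$, we have $\set*{L_n > \gamma} \subseteq \set*{\bar M_n > \beta}$, and a Hoeffding--Azuma tail bound for a sub-Gaussian martingale over $n$ steps gives $\pr_\vmu\set*{\bar M_n > \beta} \le \exp\del*{\frac{-\beta^2}{2n\max_i\alpha_i^2}}$. Substituting into the displayed lower bound on $\bar\pr(A)$, replacing $\pr_\vmu\set*{L_n>\gamma}$ by the larger $\exp\del*{\frac{-\beta^2}{2n\max_i\alpha_i^2}}$, and using $\max_k \pr_{\vlambda^k}(A) \ge \bar\pr(A)$ yields exactly the claimed inequality.

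The step I expect to be the main obstacle is the linearisation of the mixture log-likelihood ratio: one has to spot that Jensen turns the $\q$-mixture inside $e^{-L_n}$ into a $\q$-average inside the exponent, after which the equilibrium property of $\q$ is precisely what makes the deterministic part at most $n/T^*(\vmu)$, and the $\q$-average of the per-model martingales condenses, through the definition of $\alpha_i$, into a single scalar martingale with bounded (sub-Gaussian) increments --- this last point being exactly where the sub-Gaussian arm assumption enters. A minor caveat already flagged around~\eqref{eq:saddle}: if the infimum in Lemma~\ref{lem:nash_finitely_supported} is not attained, the argument is run with $\epsilon$-suboptimal witnesses $(\q,\vlambda^1,\dots,\vlambda^K)$ and one lets $\epsilon\to 0$ at the end.
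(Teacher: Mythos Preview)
Your proposal is correct and follows essentially the same route as the paper: form the mixture $\bar\pr=\sum_k q_k \pr_{\vlambda^k}$, bound $L_n=\ln\frac{\dif\pr_\vmu}{\dif\bar\pr}$ via Jensen by the $\q$-average of the per-model log-likelihood ratios, use the saddle property~\eqref{eq:saddle} to control the deterministic part by $n/T^*(\vmu)$, collapse the $\q$-averaged martingales into $\sum_i N_{n,i}\alpha_i(\hat\mu_{n,i}-\mu_i)$, apply Hoeffding--Azuma, and combine with~\eqref{eq:com} and $\max_k\pr_{\vlambda^k}(A)\ge\bar\pr(A)$. Your write-up is in fact slightly more explicit than the paper's on two points: you name the Jensen step, and you insert the inequality $D(\vmu,\neg j)\le D(\vmu)$ rather than writing equality.
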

In words, if $A$ is likely under $\vmu$ the it must also be likely under at least one $\vlambda^k$ for sample sizes $n \ll T^*(\vmu)$.

\begin{proof}
Let us form the (Bayesian) mixture distribution $\pr_\q = \sum_k q_k \pr_{\vlambda^k}$. We have
\[
  L_n
  ~=~
  -\ln \frac{\dif \pr_\q}{\dif \pr_\vmu}
  ~\le~
  \sum_k q_k
  \ln \frac{\dif \pr_\vmu}{\dif \pr_{\vlambda^k} }
  .
\]
It follows that for any $\gamma \in \R$ we have
\begin{align}
  \notag
  \set*{L_n > \gamma}
  &~\subseteq~
    \set*{
    \sum_k q_k \sum_i N_{n,i}d(\mu_i, \lambda^k_i) + \sum_k q_k M_n(\vmu, \vlambda^k)
    > \gamma}
  .
\intertext{%
  Picking $\gamma = \frac{n}{T^*(\vmu)} + \beta$, we find
    }
    \notag
&  ~=~
    \set*{
    \sum_k q_k \sum_i N_{n,i}d(\mu_i, \lambda^k_i) + \sum_k q_k M_n(\vmu, \vlambda^k)
    > \frac{n}{T^*(\vmu)}  + \beta}
  %
    \intertext{%
    Since $(\w^*,\q)$ is a Nash equilibrium of the game and $\bm{N}_n/n$ is a mixed strategy for the first player, $\sum_k q_k \sum_i N_{n,i} d(\mu_i, \lambda^k_i) \leq n \sum_k q_k \sum_i w^*_i d(\mu_i, \lambda^k_i) = \frac{n}{T^*(\vmu)}$, so we find}
    \notag
  &  ~\subseteq~
    \set*{
    \sum_k q_k M_n(\vmu, \vlambda^k)
    >  \beta}
  \\
  \notag
  &  ~=~
    \set*{
    \sum_k q_k
    \sum_i N_{n,i} \del[\big]{\phi'(\mu_i) - \phi'(\lambda^k_i)} \del[\big]{\hat \mu_{n,i} - \mu_i}
    >  \beta}
  \\
  \label{eq:crucial.mart}
  &~=~
    \set*{
    \sum_i N_{n,i} \alpha_i \del[\big]{\hat \mu_{n,i} - \mu_i}
    >  \beta}
\end{align}
The above left-hand quantity is a martingale of length $n$. Using the sub-Gaussianity assumption, the Hoeffding-Azuma inequality gives
\[
  \pr_\vmu \set*{L_n > \gamma}
  ~\le~
  \exp \del*{\frac{-\beta^2}{2 n \max_i \alpha_i^2}}
  .
\]
%
Let $A$ be a $\mathcal{F}_n$-measurable event. Combination with the change of measure argument \eqref{eq:com} with  $\max_k~ \pr_{\vlambda^k}\set{A}
  \ge
  \pr_\q \set{A}$ gives the result.
\end{proof}

The sub-Gaussian assumption of the Lemma can undoubtedly be relaxed. The crucial requirement is that the $n$-step martingale in \eqref{eq:crucial.mart} concentrates, and hence cannot be large w.h.p.

We are now ready for the proof of Theorem~\ref{thm:lbd}, in which we will carefully tune the time $n$ to which we apply the above result.

\subsection{Proof of Theorem~\ref{thm:lbd}}

We will bound the expectation of the stopping time $\tau_\delta$ through Markov's inequality. For $T>0$,
\begin{align*}
\ex_\vmu[\tau_\delta]
&\geq T (1 - \pr_\vmu(\tau_\delta \leq T)) \: .
\end{align*}
The event $\{\tau_\delta \leq T\}$ can be partitioned depending on the answer which is returned. Since the algorithm is $\delta$-PAC by hypothesis, $\pr_\vmu(\tau_\delta \leq T, \ihat_\delta \notin i^*(\vmu)) \leq \delta$. So
\begin{align*}
\pr_\vmu(\tau_\delta \leq T)
&= \sum_i \pr_\vmu(\tau_\delta \leq T, \ihat_\delta = i)\\
&\leq \delta + \sum_{i\in i^*(\vmu)} \pr_\vmu(\tau_\delta \leq T, \ihat_\delta = i) \: .
\end{align*}
For $i\in i^*(\vmu)$, fix a minimax strategy $\vlambda^1, \ldots, \vlambda^K \in \neg i$ and $\q \in \triangle_K$ as given by Lemma~\ref{lem:equilibrium}. Then by Lemma~\ref{lem:martingaled}, for any $\beta>0$
\begin{align*}
\pr_\vmu(\tau_\delta \leq T, \ihat_\delta = i)
&\leq \exp\left( \frac{T}{T^*_i(\vmu)} + \beta \right) \max_k\pr_{\vlambda^k}(\tau_\delta \leq T, \ihat_\delta = i)
	+ \exp \del*{\frac{-\beta^2}{2 T \max_i \alpha_i^2}}\\
&\leq \delta \exp\left( \frac{T}{T^*_i(\vmu)} + \beta \right)
	+ \exp \del*{\frac{-\beta^2}{2 T \max_i \alpha_i^2}}
\end{align*}
Let $\alpha^2 = \max_i \alpha_i^2$. For $\eta \in (0,1)$, $T \leq (1-\eta) T^*_i(\vmu) \log(1/\delta)$ and $\beta =  \frac{\eta}{2\sqrt{1-\eta}}\sqrt{ \frac{T}{T_i^*(\vmu)} \log(1/\delta)} $,

\begin{align*}
\pr_\vmu(\tau_\delta \leq T, \ihat_\delta = i)
&\leq \delta \exp\left(
		\frac{T}{T^*_i(\vmu)}
		+ \frac{\eta}{2\sqrt{1-\eta}}\sqrt{ \frac{T}{T_i^*(\vmu)} \log(1/\delta)}
	\right)
	+ \exp \del*{\frac{-\eta^2 \log(1/\delta)}{8 (1-\eta) T_i^*(\vmu)\alpha^2}}\\
&\leq \delta \exp \del*{(1-\eta/2)\log\frac{1}{\delta}}
	+ \exp \del*{\frac{-\eta^2 \log(1/\delta)}{8 (1-\eta) T_i^*(\vmu) \alpha^2}}\\
&= \delta^{\eta/2} + \delta^{\eta^2 / (8 (1-\eta) T_i^*(\vmu)\alpha^2)}\\
&\xrightarrow[\delta \to 0]{} 0
\end{align*}
Suppose that $T = \min_{i\in i^*(\vmu)} (1-\eta)T_i^*(\vmu) \log(1/\delta)$ for some $\eta\in(0,1)$. Then we must have $\lim_{\delta\to 0} \pr_\vmu(\tau_\delta \leq T) = 0 $
and therefore
$\liminf_{\delta\to 0}\frac{\ex_\vmu[\tau_\delta]}{\log(1/\delta)} \geq \lim_{\delta\to 0} \frac{T}{\log(1/\delta)}(1-\pr_\vmu(\tau_\delta \leq T)) = (1-\eta) \min_{i\in i^*(\vmu)} T_i^*(\vmu)$.
Letting $\eta$ go to zero, we obtain that
\begin{align*}
\liminf_{\delta\to 0}\frac{\ex_\vmu[\tau_\delta]}{\log(1/\delta)} \geq \min_{i\in i^*(\vmu)} T_i^*(\vmu) \: .
\end{align*}

\section{Failure of D-tracking}\label{sec:failure_D_tracking}

\newcommand{\yes}{\text{yes}}
\newcommand{\no}{\text{no}}

We illustrate the suboptimality of Track-and-Stop with D-tracking in a single-answer problem. On a 3-arms problem with Gaussian distributions with same variance, the algorithm must answer the two following queries:
\begin{itemize}
\item What is the sign of $\mu_1$?
\item Is there $m\in\{1,2\}$ such that $\vmu^\top \u^{(m)} \geq 0$? For some $a\in(0,1)$, $\u^{(1)} = (0, a, 1-a)$ and $\u^{(2)} = (0, 1-a, a)$.
\end{itemize}
The possible answers are $\{(+,\yes), (+,\no), (-,\yes), (-,\no)\}$, and there is only one correct answer.

The divergence from $\vmu$ to an alternative where the sign is flipped is $D_1(\vmu) = \frac{1}{2}\mu_1^2$. If $\vmu^\top \u^{(m)} < 0$ for both $\u^{(m)}$, then the divergence from $\vmu$ to an alternative where the second answer is yes is $D_{2,3}=\max(\frac{1}{2}(\vmu^\top \u^{(1)})^2, \frac{1}{2}(\vmu^\top \u^{(2)})^2)$. Let $D_2(\vmu)$ and $D_3(\vmu)$ be the two terms in that maximum. In that case, the oracle weights $\w^*(\vmu)$ are
\begin{itemize}
\item $\{\w^{(1)}(\vmu)\}$ if $D_2>D_3$, where $\w^{(1)}(\vmu) = (\frac{D_2}{D_1+D_2}, a \frac{D_1}{D_1+D_2}, (1-a) \frac{D_1}{D_1+D_2})$,
\item $\{\w^{(2)}(\vmu)\}$ if $D_2<D_3$, where $\w^{(2)}(\vmu) = (\frac{D_3}{D_1+D_3}, (1-a) \frac{D_1}{D_1+D_3}, a \frac{D_1}{D_1+D_3})$,
\item the convex hull of $\{\w^{(1)}(\vmu),\w^{(2)}(\vmu)\}$ if $D_2=D_3$.
\end{itemize}
Let $\vmu = (-\mu_0, -\mu_0, -\mu_0)$ for $\mu_0>0$. Then $\w^*(\vmu) = \conv\{(\frac{1}{2}, \frac{1}{2}a, \frac{1}{2}(1-a)), (\frac{1}{2}, \frac{1}{2}(1-a), \frac{1}{2}a)\}$.

If $a\in\{0,1\}$, then these oracle weights correspond to $\w^{(1)}$ and $\w^{(2)}$ defined below Remark~\ref{rem:D_tracking}.

Track-and-Stop with D-tracking will track $\w^{(1)}(\hat{\vmu}_t)$ or $\w^{(2)}(\hat{\vmu}_t)$ depending on which side of the hyperplane $D_2=D_3$ the empirical mean $\hat{\vmu}_t$ lies. As in section~\ref{sec:failure_tas}, we do not know the distribution of the tracked vector $\w_t$, but if $\hat{\vmu}_t$ crosses that boundary often enough, then as explained below Remark~\ref{rem:D_tracking}, D-tracking will get $N_t/t$ outside of the convex hull of $\{\w^{(1)}(\vmu),\w^{(2)}(\vmu)\}$ and be suboptimal.

We verify experimentally that suboptimality in Figure~\ref{fig:D_tracking}. For these experiments, the Gaussians have variance $1/4$, $\mu_0 = 1/5$, $a=1/10$. The fixed optimal sampling strategy samples $\argmin_k N_{t,k} - t(\frac{1}{2}, \frac{1}{2}a, \frac{1}{2}(1-a))_k$.

\begin{figure}
  \centering
  \includegraphics[width=.5\textwidth]{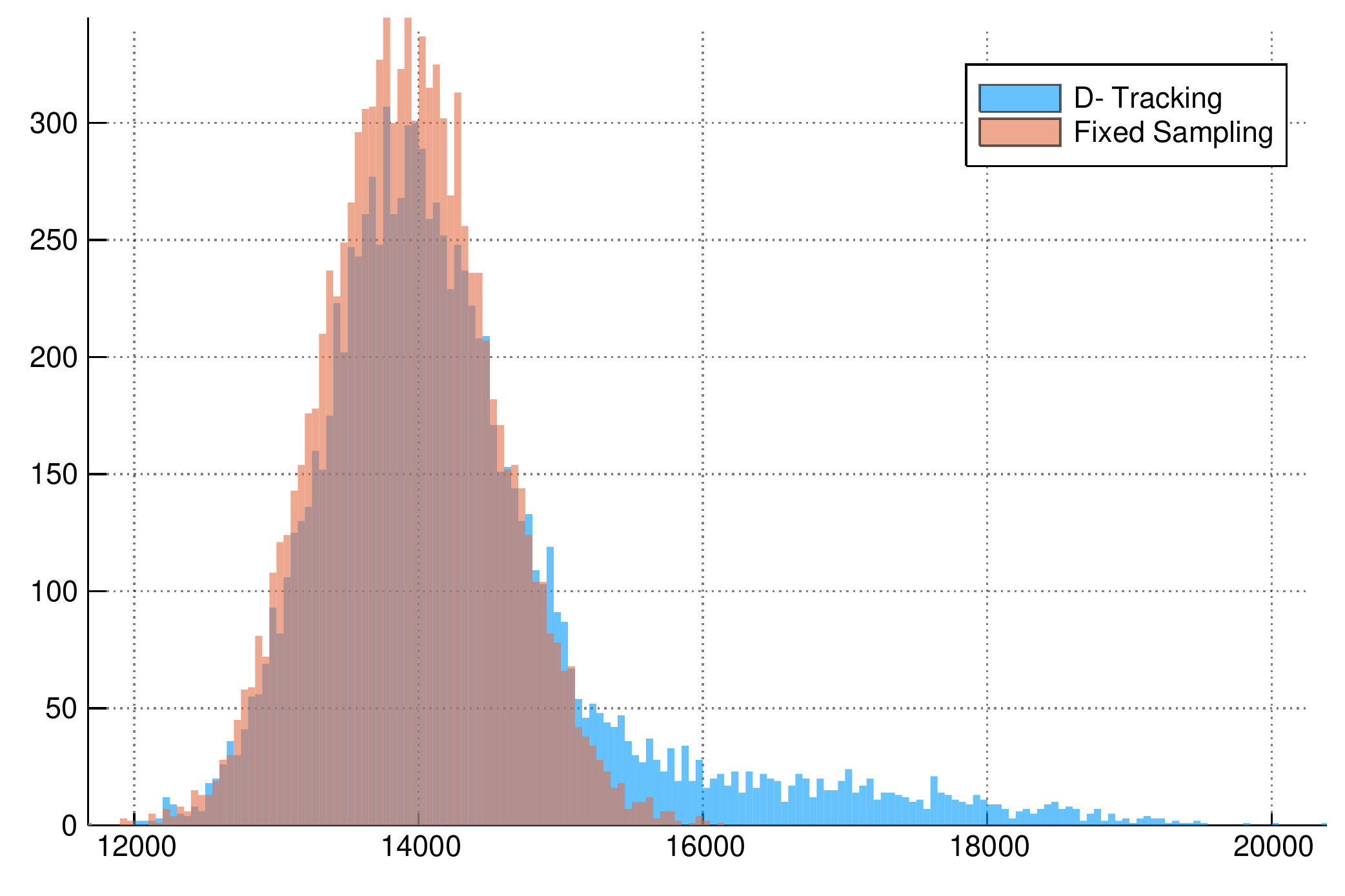}
  \caption{
    Histogram of the stopping time of Track-and-Stop with D-tracking and of fixed optimal sampling. The data is comprised of 10000 runs of each algorithm, with $\delta=e^{-60}$.
  }
  \label{fig:D_tracking}
\end{figure}

\section{Continuity Proofs}\label{sec:continuity_proof}

We first introduce the necessary notions used in the modification of Berge's theorem we will apply, following \citep{FKV14}.

\begin{definition}
For a function $f:U\to\R$ with $U$ a non-empty subset of a topological space, define the level sets
\begin{align*}
L_f(y, U) &= \{x\in U \: : \: f(x)\leq y\} \: ,\\
L_f^<(y, U) &= \{x\in U \: : \: f(x)< y\} \: .
\end{align*}
A function $f$ is \textit{lower semi-continuous} on $U$ if all the level sets $L_f(y, U)$ are closed. It is \textit{inf-compact} on $U$ if all these level sets are compact. It is \textit{upper semi-continuous} if all the strict level sets $L_f^<(y, U)$ are open.
\end{definition}

Let $\mathbb{X}$ and $\mathbb{Y}$ be Hausdorff topological spaces. Let $u:\mathbb{X}\times \mathbb{Y}\to \R$ be a function, $\Phi:\mathbb{X}\to \mathbb{S}(\mathbb{Y})$ be a set-valued function, where $\mathbb{S}(\mathbb{Y})$ is the set of non-empty subsets of $\mathbb{Y}$. The objects of study are
\begin{align*}
v(x) &= \inf_{y\in \Phi(x)} u(x, y) \: ,\\
\Phi^*(x) &= \{y\in\Phi(x)\: : \: u(x, y) = v(x)\} \: .
\end{align*}

For $U\subset \mathbb{X}$, let the graph of $\Phi$ restricted to $U$ be $Gr_U(\Phi) = \{(x,y)\in U\times \mathbb{Y} \: : \: y\in\Phi(x)\}$ .

\begin{definition}
A function $u:\mathbb{X}\times \mathbb{Y}\to \overline{\R}$ is called $\mathbb{K}$-inf-compact on $Gr_{\mathbb{X}}(\Phi)$ if for all non-empty compact subset $C$ of $\mathbb{X}$, $u$ is inf-compact on $Gr_C(\Phi)$.
\end{definition}

We will use two versions of Berge's theorem. The first one restricts $\Phi$ to be compact-valued. The second one removes that hypothesis on $\Phi$ at the price of hypotheses on $u$. Denote by $\mathbb{K}(\mathbb{X})$ the subset of $\mathbb{S}(\mathbb{X})$ containing non-empty compact subsets of $\mathbb{X}$.

\begin{theorem}[Berge's theorem]\label{th:berge}
Let $\mathbb{X}$ and $\mathbb{Y}$ be Hausdorff topological spaces. Assume that
\begin{itemize}
\item $\Phi: \mathbb{X} \to \mathbb{K}(\mathbb{X})$ is continuous (i.e. both lower hemicontinuous and upper hemicontinous),
\item $u: \mathbb{X} \times \mathbb{Y} \to \R$ is continuous.
\end{itemize}
Then the unction $v:\mathbb{X} \to \R$ is continuous and the solution multifunction $\Phi^*:\mathbb{X}\to\mathbb{S}(\mathbb{Y})$ is upper hemicontinuous and compact valued.
\end{theorem}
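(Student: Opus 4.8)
The plan is to reproduce the classical proof of the maximum theorem, broken into three parts: (i) $\Phi^*$ is non-empty and compact-valued; (ii) $v$ is continuous; (iii) $\Phi^*$ is upper hemicontinuous. Because $\mathbb{X}$ and $\mathbb{Y}$ are only assumed Hausdorff, every limiting argument is phrased with nets rather than sequences. The single structural fact used repeatedly is the standard lemma that if $\Phi$ is upper hemicontinuous at $x_0$ with $\Phi(x_0)$ compact, then for any net $x_\alpha \to x_0$ and any selection $y_\alpha \in \Phi(x_\alpha)$, the net $(y_\alpha)$ admits a subnet converging to a point of $\Phi(x_0)$.

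Part (i) is immediate: $\Phi(x)$ is non-empty and compact and $u(x,\cdot)$ is continuous, so the infimum defining $v(x)$ is attained, whence $\Phi^*(x) \neq \emptyset$; and $\Phi^*(x) = \Phi(x) \cap \{\, y : u(x,y) = v(x)\,\}$ is the intersection of a compact set with a closed set, hence compact. For part (ii) I would establish upper and lower semicontinuity of $v$ separately. Upper semicontinuity at $x_0$: fix $\epsilon > 0$, choose $y_0 \in \Phi^*(x_0)$, use continuity of $u$ to get a product neighbourhood $U_1 \times W$ of $(x_0, y_0)$ on which $u < v(x_0) + \epsilon$, and use lower hemicontinuity of $\Phi$ relative to the open set $W$ (which meets $\Phi(x_0)$) to obtain a neighbourhood $U_2$ of $x_0$ with $\Phi(x) \cap W \neq \emptyset$ for $x \in U_2$; then for $x \in U_1 \cap U_2$ any $y \in \Phi(x) \cap W$ gives $v(x) \le u(x,y) < v(x_0) + \epsilon$. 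Lower semicontinuity at $x_0$: let $\ell = \liminf_{x \to x_0} v(x)$, take a net $x_\alpha \to x_0$ with $v(x_\alpha) \to \ell$, pick $y_\alpha \in \Phi^*(x_\alpha)$, extract via the lemma a subnet with $y_\beta \to y^* \in \Phi(x_0)$, and combine $u(x_\beta, y_\beta) = v(x_\beta) \to \ell$ with $u(x_\beta, y_\beta) \to u(x_0, y^*)$ to conclude $v(x_0) \le u(x_0, y^*) = \ell$. Hence $v$ is continuous.

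For part (iii), suppose $\Phi^*$ fails upper hemicontinuity at $x_0$: there is an open $V \supseteq \Phi^*(x_0)$ such that, indexing by the neighbourhood filter of $x_0$, one can select a net $x_\alpha \to x_0$ together with points $y_\alpha \in \Phi^*(x_\alpha) \setminus V$. Since $y_\alpha \in \Phi(x_\alpha)$, the lemma yields a subnet $y_\beta \to y^* \in \Phi(x_0)$; then $u(x_\beta, y_\beta) = v(x_\beta) \to v(x_0)$ by the continuity of $v$ just proved, while $u(x_\beta, y_\beta) \to u(x_0, y^*)$, so $u(x_0, y^*) = v(x_0)$, i.e.\ $y^* \in \Phi^*(x_0) \subseteq V$. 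Since $V$ is open this forces $y_\beta \in V$ eventually, contradicting $y_\beta \notin V$. Thus $\Phi^*$ is upper hemicontinuous, and it is compact-valued by part (i).

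The main obstacle is bookkeeping the two directions correctly and pinpointing where compactness enters: for a minimisation, upper semicontinuity of $v$ rests on \emph{lower} hemicontinuity of $\Phi$ (approximating a minimiser from neighbouring fibres), whereas lower semicontinuity of $v$ and upper hemicontinuity of $\Phi^*$ rest on \emph{upper} hemicontinuity together with compactness of the values of $\Phi$, funnelled through the subnet-extraction lemma. Making that lemma rigorous in a general Hausdorff setting — as opposed to the metric setting that suffices for the applications to $\mathcal{M} \subseteq \mathcal{O}^K$ and $\triangle_K$ — is the only genuinely delicate point; everything else is a routine continuity chase using only the continuity of $u$.
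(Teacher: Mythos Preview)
Your proof is correct and is the standard textbook argument for Berge's maximum theorem. The paper, however, does not prove this statement at all: Theorem~\ref{th:berge} is quoted as a classical result and used as a black box (alongside the Feinberg--Kasyanov--Voorneveld variant, Theorem~\ref{th:berge_feinberg}) in Appendix~\ref{sec:continuity_proof} to derive the continuity properties of Theorem~\ref{th:continuity}. So there is no paper proof to compare against; what you wrote is exactly the classical proof one would find in a reference on set-valued analysis.
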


\begin{theorem}[{\citealt{FKV14}}]\label{th:berge_feinberg}
Assume that
\begin{itemize}
\item $\mathbb{X}$ is compactly generated,
\item $\Phi:\mathbb{X}\to \mathbb{S}(\mathbb{Y})$ is lower hemicontinuous,
\item $u:\mathbb{X}\times \mathbb{Y} \to \R$ is $\mathbb{K}$-inf-compact and upper semi-continuous on $Gr_\mathbb{X}(\Phi)$. 
\end{itemize}
Then the function $v:\mathbb{X} \to \R$ is continuous and the solution multifunction $\Phi^*:\mathbb{X}\to\mathbb{S}(\mathbb{Y})$ is upper hemicontinuous and compact valued.
\end{theorem}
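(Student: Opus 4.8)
The plan is to run the classical Berge maximum theorem argument (our Theorem~\ref{th:berge}), but tracking precisely which hypothesis drives each half of the conclusion, so that the compactness of the images $\Phi(x)$ — unavailable here — can be replaced by the $\mathbb{K}$-inf-compactness of $u$ on $Gr_{\mathbb{X}}(\Phi)$, which supplies exactly the compactness along the graph needed for the limiting arguments; the compact-generation of $\mathbb{X}$ is then used only to glue local statements into global ones. The \emph{upper semicontinuity of $v$} uses no compactness at all, only lower hemicontinuity of $\Phi$ and upper semicontinuity of $u$ (this is the sole place the latter enters). Fix $x_0$ and $\varepsilon>0$, choose $y_0\in\Phi(x_0)$ with $u(x_0,y_0)<v(x_0)+\varepsilon$; for any net $x_\alpha\to x_0$, lower hemicontinuity yields (along a subnet) $y_\alpha\in\Phi(x_\alpha)$ with $y_\alpha\to y_0$, and then $v(x_\alpha)\le u(x_\alpha,y_\alpha)$ together with $\limsup_\alpha u(x_\alpha,y_\alpha)\le u(x_0,y_0)$ gives $\limsup_\alpha v(x_\alpha)\le v(x_0)+\varepsilon$; let $\varepsilon\to 0$.

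Next I would show $\Phi^*$ is nonempty- and compact-valued. Taking $C=\{x\}$ in the definition of $\mathbb{K}$-inf-compactness, each level set $L_u(\gamma, Gr_{\{x\}}(\Phi)) = \{(x,y): y\in\Phi(x),\ u(x,y)\le\gamma\}$ is compact; hence $u(x,\cdot)$ is lower semicontinuous on $\Phi(x)$ and, for any $\gamma>v(x)$, attains its infimum on the nonempty compact slice $\{y\in\Phi(x): u(x,y)\le\gamma\}$. Thus $\Phi^*(x)\neq\emptyset$, and $\Phi^*(x)$, being closed inside that compact slice, is compact. This is the key point: even though $\Phi$ is not compact-valued, every $x$ admits a selection $y\in\Phi^*(x)$, which is what keeps the remaining arguments alive.

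For the \emph{lower semicontinuity of $v$} I would use compact generation of $\mathbb{X}$: it suffices that for every $\gamma$ and every compact $C\subseteq\mathbb{X}$ the set $\{x\in C: v(x)\le\gamma\}$ is closed in $C$. Given a net $x_\alpha\to x_0$ in $C$ with $v(x_\alpha)\le\gamma$, pick $y_\alpha\in\Phi^*(x_\alpha)$ (possible by the previous step); then $(x_\alpha,y_\alpha)\in L_u(\gamma, Gr_C(\Phi))$, which is compact, so a convergent subnet has limit $(x_0,y_*)$ in this (closed) set, giving $y_*\in\Phi(x_0)$ and $u(x_0,y_*)\le\gamma$, whence $v(x_0)\le u(x_0,y_*)\le\gamma$. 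With the first step, $v$ is continuous. For \emph{upper hemicontinuity of $\Phi^*$}, again reduce to a compact $C$; there $v$ is bounded, say by $b$, so $Gr_C(\Phi^*)\subseteq L_u(b, Gr_C(\Phi))$, a compact set, and $Gr_C(\Phi^*)$ is closed in it: if $(x_\alpha,y_\alpha)\to(x_0,y_0)$ with $y_\alpha\in\Phi^*(x_\alpha)$, then since $v(x_\alpha)\to v(x_0)$ the net lies eventually in $L_u(v(x_0)+\varepsilon, Gr_C(\Phi))$ for every $\varepsilon>0$, forcing $y_0\in\Phi(x_0)$ and $u(x_0,y_0)\le v(x_0)$, i.e.\ $y_0\in\Phi^*(x_0)$. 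A set-valued map with closed graph and values in a fixed compact set is upper hemicontinuous, so $\Phi^*|_C$ — and hence, by compact generation, $\Phi^*$ — is upper hemicontinuous.

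The main obstacle is making $\mathbb{K}$-inf-compactness do the double duty that compactness of images did in the classical proof: it must simultaneously guarantee that the infimum defining $v$ is attained (so the selections $y_\alpha\in\Phi^*(x_\alpha)$ exist) and that nets of such selections have convergent subnets whose limits still lie in the graph. The second, more routine nuisance is the topological bookkeeping needed to pass from the ``for every compact $C$'' statements to genuine continuity and upper hemicontinuity on $\mathbb{X}$ — exactly where the compact-generation hypothesis is needed, and also why the whole argument must be phrased with nets rather than sequences.
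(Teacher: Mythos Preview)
The paper does not prove this theorem: it is quoted verbatim as a result of \citet{FKV14} and used as a black box in the proof of Theorem~\ref{th:continuity}. So there is no ``paper's own proof'' to compare against; what you have written is a reconstruction of (a version of) the argument in the cited reference.

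That said, your sketch is essentially the correct proof and follows the strategy of Feinberg--Kasyanov--Voorneveld: upper semicontinuity of $v$ from lower hemicontinuity of $\Phi$ and upper semicontinuity of $u$; nonemptiness and compactness of $\Phi^*(x)$ from $\mathbb K$-inf-compactness at the singleton $C=\{x\}$; lower semicontinuity of $v$ by testing closedness of sublevel sets against compacts, selecting minimisers, and extracting limits inside the compact level set $L_u(\gamma,Gr_C(\Phi))$; and upper hemicontinuity of $\Phi^*$ via the closed-graph-plus-values-in-a-compact criterion, again localised to compacts and globalised by compact generation. Your identification of where each hypothesis is spent is accurate, and in particular you correctly observe that compact generation is what lets you pass from ``closed (resp.\ uhc) on every compact'' to ``closed (resp.\ uhc) on $\mathbb X$'', since upper hemicontinuity is characterised by the closedness of the sets $\{x:\Phi^*(x)\not\subseteq V\}$.
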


Theorem~\ref{th:continuity} is cut into several successive lemma, whose proofs together prove the theorem. The first three lemmas prove the continuity of $(\w, \vmu) \mapsto D(\w, \vmu, \neg i)$, first in the case where $\neg i$ is compact, then in the general case.

\begin{lemma}
Set $i\in\mathcal{I}$. If $\neg i$ is compact, then $(\w, \vmu)\mapsto D(\w, \vmu, \neg i)$ is jointly continuous on $\triangle_K\times \mathcal{M}$ and $\vlambda^*(\w, \vmu)$ is non-empty, upper hemicontinuous and compact valued.
\end{lemma}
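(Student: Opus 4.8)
The plan is to recognize this as a direct application of Berge's maximum theorem (Theorem~\ref{th:berge}) with the roles $\mathbb{X} = \triangle_K \times \mathcal{M}$, $\mathbb{Y} = \mathcal{M}$, the objective function
\[
  u\bigl((\w,\vmu),\vlambda\bigr) ~=~ D(\w,\vmu,\vlambda) ~=~ \sum_k w_k d(\mu_k,\lambda_k),
\]
and the constraint correspondence $\Phi\bigl((\w,\vmu)\bigr) = \neg i$, a constant set-valued map. Then $v(\w,\vmu) = \inf_{\vlambda \in \neg i} u((\w,\vmu),\vlambda) = D(\w,\vmu,\neg i)$ and $\Phi^*(\w,\vmu) = \vlambda^*(\w,\vmu)$, so the conclusions of Theorem~\ref{th:berge} give exactly the three claims: continuity of $v$, and upper hemicontinuity with non-empty compact values of $\vlambda^*$.

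The work is therefore to verify the two hypotheses of Theorem~\ref{th:berge}. First I would check that $\Phi$ is continuous with non-empty compact values: since $\Phi$ is the \emph{constant} map with value $\neg i$, it is trivially both lower and upper hemicontinuous, and it is compact-valued precisely by the standing assumption that $\neg i$ is compact. (If $\neg i$ were empty the statement is vacuous/degenerate, so we may assume it is non-empty; this is implicit in the problem setup.) Second, I would check that $u\bigl((\w,\vmu),\vlambda\bigr) = \sum_k w_k d(\mu_k,\lambda_k)$ is jointly continuous on $(\triangle_K \times \mathcal{M}) \times \mathcal{M}$. This follows because $d(\mu,\lambda)$ is continuous in $(\mu,\lambda)$ on $\mathcal{O}\times\mathcal{O}$ — a standard fact for one-parameter exponential families, where $d(\mu,\lambda) = \phi(\mu) - \phi(\lambda) - (\mu-\lambda)\phi'(\lambda)$ with $\phi$ smooth and strictly convex on the open interval $\mathcal{O}$ — and a finite sum of products of continuous functions ($w_k$ and $d(\mu_k,\lambda_k)$) is continuous. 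With both hypotheses in hand, Theorem~\ref{th:berge} applies verbatim.

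The main (very mild) obstacle is purely bookkeeping: making sure the spaces really are Hausdorff and that the product structure is set up correctly so that Berge applies — $\triangle_K \times \mathcal{M} \times \mathcal{M}$ is a subset of a finite-dimensional Euclidean space, hence metric, hence Hausdorff, and $\triangle_K$ is itself the compact base space on which one would want to quote continuity, but here the compactness of $\mathbb{X}$ is not even needed for the version of Berge in Theorem~\ref{th:berge}. A secondary point worth a sentence is confirming non-emptiness of $\vlambda^*$: since $\neg i$ is compact and $\vlambda \mapsto u((\w,\vmu),\vlambda)$ is continuous, the infimum is attained, so $\vlambda^*(\w,\vmu) \neq \emptyset$; but again this is already packaged into the conclusion of Theorem~\ref{th:berge}. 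So the proof is essentially ``apply Theorem~\ref{th:berge}'' after noting the two continuity/compactness facts above.
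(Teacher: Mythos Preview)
Your proposal is correct and follows exactly the paper's approach: apply Berge's maximum theorem (Theorem~\ref{th:berge}) with $\mathbb{X}=\triangle_K\times\mathcal{M}$, the constant correspondence $\Phi\equiv\neg i$ (compact-valued by hypothesis, continuous because constant), and $u((\w,\vmu),\vlambda)=D(\w,\vmu,\vlambda)$ continuous. The only cosmetic difference is that the paper takes $\mathbb{Y}=\neg i$ rather than $\mathbb{Y}=\mathcal{M}$, which is immaterial since $\Phi$ has constant value $\neg i$ anyway.
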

\begin{proof}
We apply Theorem~\ref{th:berge} to
\begin{itemize}
\begin{minipage}{0.4\linewidth}
\item $\mathbb{X} = \triangle_K\times\mathcal{M}$,
\item $\mathbb{Y} = \neg i$,
\end{minipage}
\begin{minipage}{0.4\linewidth}
\item $\Phi(\vmu) = \neg i$,
\item $u((\w, \vmu),\vlambda) = D(\w, \vmu, \vlambda)$.
\end{minipage}
\end{itemize}
$\Phi$ is compact-valued, non-empty and continuous (since it is constant). $u$ is continuous. The hypotheses are verified and the theorem gives the wanted result.
\end{proof}







Let $\mathcal{Q}_+ = \{\w\in\R^K \: : \: \forall k\in[K], w_k\geq 0\}$, $\mathring{\mathcal{Q}}_+$ be its interior and for $\varepsilon>0$,  $\mathcal{Q}_+^\varepsilon = \{\w\in\R^K \: : \: \forall k\in[K], w_k\geq \varepsilon\}$.

\begin{lemma}
The function $(\w, \vmu) \mapsto D(\w, \vmu, \neg i)$ is jointly continuous on $\mathring{\mathcal{Q}}_+ \times \mathcal{M}$ .On the same set, $\vlambda^*(\w, \vmu)$ is upper hemicontinuous, non-empty and compact valued. In particular, the same properties hold on $\mathring{\triangle}_K \times \mathcal{M}$.
\end{lemma}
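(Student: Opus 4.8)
**Proof plan for the final statement (continuity of $(\w,\vmu)\mapsto D(\w,\vmu,\neg i)$ on $\mathring{\mathcal Q}_+\times\mathcal M$).**

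The plan is to reduce the general (possibly non-compact) alternative set $\neg i$ to the compact case already handled in the previous lemma, by showing that on the open orthant $\mathring{\mathcal Q}_+$ the infimum $\inf_{\vlambda\in\neg i} D(\w,\vmu,\vlambda)$ is attained on a bounded — hence (after taking closure, which does not change the value since $d$ is continuous) compact — subset of $\neg i$ that can be chosen uniformly over a neighbourhood of any fixed point $(\w_0,\vmu_0)$. Concretely, first I would fix $(\w_0,\vmu_0)\in\mathring{\mathcal Q}_+\times\mathcal M$, pick $\varepsilon>0$ with $\w_0\in\mathcal Q_+^{2\varepsilon}$, and observe that there is some fixed competitor $\vlambda^{(0)}\in\neg i$ (say any one point, or even just use $\vlambda=\vmu$ if $\vmu\in\neg i$; in general pick one $\vlambda^{(0)}$ making $D(\w_0,\vmu_0,\vlambda^{(0)})$ finite) giving an upper bound $R_0 \df D(\w_0,\vmu_0,\vlambda^{(0)})<\infty$ on the value near $(\w_0,\vmu_0)$. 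Since $d(\mu,\cdot)\ge 0$ and, for $\vmu$ in a compact neighbourhood of $\vmu_0$ and $\w\in\mathcal Q_+^{\varepsilon}$, each coordinate term $w_k d(\mu_k,\lambda_k)\ge \varepsilon\, d(\mu_k,\lambda_k)$, any $\vlambda$ with $D(\w,\vmu,\vlambda)\le R_0+1$ must satisfy $d(\mu_k,\lambda_k)\le (R_0+1)/\varepsilon$ for every $k$; by properties of the KL divergence in a one-parameter exponential family (the sublevel sets $\{\lambda: d(\mu,\lambda)\le c\}$ are compact intervals in $\mathcal O$, locally uniformly in $\mu$) this confines $\vlambda$ to a fixed compact box $B\subseteq\mathcal O^K$, uniformly over the chosen neighbourhood of $(\w_0,\vmu_0)$.

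Next I would replace $\neg i$ by $\Lambda' \df \mathrm{cl}(\neg i \cap B)$, a compact subset of $\mathcal M$ (using that $\mathcal M$ is, as a set of parameters, handled the same way — or noting that the closure is taken inside $\mathcal O^K$ and continuity of $d$ extends $D$ to it without changing the infimum). On the neighbourhood $U$ of $(\w_0,\vmu_0)$ we then have the local identity $D(\w,\vmu,\neg i)=D(\w,\vmu,\Lambda')$, because any $\vlambda$ doing better than the fixed bound $R_0$ lies in $B$. Now apply the previous lemma (the compact-$\neg i$ case) with $\neg i$ replaced by $\Lambda'$: it gives joint continuity of $(\w,\vmu)\mapsto D(\w,\vmu,\Lambda')$ on $\triangle_K\times\mathcal M$, and in particular at $(\w_0,\vmu_0)$, together with upper hemicontinuity, non-emptiness and compact-valuedness of the corresponding argmin multifunction $\vlambda^*$. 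Since $(\w_0,\vmu_0)$ was arbitrary in $\mathring{\mathcal Q}_+\times\mathcal M$, and continuity and upper hemicontinuity are local properties, this yields the claim on all of $\mathring{\mathcal Q}_+\times\mathcal M$. A small point to check is that $\vlambda^*(\w,\vmu)$ for the original problem coincides locally with $\vlambda^*$ for $\Lambda'$ — this holds since the minimisers all lie in the interior region where the two objectives agree. Finally, $\mathring{\triangle}_K\subseteq\mathring{\mathcal Q}_+$, so the stated consequence on $\mathring{\triangle}_K\times\mathcal M$ is immediate.

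The main obstacle is the uniformity in the a-priori bound: one must ensure that the compact set $B$ trapping the minimisers can be chosen once and for all on a whole neighbourhood of $(\w_0,\vmu_0)$, not just pointwise — this is where the openness of $\mathcal Q_+$ matters, as it provides the lower bound $w_k\ge\varepsilon>0$ that prevents any coordinate from ``escaping'' by letting $\lambda_k$ run off to the boundary of $\mathcal O$ at zero cost. (On the boundary of the simplex, where some $w_k=0$, this argument genuinely fails, which is consistent with the lemma being stated only on the interior.) The exponential-family fact that $\{\lambda : d(\mu,\lambda)\le c\}$ is compact and varies nicely with $\mu$ is standard but should be invoked explicitly; everything else is a routine localisation wrapped around the already-proven compact case.
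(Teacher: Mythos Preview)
Your argument is correct. The paper takes a closely related but differently packaged route: rather than truncating $\neg i$ to a compact $\Lambda'$ and re-invoking the compact-case lemma (standard Berge), it applies directly the generalisation of Berge's theorem due to \citet{FKV14}, in which the hypothesis ``$\Phi$ compact-valued'' is replaced by ``$u$ is $\mathbb K$-inf-compact on $Gr_{\mathbb X}(\Phi)$'', i.e.\ for every compact $C\subseteq\mathbb X$ the level sets $\{((\w,\vmu),\vlambda)\in C\times\neg i:D(\w,\vmu,\vlambda)\le y\}$ are compact. Verifying that condition is precisely your main estimate ($w_k\ge\varepsilon$ together with $d(\mu_k,\lambda_k)\to\infty$ as $\lambda_k$ approaches the boundary of $\mathcal O$ forces $\vlambda$ into a bounded box, uniformly over compacts in $\mathcal Q_+^\varepsilon\times\mathcal O^K$), so the analytic content of the two proofs is identical. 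Your route is more self-contained, needing only the classical Berge theorem already used in the previous lemma, at the price of the explicit localisation and the check that $\vlambda^*$ for the truncated problem coincides with $\vlambda^*$ for the original one; the paper's route is shorter once the FKV theorem is on the table. One minor point: your $\Lambda'=\mathrm{cl}(\neg i\cap B)$ may contain boundary points not in $\neg i$, so strictly speaking you obtain $\vlambda^*$ for the closed problem; the paper's verification of closed level sets is also carried out in the ambient space, so the two arguments are on the same footing here.
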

\begin{proof}
Let $\varepsilon>0$. We prove the result for $(\w, \vmu) \in \mathcal{Q}_+^\varepsilon\times \mathcal{O}^K$ (and note that $\mathcal{M}\subseteq \mathcal{O}^K$).

We will apply Theorem~\ref{th:berge_feinberg} to
\begin{itemize}
\begin{minipage}{0.4\linewidth}
\item $\mathbb{X} = \mathcal{Q}_+^\varepsilon\times\mathcal{O}^K$ ,
\item $\mathbb{Y} = \mathcal{O}^K$ ,
\item $\Phi((\w,\vmu)) = \neg i$ ,
\end{minipage}
\begin{minipage}{0.4\linewidth}
\item $u((\w, \vmu), \vlambda) = D(\w, \vmu, \vlambda)$ ,
\item $v(\w, \vmu) =  D(\w, \vmu, \neg i)$ .
\end{minipage}
\end{itemize}
We now verify the hypothesis of the theorem. First, $\mathbb{X}$ is compactly generated since it is a metric space. Secondly, $\Phi$ is lower hemicontinuous since it is constant.

The function $u$ is continuous, hence upper semi-continuous. It remains to check that $u$ is $\mathbb{K}$-inf-compact on $Gr_\mathbb{X}(\Phi)$.

Let $C$ be a non-empty compact subset of $\mathcal{Q}_+^\varepsilon \times \mathcal{O}^K$. We need to prove that $u$ is inf-compact on $Gr_C(\Phi) = C\times \neg i$ . The level sets $L_u(y,C\times \neg i)$ for $y\in\R$ are closed by continuity of $u$. Indeed they are the reverse image of a closed set $[-\infty, y]$ by a continuous function, hence they are closed in $(\mathcal{Q}_+^\varepsilon\times\mathcal{O}^K)\times \mathcal{O}^K$. We only need to prove that they are bounded.

Set $y\in\R$. Let $\mu^+_k = \sup_{(\w,\vmu) \in C} \mu_k$, finite since $C$ is compact. Define $\mu^-_k$ in a similar way with an infimum. For $j\in[K]$, $\lim d(\mu_j, \lambda_j) = +\infty$ when $\lambda_j$ approaches the boundaries of the open interval $\mathcal O$. Then for all $k\in[K]$, there exists $\lambda_k^+$ such that $\lambda>\lambda_k^+ \Rightarrow \forall (\w,\vmu) \in C, d(\mu_k, \lambda) > y/\varepsilon$. Define $\lambda_k^-$ in a similar way. For $\vlambda \notin [\lambda_1^-, \lambda_1^+]\times\ldots\times[\lambda_K^-, \lambda_K^+]$ there exists a $k\in[K]$ such that $d(\mu_k, \lambda_k) > y/\varepsilon$ for all $(\w,\vmu)\in C$. Since $w_k\geq \varepsilon$ for all $k$, for all $(\w,\vmu)\in C$, $D(\w,\vmu,\vlambda) = \sum_{k=1}^K w_k d(\mu_k, \lambda_k) > y$. The level set $L_u(y, C\times \neg i)$ is bounded.

We have verified the hypotheses of the theorem for compacts subsets of $\mathcal{Q}^\varepsilon_+\times \mathcal{O}^K$ and obtain that $v(\w, \vmu) = D(\w, \vmu, \neg i)$ is continuous as a function of $(\w, \vmu)$ on that set. On that same set, the function giving the points realizing the infimum $\vlambda^*(\w, \vmu)$ is upper hemicontinuous, non-empty and compact valued.
\end{proof}

For a projection $P$ on a subset of coordinates $S\subseteq [K]$ and $\w\in\mathcal{Q}_+$, denote the projected vector by $P\w$. The same proof as the one of the previous lemma applied to the projected spaces gives the following corollary.

\begin{corollary}
Let $P$ be a projection on a subset of coordinates $S\subseteq [K]$. Then the function $(\boldsymbol{u}, \vmu) \mapsto D(\boldsymbol{u}, \vmu, P\neg i)$ is continuous on $\mathring{P\mathcal{Q}}_+ \times \mathcal{M}$.
\end{corollary}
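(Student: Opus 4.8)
The plan is to replay the proof of the preceding lemma essentially verbatim, only on the coordinate subsystem $S$ selected by $P$. Fix $\varepsilon>0$ and set $P\mathcal{Q}_+^\varepsilon = \{P\w : \w\in\mathcal{Q}_+^\varepsilon\}$, so that $\mathring{P\mathcal{Q}}_+ = \bigcup_{\varepsilon>0} P\mathcal{Q}_+^\varepsilon$ and, exactly as in the preceding lemma, it suffices to establish joint continuity of $(\boldsymbol{u},\vmu)\mapsto D(\boldsymbol{u},\vmu,P\neg i)$ on $P\mathcal{Q}_+^\varepsilon\times\mathcal{O}^K$ for every $\varepsilon>0$; the statement on $\mathring{P\mathcal{Q}}_+\times\mathcal{M}$ then follows since $\mathcal{M}\subseteq\mathcal{O}^K$ and every $(\boldsymbol{u},\vmu)\in\mathring{P\mathcal{Q}}_+\times\mathcal{M}$ lies in the interior of some $P\mathcal{Q}_+^\varepsilon\times\mathcal{O}^K$.

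Next I would apply Theorem~\ref{th:berge_feinberg} with $\mathbb{X} = P\mathcal{Q}_+^\varepsilon\times\mathcal{O}^K$, $\mathbb{Y} = \mathcal{O}^K$ (only its $S$-coordinates actually matter), the constant correspondence $\Phi((\boldsymbol{u},\vmu)) = \neg i$, objective $u((\boldsymbol{u},\vmu),\vlambda) = \sum_{k\in S} u_k\, d(\mu_k,\lambda_k)$, and value $v(\boldsymbol{u},\vmu) = D(\boldsymbol{u},\vmu,P\neg i)$. The hypotheses are checked exactly as before: $\mathbb{X}$ is a metric space, hence compactly generated; $\Phi$ is lower hemicontinuous, being constant; $u$ is continuous, hence upper semi-continuous on $Gr_{\mathbb{X}}(\Phi) = \mathbb{X}\times\neg i$. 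For $\mathbb{K}$-inf-compactness, given a non-empty compact $C\subseteq\mathbb{X}$ and $y\in\R$, the level set $L_u(y, C\times\neg i)$ is closed by continuity of $u$, and it is bounded because, setting $\mu_k^\pm$ to be the extrema of $\mu_k$ over $C$ (finite by compactness) and using $d(\mu_k,\cdot)\to+\infty$ at the endpoints of $\mathcal{O}$, there exist $\lambda_k^\pm$ with $d(\mu_k,\lambda) > y/\varepsilon$ for all $(\boldsymbol{u},\vmu)\in C$ once $\lambda\notin[\lambda_k^-,\lambda_k^+]$; since $u_k\geq\varepsilon$ on $C$, any $\vlambda$ whose $S$-coordinates escape $\prod_{k\in S}[\lambda_k^-,\lambda_k^+]$ gives $u((\boldsymbol{u},\vmu),\vlambda) > y$.

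Theorem~\ref{th:berge_feinberg} then delivers continuity of $v = D(\cdot,\cdot,P\neg i)$ on $P\mathcal{Q}_+^\varepsilon\times\mathcal{O}^K$ (and, as a byproduct we do not need here, upper hemicontinuity with non-empty compact values of the minimiser correspondence), and taking the union over $\varepsilon>0$ and restricting to $\mathcal{M}$ closes the argument. I do not expect a genuine obstacle: the only point to watch is that deleting the coordinates outside $S$ from the objective leaves the $\mathbb{K}$-inf-compactness estimate intact, which it does since those coordinates never enter the level-set bound, so the proof is word-for-word the previous one carried out on the smaller index set $S$.
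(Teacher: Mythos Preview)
Your overall plan—replay the preceding lemma on the coordinate subsystem $S$—is exactly what the paper intends. However, your specific instantiation of Theorem~\ref{th:berge_feinberg} does not quite work: you take $\mathbb{Y}=\mathcal{O}^K$ and $\Phi=\neg i$, keeping all $K$ coordinates of $\vlambda$ while the objective $u$ depends only on $(\lambda_k)_{k\in S}$. Precisely because the coordinates outside $S$ ``never enter the level-set bound'', they are \emph{unconstrained} in the level set $L_u(y,C\times\neg i)$: for $j\notin S$, $\lambda_j$ is restricted only by $\vlambda\in\neg i$, which generically allows $\lambda_j$ to range over an unbounded interval (Gaussian case $\mathcal{O}=\R$) or to approach the boundary of $\mathcal{O}$. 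Either way the level set is not compact in $\mathbb{X}\times\mathcal{O}^K$, and $\mathbb{K}$-inf-compactness fails. Your final sentence has the implication backwards: dropping those coordinates from the objective is exactly what breaks, not preserves, the compactness estimate.

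The fix is the one the paper alludes to with ``applied to the projected spaces'': take $\mathbb{Y}=P\mathcal{O}^K\cong\mathcal{O}^{|S|}$ and $\Phi=P\neg i$, and let the optimisation variable be $P\vlambda$ rather than $\vlambda$. Then the boundedness argument you wrote for the $S$-coordinates is the whole argument, and everything goes through verbatim. The value function is unchanged since $\inf_{\vlambda\in\neg i}\sum_{k\in S}u_k d(\mu_k,\lambda_k)=\inf_{\boldsymbol{\lambda}'\in P\neg i}\sum_{k\in S}u_k d(\mu_k,\lambda'_k)$.
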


\paragraph{We now extend the continuity of $D(\w, \vmu, \neg i)$ on all of $\triangle_K\times\mathcal{M}$.}

\begin{lemma}\label{lem:D_w_mu_continuous}
The function $(\w, \vmu) \mapsto D(\w,\vmu, \neg i)$ is continuous on $\triangle_K \times \mathcal{M}$.
\end{lemma}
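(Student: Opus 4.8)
The plan is to prove continuity by establishing upper and lower semicontinuity separately, reducing the boundary case where some $w_k = 0$ to the interior case already handled by the preceding corollary. I first dispose of the degenerate situation: if $\neg i = \emptyset$ then $D(\w,\vmu,\neg i) \equiv +\infty$ is (trivially) continuous, so I assume $\neg i \neq \emptyset$ throughout.

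\emph{Upper semicontinuity.} For each fixed $\vlambda \in \neg i$, the map $(\w,\vmu) \mapsto \sum_k w_k d(\mu_k,\lambda_k)$ is continuous on $\triangle_K \times \mathcal{M}$, since each $d(\cdot,\lambda_k)$ is smooth on $\mathcal{O}$. By definition $D(\w,\vmu,\neg i) = \inf_{\vlambda \in \neg i}\sum_k w_k d(\mu_k,\lambda_k)$ is the pointwise infimum of this family, and an infimum of continuous functions is upper semicontinuous on $\triangle_K \times \mathcal{M}$.

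\emph{Lower semicontinuity.} Fix $(\w,\vmu) \in \triangle_K \times \mathcal{M}$, let $S = \{k \in [K] : w_k > 0\}$ be the support of $\w$, and let $P$ denote the projection onto the coordinates in $S$. I will use two observations. First, because $w'_k \ge 0$ and $d \ge 0$, for every $(\w',\vmu')$ one has $D(\w',\vmu',\neg i) \ge \inf_{\vlambda\in\neg i}\sum_{k\in S} w'_k d(\mu'_k,\lambda_k) = D(P\w',\vmu',P\neg i)$. Second, when $\w' = \w$ this inequality is an equality: the terms with $k\notin S$ carry weight $0$, and since the coordinates of $\vlambda$ outside $S$ are unconstrained except through membership in $\neg i$, minimizing over them turns the constraint $\vlambda\in\neg i$ into $P\vlambda\in P\neg i$, giving $D(\w,\vmu,\neg i) = D(P\w,\vmu,P\neg i)$. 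Now take any sequence $(\w^{(n)},\vmu^{(n)}) \to (\w,\vmu)$. Since the nonzero entries of $\w$ are bounded below by $\varepsilon := \tfrac12\min_{k\in S} w_k > 0$, we have $P\w^{(n)} \in \mathring{P\mathcal{Q}}_+$ for all large $n$, so the corollary (applied to the projected spaces) yields $D(P\w^{(n)},\vmu^{(n)},P\neg i) \to D(P\w,\vmu,P\neg i) = D(\w,\vmu,\neg i)$. Combining with the first observation, $\liminf_{n} D(\w^{(n)},\vmu^{(n)},\neg i) \ge \lim_n D(P\w^{(n)},\vmu^{(n)},P\neg i) = D(\w,\vmu,\neg i)$, which is lower semicontinuity.

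Upper and lower semicontinuity together give continuity of $(\w,\vmu)\mapsto D(\w,\vmu,\neg i)$ on all of $\triangle_K \times \mathcal{M}$. I expect the only delicate point to be the second observation — that restricting to the support $S$ and replacing $\neg i$ by its projection $P\neg i$ leaves the value unchanged when $\w$ is supported on $S$ — together with the bookkeeping needed to invoke the corollary on the right domain (verifying $P\w^{(n)}$ eventually lies in the open orthant, which is exactly where the uniform lower bound $\varepsilon$ on the surviving coordinates is used).
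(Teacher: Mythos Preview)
Your proof is correct and follows essentially the same route as the paper: upper semicontinuity from the fact that $D(\w,\vmu,\neg i)$ is a pointwise infimum of continuous functions (the paper writes this out as an explicit $\varepsilon$-argument with a near-minimiser $\vlambda^\varepsilon$, but it is the same idea), and lower semicontinuity by projecting onto the support $S$ of $\w$, using $D(\w',\vmu',\neg i) \ge D(P\w',\vmu',P\neg i)$ with equality at $\w$, and invoking the interior continuity result on the projected problem. The bookkeeping you flag (that $P\w^{(n)}$ eventually lands in $\mathring{P\mathcal{Q}}_+$) is exactly what the paper handles via the choice $\varepsilon \le \tfrac12\min_{k:w_k>0} w_k$.
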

\begin{proof}


Let $(\w, \vmu) \in \triangle_K \times \mathcal{M}$ and $\vlambda^\varepsilon$ be such that $D(\w, \vmu, \vlambda) \leq D(\w, \vmu, \neg i) + \varepsilon$, which exists by definition of $D(\w, \vmu, \neg i)$ as an infimum.

The function $(\w, \vmu) \mapsto D(\w, \vmu, \vlambda^\varepsilon)$ is continuous. Hence, there exists $\varepsilon'$ such that for $\Vert \w' - \w \Vert_\infty \leq \varepsilon'$ and $\Vert \vmu' - \vmu \Vert_\infty \leq \varepsilon'$, $D(\w', \vmu', \vlambda^\varepsilon) \leq D(\w, \vmu, \vlambda^\varepsilon)+\varepsilon$. For $(\w', \vmu')$ in such a neighbourhood of $(\w, \vmu)$,
\begin{align*}
D(\w', \vmu', \neg i) \leq D(\w', \vmu', \vlambda^\varepsilon) \leq D(\w, \vmu, \vlambda^\varepsilon)+\varepsilon \leq D(\w, \vmu, \neg i) + 2\varepsilon \: .
\end{align*}
We proved that $(\w, \vmu) \mapsto D(\w, \vmu, \neg i)$ is upper semi-continuous.

Now let $\varepsilon>0$ be such that $\min_{k:w_k>0} w_k \geq 2\varepsilon$ and $(\w', \vmu')$ be in an $\varepsilon$ neighbourhood of $(\w,\vmu)$. Then $w_k>0 \Rightarrow w_k' > \varepsilon$. For $\mathbf{u}\in \triangle_K$, denote by  $P\mathbf{u}$ its projection on the coordinates for which $w_k>0$.
\begin{align*}
D(\w', \vmu', \neg i) &\geq D(P\w', \vmu', P\neg i)\: ,\\
D(\w, \vmu, \neg i) &= D(P\w, \vmu, P\neg i) \: .
\end{align*}
By the previous lemma, $(P\w',\vmu') \mapsto D(P\w', \vmu', P\neg i)$ is continuous on $P\mathcal{Q}_+^\varepsilon\times \mathcal{M}$. Hence for $(\w', \vmu')$ in a small enough neighbourhood of $(\w, \vmu)$, $D(P\w', \vmu', P\neg i) \geq D(P\w, \vmu, P\neg i) - \varepsilon$. In that neighbourhood,
\begin{align*}
D(\w', \vmu', \neg i)
&\geq D(P\w', \vmu', P\neg i)
\geq D(P\w, \vmu, P\neg i) - \varepsilon
\geq D(\w, \vmu, \neg i) - \varepsilon \: .
\end{align*}
This proves the lower semi-continuity of $(\w, \vmu) \mapsto D(\w, \vmu, \neg i)$. We now have both lower and upper semi-continuity: that function is continuous.

\end{proof}


We proved continuity of the function of $\triangle_K\times \mathcal{M}$ but upper hemi-continuity of $\vlambda^*$ only for $\w\in\mathring{\triangle}_K$. We now show an example where$\vlambda^*$ is empty at a $\w$ on the boundary of the simplex. 

Let $\vmu = (0,0)$, $\neg i = \{ (\sqrt{x}, \sqrt{1+1/x}) \: : \: x\in\R^+\}$ and $d(\mu_i, \lambda_i) = (\mu_i-\lambda_i)^2$. Then
\begin{align*}
\inf_\vlambda [w_1 d(\mu_1, \lambda_1) + w_2 d(\mu_2, \lambda_2)]
&= \inf_x(w_1 x + (1-w_1)(1+\frac{1}{x})) = 1 - w_1 + 2\sqrt{w_1(1-w_1)} \: ,
\end{align*}
with minimum attained for $w_1\in(0,1)$ at $x = \sqrt{\frac{1-w_1}{w_1}}$. Hence for $w_1>0$, $\vlambda^*(\w,(0,0)) = \{(\sqrt{\frac{1-w_1}{w_1}}, \sqrt{\frac{w_1}{1-w_1}})\}$, non-empty and compact.
If $w_1 = 0$ then the infimum is not attained and $\vlambda^*$ is empty.

\begin{lemma}\label{lem:D_mu_continuous}
For all $i\in\mathcal{I}$, $D(\vmu, \neg i)$ is a continuous function of $\vmu$ on $\mathcal{M}$ and $\w^*(\vmu, \neg i)$ is upper hemicontinuous and compact-valued.
\end{lemma}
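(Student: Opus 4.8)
The plan is to obtain both statements as an application of the two versions of Berge's theorem (Theorems~\ref{th:berge} and~\ref{th:berge_feinberg}) to the outer optimisation problem $v(\vmu) = \sup_{\w \in \triangle_K} D(\w, \vmu, \neg i)$, now viewing $\w$ as the optimisation variable ranging over the \emph{compact} set $\triangle_K$, with $\vmu$ the parameter. Concretely, I would set $\mathbb X = \mathcal M$, $\mathbb Y = \triangle_K$, the constraint correspondence $\Phi(\vmu) = \triangle_K$ (constant, hence continuous and compact-valued and lower hemicontinuous), and the objective $u(\vmu, \w) = D(\w, \vmu, \neg i)$. The crucial input is Lemma~\ref{lem:D_w_mu_continuous}, which we have just established: $u$ is jointly continuous on $\mathcal M \times \triangle_K$. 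Since a supremum is an infimum of the negated function, either I negate $u$ and apply the infimum form of the theorems verbatim, or I invoke the (symmetric) supremum version; the reduction is cosmetic. Berge's theorem (Theorem~\ref{th:berge}) then immediately yields that $v(\vmu) = D(\vmu, \neg i)$ is continuous on $\mathcal M$ and that the solution correspondence $\w \mapsto \{\w \in \triangle_K : D(\w,\vmu,\neg i) = D(\vmu,\neg i)\} = \w^*(\vmu, \neg i)$ is upper hemicontinuous and compact-valued. Non-emptiness of $\w^*(\vmu,\neg i)$ is free: $\triangle_K$ is compact and $D(\cdot,\vmu,\neg i)$ is continuous, so the supremum is attained.

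The only delicate point, and the reason I would spell it out, is that the \emph{inner} problem (the infimum over $\vlambda \in \neg i$ defining $D(\w,\vmu,\neg i)$) need not have its infimum attained on the boundary of $\triangle_K$ — the explicit example with $\neg i = \{(\sqrt x, \sqrt{1+1/x})\}$ just given shows $\vlambda^*$ can be empty there. But this is irrelevant for the present lemma: here the optimisation variable is $\w$ over the compact $\triangle_K$, and the function being optimised, $D(\w,\vmu,\neg i)$, is already known to be continuous everywhere on $\triangle_K \times \mathcal M$ by Lemma~\ref{lem:D_w_mu_continuous}, \emph{including} the boundary of the simplex. So no subtlety about attainment of the inner infimum propagates to the outer problem. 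In other words, having paid the price of Lemma~\ref{lem:D_w_mu_continuous} (which handled exactly that boundary issue via the projection argument), continuity of $D(\vmu,\neg i)$ and upper hemicontinuity of $\w^*(\vmu,\neg i)$ are now routine consequences of the classical maximum theorem.

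Finally I would note that continuity of $\vmu \mapsto D(\vmu) = \max_{i \in \mathcal I} D(\vmu, \neg i)$ is then immediate, since $\mathcal I$ is finite and a maximum of finitely many continuous functions is continuous; and upper hemicontinuity of $\vmu \mapsto i_F(\vmu)$ and of $\vmu \mapsto \w^*(\vmu) = \bigcup_{i \in i_F(\vmu)} \w^*(\vmu, \neg i)$ follows from standard stability facts: $i_F(\vmu)$ is the $\argmax$ over the finite set $\mathcal I$ of the continuous functions $D(\cdot,\neg i)$, hence upper hemicontinuous, and a finite union of upper hemicontinuous compact-valued correspondences, over an upper hemicontinuously varying finite index set, is upper hemicontinuous. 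The main ``obstacle'' is therefore not in this lemma at all — it was front-loaded into Lemma~\ref{lem:D_w_mu_continuous} and its projection trick — and the proof here is essentially a bookkeeping application of Berge.
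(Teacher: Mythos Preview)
Your proposal is correct and matches the paper's proof essentially verbatim: the paper also applies Theorem~\ref{th:berge} with $\mathbb X = \mathcal M$, $\mathbb Y = \triangle_K$, $\Phi(\vmu) = \triangle_K$ constant, and $u(\vmu,\w) = D(\w,\vmu,\neg i)$, citing Lemma~\ref{lem:D_w_mu_continuous} for continuity of $u$. Your additional remarks about $D(\vmu)$, $i_F(\vmu)$ and $\w^*(\vmu)$ go beyond the present lemma but are also how the paper proceeds (as two further short lemmas, again via Berge); the only minor difference is that for $\w^*(\vmu)$ the paper re-applies Berge to $u(\vmu,\w)=\max_{i\in\mathcal I} D(\w,\vmu,\neg i)$ directly rather than arguing via unions of correspondences.
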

\begin{proof}
We apply Theorem~\ref{th:berge} to
\begin{itemize}
\begin{minipage}{0.4\linewidth}
\item $\mathbb{X} = \mathcal{M}$,
\item $\mathbb{Y} = \triangle_K$,
\end{minipage}
\begin{minipage}{0.4\linewidth}
\item $\Phi(\vmu) = \triangle_K$,
\item $u(\vmu, \w) = D(\w, \vmu, \neg i)$.
\end{minipage}
\end{itemize}
$\Phi$ is compact-valued, non-empty and continuous (since it is constant). $u$ is continuous by Lemma~\ref{lem:D_w_mu_continuous}. The hypotheses are verified and the theorem gives the wanted result.
\end{proof}

\begin{lemma}
For all $i\in\mathcal{I}$, $D(\vmu)$ is a continuous function of $\vmu$ on $\mathcal{M}$ and $\w^*(\vmu)$ is upper hemicontinuous and compact-valued.
\end{lemma}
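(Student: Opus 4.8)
The plan is to derive both claims from Lemma~\ref{lem:D_mu_continuous} --- which already gives continuity of $\vmu\mapsto D(\vmu,\neg i)$ and upper hemicontinuity with non-empty compact values of $\vmu\mapsto\w^*(\vmu,\neg i)$ for each fixed $i\in\mathcal{I}$ --- together with the finiteness of $\mathcal{I}$. First I would note that $D(\vmu)=\max_{i\in\mathcal{I}}D(\vmu,\neg i)$ is a pointwise maximum of finitely many continuous functions, hence continuous on $\mathcal{M}$; and since the maximum is attained, $i_F(\vmu)=\{i:D(\vmu,\neg i)=D(\vmu)\}$ is non-empty, so $\w^*(\vmu)=\bigcup_{i\in i_F(\vmu)}\w^*(\vmu,\neg i)$ is a finite union of non-empty compact sets and is therefore itself non-empty and compact. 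This disposes of continuity of $D$ and of the ``non-empty, compact valued'' part for $\w^*$.

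The substance is upper hemicontinuity of $\w^*$, and for that I would first establish upper hemicontinuity of $i_F$ in the convenient form: every $\vmu$ has a neighbourhood $U_0$ on which $i_F(\vmu')\subseteq i_F(\vmu)$. Fixing $i\notin i_F(\vmu)$ gives $D(\vmu,\neg i)<D(\vmu)$, and by continuity of both $D(\cdot,\neg i)$ and $D(\cdot)$ there is a neighbourhood of $\vmu$ on which $D(\vmu',\neg i)<D(\vmu')$, i.e.\ $i\notin i_F(\vmu')$; intersecting over the finitely many $i\notin i_F(\vmu)$ produces $U_0$.

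To finish, let $V$ be any open neighbourhood of $\w^*(\vmu)$. For each $i\in i_F(\vmu)$ one has $\w^*(\vmu,\neg i)\subseteq\w^*(\vmu)\subseteq V$, so $V$ is also an open neighbourhood of $\w^*(\vmu,\neg i)$, and Lemma~\ref{lem:D_mu_continuous} yields a neighbourhood $U_i$ of $\vmu$ with $\w^*(\vmu',\neg i)\subseteq V$ for all $\vmu'\in U_i$. Taking $U=U_0\cap\bigcap_{i\in i_F(\vmu)}U_i$ (a finite intersection, hence a neighbourhood of $\vmu$), for $\vmu'\in U$ the first step gives $i_F(\vmu')\subseteq i_F(\vmu)$, whence
\[
\w^*(\vmu')=\bigcup_{i\in i_F(\vmu')}\w^*(\vmu',\neg i)\ \subseteq\ \bigcup_{i\in i_F(\vmu)}\w^*(\vmu',\neg i)\ \subseteq\ V ,
\]
which is exactly upper hemicontinuity of $\w^*$ at $\vmu$.

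I do not expect a genuine difficulty here: everything is bookkeeping over the finite set $\mathcal{I}$ on top of Lemma~\ref{lem:D_mu_continuous}. The only point that needs a moment's care is that a single open set $V$ must serve simultaneously as a neighbourhood of every component set $\w^*(\vmu,\neg i)$ --- which is fine because each is contained in $\w^*(\vmu)$ --- and that one must prevent the ``active'' index set from enlarging, which is precisely the role of $U_0$ and the upper hemicontinuity of $i_F$.
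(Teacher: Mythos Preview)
Your argument is correct. Both continuity of $D$ and the upper hemicontinuity/compactness of $\w^*$ follow exactly as you describe, and the bookkeeping with the finite index set is sound.

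The paper takes a different, shorter route: it applies Berge's maximum theorem (Theorem~\ref{th:berge}) \emph{once more}, this time with $\mathbb X=\mathcal M$, $\mathbb Y=\triangle_K$, $\Phi(\vmu)=\triangle_K$, and payoff $u(\vmu,\w)=\max_{i\in\mathcal I}D(\w,\vmu,\neg i)$; since $u$ is a finite maximum of functions already shown continuous in Lemma~\ref{lem:D_w_mu_continuous}, Berge immediately yields continuity of $D(\vmu)=\sup_\w u(\vmu,\w)$ and upper hemicontinuity with compact values of the argmax correspondence, which coincides with $\w^*(\vmu)$. Your approach instead leverages the per-answer conclusions of Lemma~\ref{lem:D_mu_continuous} and stitches them together by hand via the finiteness of $\mathcal I$, in the process deriving the upper hemicontinuity of $i_F$ --- which the paper proves as a separate lemma right afterwards by yet another Berge application. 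So the paper's proof is a one-line black box, while yours is more explicit and gives the $i_F$ result for free; both rest on the same underlying continuity of $(\w,\vmu)\mapsto D(\w,\vmu,\neg i)$.
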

\begin{proof}
We apply Theorem~\ref{th:berge} to
\begin{itemize}
\begin{minipage}{0.4\linewidth}
\item $\mathbb{X} = \mathcal{M}$,
\item $\mathbb{Y} = \triangle_K$,
\end{minipage}
\begin{minipage}{0.4\linewidth}
\item $\Phi(\vmu) = \triangle_K$,
\item $u(\vmu, \w) = \max_{i\in \mathcal{I}} D(\w, \vmu, \neg i)$.
\end{minipage}
\end{itemize}
$\Phi$ is compact-valued, non-empty and continuous (since it is constant). $u$ is continuous since it is a finite maximum of continuous functions. The hypotheses are verified and the theorem gives the wanted result.
\end{proof}

\begin{lemma}
$i_F(\vmu)$ is upper hemicontinuous and compact valued.
\end{lemma}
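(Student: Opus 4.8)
The plan is to exploit that $\mathcal{I}$ is finite, together with the continuity of $\vmu\mapsto D(\vmu,\neg i)$ and $\vmu\mapsto D(\vmu)$ already established in Theorem~\ref{th:continuity}(2). Non-emptiness and compactness are immediate: $\mathcal{I}$ carries the discrete topology, so every subset of $\mathcal{I}$ is compact, and $i_F(\vmu)$ is the (non-empty) set of maximisers of $i\mapsto D(\vmu,\neg i)$ over the finite non-empty set $\mathcal{I}$, hence non-empty and compact. So the only content is upper hemicontinuity.

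For upper hemicontinuity, first observe that in the discrete topology on $\mathcal{I}$ any open neighbourhood $V$ of $i_F(\vmu)$ already contains the set $i_F(\vmu)$ itself. Hence it suffices to exhibit, for each $\vmu\in\mathcal{M}$, a neighbourhood $U$ of $\vmu$ such that $i_F(\vmu')\subseteq i_F(\vmu)$ for all $\vmu'\in U$; then $i_F(\vmu')\subseteq i_F(\vmu)\subseteq V$ and we are done. If $i_F(\vmu)=\mathcal{I}$ this is trivial since $i_F(\vmu')\subseteq\mathcal{I}$ always, so assume $i_F(\vmu)\subsetneq\mathcal{I}$ and set
\[
\Delta ~\df~ D(\vmu) - \max_{i\in\mathcal{I}\setminus i_F(\vmu)} D(\vmu,\neg i) ~>~ 0 .
\]

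Now apply Theorem~\ref{th:continuity}(2): the finitely many functions $\vmu'\mapsto D(\vmu',\neg i)$, $i\in\mathcal{I}$, and $\vmu'\mapsto D(\vmu')$ are all continuous at $\vmu$, so there is a neighbourhood $U$ of $\vmu$ on which every one of them stays within $\Delta/3$ of its value at $\vmu$. For $\vmu'\in U$ and $i\notin i_F(\vmu)$,
\[
D(\vmu',\neg i) ~<~ D(\vmu,\neg i) + \tfrac{\Delta}{3} ~\le~ \bigl(D(\vmu)-\Delta\bigr) + \tfrac{\Delta}{3} ~=~ D(\vmu) - \tfrac{2\Delta}{3} ~<~ D(\vmu') - \tfrac{\Delta}{3} ~<~ D(\vmu') ,
\]
so $i\notin i_F(\vmu')$. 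Thus $i_F(\vmu')\subseteq i_F(\vmu)$ on $U$, which is exactly what was needed. (Equivalently, one could argue by contradiction: a sequence $\vmu_n\to\vmu$ with $i_n\in i_F(\vmu_n)\setminus i_F(\vmu)$ has, by finiteness of $\mathcal{I}$, a constant subsequence $i_n\equiv i$; passing to the limit in $D(\vmu_n,\neg i)=D(\vmu_n)$ via continuity forces $i\in i_F(\vmu)$, a contradiction.)

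There is no real obstacle here; the only points requiring a moment of care are the edge case $i_F(\vmu)=\mathcal{I}$ and the (elementary) reduction of upper hemicontinuity into a discrete codomain to the set-inclusion statement $i_F(\vmu')\subseteq i_F(\vmu)$ locally.
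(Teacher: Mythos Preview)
Your proof is correct. The paper instead applies Berge's maximum theorem (Theorem~\ref{th:berge}) one more time, with $\mathbb{X}=\mathcal{M}$, $\mathbb{Y}=\mathcal{I}$, constant correspondence $\Phi(\vmu)=\mathcal{I}$, and $u(\vmu,i)=D(\vmu,\neg i)$ continuous by Lemma~\ref{lem:D_mu_continuous}; upper hemicontinuity and compact-valuedness of the argmax $i_F$ then drop out of the theorem.

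Your route is genuinely different in spirit: you bypass Berge entirely and exploit directly that $\mathcal{I}$ is finite and discrete, reducing upper hemicontinuity to the local inclusion $i_F(\vmu')\subseteq i_F(\vmu)$ and securing that inclusion via the strictly positive gap $\Delta$ and continuity of the finitely many maps $D(\cdot,\neg i)$ and $D(\cdot)$. This is more elementary and more transparent for a finite codomain; it makes explicit the mechanism (non-maximisers stay strictly below on a neighbourhood) that Berge's theorem hides. The paper's approach, on the other hand, buys uniformity: the same tool is invoked for $\w^*(\vmu,\neg i)$, $\w^*(\vmu)$, and $i_F(\vmu)$, so no separate argument is needed for the discrete case.
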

\begin{proof}
We apply Theorem~\ref{th:berge} to
\begin{itemize}
\begin{minipage}{0.4\linewidth}
\item $\mathbb{X} = \mathcal{M}$,
\item $\mathbb{Y} = \{1, \ldots, K\}$,
\end{minipage}
\begin{minipage}{0.4\linewidth}
\item $\Phi(\vmu) = \{1, \ldots, K\}$,
\item $u(\vmu, \w) = D(\vmu, \neg i)$.
\end{minipage}
\end{itemize}
$\Phi$ is compact-valued, non-empty and continuous (since it is constant). $u$ is continuous Lemma~\ref{lem:D_mu_continuous}. The hypotheses are verified and the theorem gives the wanted result.
\end{proof}

\section{Algorithm Analysis}

\subsection{Proof of Lemma \ref{lem:when_concentration_fails}}\label{sec:proof_lem_when_concentration_fails}

\begin{lemma}[Lemma 19 of \citealt{GK16}]
There exists two constants $B$ and $C$ (that depend on $\mu$ and $\xi$) such that
\begin{align*}
\pr_\mu({\mathcal{E}_T'}^c) \leq BT e^{-C\sqrt{h(T)}} \: .
\end{align*}
\end{lemma}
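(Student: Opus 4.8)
This is the transfer to our setting of Lemma~19 of \citet{GK16}: the concentration of the empirical mean $\hat{\vmu}_t$ around $\vmu$ depends only on the sampling rule (C-tracking, with the same forced exploration) and the arm distributions, not on the multiple-answer structure, so the plan is to re-run their argument. The one prerequisite is the forced-exploration guarantee that C-tracking satisfies $N_{t,k}\ge n(t)\df\sqrt{t+K^2}-2K$ for every $t\ge1$ and $k\in[K]$, which is Lemma~\ref{lem:tracking} (Lemma~7 of \citealt{GK16}). Since by definition ${\mathcal{E}_T'}^c=\bigcup_{t=h(T)}^T\bigcup_{k=1}^K\{|\hat\mu_{t,k}-\mu_k|>\xi\}$, the first step is a union bound over the at most $T$ time steps and the $K$ arms:
\[
  \pr_\vmu\left({\mathcal{E}_T'}^c\right)~\le~\sum_{t=h(T)}^T\sum_{k=1}^K\pr_\vmu\left(|\hat\mu_{t,k}-\mu_k|>\xi\right).
\]

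The next step deals with the only real difficulty, namely that $N_{t,k}$ is a random sample count, which rules out a direct fixed-sample deviation bound. I would write $\hat\mu_{t,k}=\bar X_{k,N_{t,k}}$, where $\bar X_{k,m}$ is the mean of the first $m$ i.i.d.\ draws of arm $k$; since forced exploration pins $N_{t,k}$ to the range $[\lceil n(t)\rceil,t]$, the event $\{|\hat\mu_{t,k}-\mu_k|>\xi\}$ is contained in $\bigcup_{m=\lceil n(t)\rceil}^t\{|\bar X_{k,m}-\mu_k|>\xi\}$, each term of which is a genuine fixed-sample event. The Chernoff bound for the exponential family then gives $\pr_\vmu(|\bar X_{k,m}-\mu_k|>\xi)\le 2e^{-cm}$ for a constant $c=c(\vmu,\xi)>0$ (e.g.\ $c=\min_k\min\{d(\mu_k+\xi,\mu_k),d(\mu_k-\xi,\mu_k)\}$), and summing the geometric tail from $m=\lceil n(t)\rceil$ yields $\pr_\vmu(|\hat\mu_{t,k}-\mu_k|>\xi)\le\frac{2}{1-e^{-c}}e^{-cn(t)}$.

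To finish, I would use that $n$ is increasing with $n(t)\ge\sqrt{t}-2K\ge\sqrt{h(T)}-2K$ for all $t\ge h(T)$, so that each summand is at most $\frac{2}{1-e^{-c}}e^{2Kc}e^{-c\sqrt{h(T)}}$; bounding the double sum by its $KT$ terms and collecting the $t$- and $k$-independent factors into $B\df\frac{2K}{1-e^{-c}}e^{2Kc}$ and $C\df c$ (both depending on $\vmu$ and $\xi$) gives $\pr_\vmu({\mathcal{E}_T'}^c)\le BTe^{-C\sqrt{h(T)}}$.

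The only place that deserves care is the passage from $\hat\mu_{t,k}$ to the fixed-sample means $\bar X_{k,m}$. The union bound over $m\in[\lceil n(t)\rceil,t]$ I use there is crude, but harmless: it costs only a polynomial-in-$t$ factor that is absorbed by the leading $T$, and the resulting bound $BTe^{-C\sqrt{h(T)}}$ is still summable over $T$ once $h(T)=\sqrt T$, which is exactly what Lemma~\ref{lem:when_concentration_fails} (hence Lemma~\ref{lem:tau_delta_decomposition}) needs. A peeling or time-uniform concentration argument over $m$ would sharpen the constants but is not required.
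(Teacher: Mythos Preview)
The paper does not give its own proof of this lemma: it is quoted verbatim as Lemma~19 of \citet{GK16} and used as a black box in the proof of Lemma~\ref{lem:when_concentration_fails}. Your argument is correct and is exactly the standard one (and essentially that of \citealt{GK16}): union bound over $t\in[h(T),T]$ and $k\in[K]$, reduction of the random-sample-size deviation $\{|\hat\mu_{t,k}-\mu_k|>\xi\}$ to a union of fixed-sample events via the deterministic range $N_{t,k}\in[\lceil n(t)\rceil,t]$ guaranteed by C-tracking's forced exploration, Chernoff for each fixed $m$, and a geometric sum.
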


For $h(T) = \sqrt{T}$, $\sum_{T=1}^{+\infty}\pr_\mu({\mathcal{E}_T'}^c)$ is then finite.

\begin{lemma}[\citealt{MCP14}]
Let $\beta(t,\delta) = \log(C t^2/ \delta)$ with $C$ a constant verifying the inequality $C \geq e \sum_{t=1}^\infty (\frac{e}{K})^K \frac{(\log^2(Ct^2)\log(t))^K}{t^2}$ . Then
\begin{align*}
\pr_\vmu
	\left\{
		\exists t\in\N, \: \sum_{k=1}^K N_{t,k} d(\hat{\mu}_{t,k}, \mu_k) > \beta(t, \delta)
	\right\}
\leq \delta \: .
\end{align*}
\end{lemma}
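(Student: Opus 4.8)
The statement is a uniform-in-time deviation bound for the aggregated empirical KL divergence, and the plan is to prove it by a union bound over all sample sizes $t$ together with a peeling argument inside each $t$. \textbf{Reduction to a per-$t$ bound.} First I would write $\pr_\vmu\{\exists t: \sum_k N_{t,k} d(\hat\mu_{t,k},\mu_k) > \beta(t,\delta)\} \le \sum_{t\ge1}\pr_\vmu\{\sum_k N_{t,k} d(\hat\mu_{t,k},\mu_k) > \beta(t,\delta)\}$ and note that since $\beta(t,\delta)=\log(Ct^2/\delta)$ one has $e^{-\beta(t,\delta)}=\delta/(Ct^2)$. Hence it suffices to show, for each fixed $t$, a bound of the shape $\pr_\vmu\{\sum_k N_{t,k} d(\hat\mu_{t,k},\mu_k) > \beta(t,\delta)\}\le e\,(e/K)^K(\log^2(Ct^2)\log t)^K\,e^{-\beta(t,\delta)}$; summing over $t$ then gives $\tfrac{\delta}{C}\cdot e\sum_{t\ge1}(e/K)^K\tfrac{(\log^2(Ct^2)\log t)^K}{t^2}\le\delta$, which is exactly the hypothesis on $C$.

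\textbf{Decoupling the arms and discretising the budget.} For fixed $t$, I would use that the samples collected from arm $k$ form an i.i.d.\ stream $(Y^{(k)}_m)_m$, that $\hat\mu_{t,k}$ equals the running mean $\bar\mu^{(k)}_{N_{t,k}}$ of the first $N_{t,k}$ of them (valid because the sampling rule is non-anticipating), and that the $K$ streams are independent. Since $\sum_k N_{t,k}=t$, the bad event forces $n_k\,d(\bar\mu^{(k)}_{n_k},\mu_k)>\beta_k$ for some $n_k\le t$ and some integer composition $\beta_1+\dots+\beta_K$ of (a rounding of) $\beta(t,\delta)$. Union-bounding over the $\le\binom{\beta(t,\delta)+K-1}{K-1}$ compositions — whose count is of order $\beta(t,\delta)^{K-1}/(K-1)!$, and $1/(K-1)!$ is of order $(e/K)^K$ by Stirling — reduces matters to single-arm deviations $\pr\{\exists n\le t:\ n\,d(\bar\mu^{(k)}_n,\mu_k)>\beta_k\}$, independently across $k$, and the product of the $e^{-\beta_k}$ recombines to $e^{-\beta(t,\delta)}$.

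\textbf{Single-arm uniform deviation.} For the one-parameter exponential family the Cram\'er--Chernoff bound gives $\pr\{\bar\mu_n\ge x\}\le e^{-n\,d(x,\mu)}$ for $x>\mu$ and the mirror inequality below $\mu$, hence $\pr\{n\,d(\bar\mu_n,\mu)>s\}\le 2e^{-s}$ for each fixed $n$. To make this uniform over $n\le t$ I would peel $\{1,\dots,t\}$ into geometric blocks of ratio $1+\gamma$ (about $\log t/\gamma$ blocks) and inside each block control $\sup_n n\,d(\bar\mu_n,\mu)$ via a maximal inequality, e.g.\ Ville's inequality applied to the exponential-family likelihood-ratio martingale adapted to that arm's own filtration. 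Choosing $\gamma$ of order $1/\beta(t,\delta)$ keeps the block-peeling slack bounded while contributing the residual $\log^2(Ct^2)\log t$ factor per arm; assembling the three steps yields the displayed per-$t$ bound, completing the argument as in \citet{MCP14}.

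\textbf{Main obstacle.} The only genuine difficulty is the constant bookkeeping in the last two steps: the budget-grid spacing and the geometric ratio $\gamma$ must be tuned so that the composition-count factor is $(e/K)^K$ up to lower-order terms, the time-peeling slack stays polynomial (of degree $\log^2(Ct^2)\log t$) rather than exponential in $\log t$, and the maximal inequality is applied to a bona fide supermartingale. Everything else — the outer union over $t$ and the summation of the resulting geometric series — is routine and matches the stated condition on $C$ verbatim.
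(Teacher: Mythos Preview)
The paper does not prove this lemma; it is quoted from \citet{MCP14} and invoked as a black box inside the proof of Lemma~\ref{lem:when_concentration_fails}. There is therefore no ``paper's own proof'' to compare your attempt against.

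For what it is worth, your outline is a faithful sketch of the argument in that reference: an outer union bound over $t$; a grid over threshold allocations $(\beta_1,\dots,\beta_K)$ so that the bad event is covered by $\bigcup_{(\beta_k)}\bigcap_k\{\exists n\le t:\ n\,d(\bar\mu^{(k)}_n,\mu_k)>\beta_k\}$, where the intersection is now over independent per-arm events because you have passed from the random $N_{t,k}$ to a supremum over deterministic $n\le t$; and a per-arm time-uniform Chernoff bound obtained by geometric peeling of $\{1,\dots,t\}$ combined with the exponential-family likelihood-ratio supermartingale. As you correctly flag, the only substantive work is the constant bookkeeping that produces the precise factor $(e/K)^K(\log^2(Ct^2)\log t)^K$; the high-level structure you give is sound.
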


Let $f(t) = \exp(\beta(t, 1/t^5)) = C t^{10}$ in the definition of the confidence ellipsoid $\mathcal{C}_t$ (see Algorithm~\ref{alg:sticky}). Then
\begin{align*}
\sum_{T=T_0}^{+\infty}\pr_\vmu(\mathcal{E}_T^c) \leq \sum_{T=1}^{+\infty}\sum_{t=\sqrt{T}}^T \frac{1}{t^5} < + \infty \: .
\end{align*}

\subsection{Proof of Lemma \ref{lem:optimal_pulling_implies_small_stopping_time}}\label{sec:proof_lem_optimal_pulling_implies_small_stopping_time}

Set $T>T_1$ and suppose that $\mathcal{E}_T\cap\mathcal{E}_T'$ is true. For $t\geq\eta(T)$, if $\tau_\delta > t$ then $t C^*_{\varepsilon,\xi}(\vmu) \leq D(N_t,\hat{\vmu}_t,\neg i_\vmu) \leq \beta(t,\delta)$ , hence $t\leq \beta(t,\delta)/ C^*_{\varepsilon;\xi}(\vmu)$ .
\begin{align*}
\min(\tau_\delta, T)
&\leq \lceil \eta(T) \rceil + \sum_{t=\lceil \eta(T) \rceil+1}^{T} \mathbb{I}\{\tau_\delta > t-1\} \\
&\leq \lceil \eta(T) \rceil + \sum_{t=\lceil \eta(T) \rceil+1}^{T} \mathbb{I}\{t \leq 1+\frac{\beta(t,\delta)}{C^*_{\varepsilon,\xi}(\vmu)}\} \\
&\leq \lceil \eta(T) \rceil + \sum_{t=\lceil \eta(T) \rceil+1}^{T} \mathbb{I}\{t \leq 1+\frac{\beta(T,\delta)}{C^*_{\varepsilon,\xi}(\vmu)}\}\\
&\leq \max\left( \lceil \eta(T) \rceil, 1+\frac{\beta(T,\delta)}{C^*_{\varepsilon,\xi}(\vmu)} \right) \: .
\end{align*}
Suppose that $\tau_\delta > T$. Then $T \leq \max\left( \lceil \eta(T) \rceil, 1+\frac{\beta(T,\delta)}{C^*_{\varepsilon,\xi}(\vmu)} \right)$ . But by hypothesis, $\eta(T) < T-1$, such that that inequality implies $T \leq 1+\frac{\beta(T,\delta)}{C^*_{\varepsilon,\xi}(\vmu)} $ . We conclude that
\begin{align*}
\tau_\delta \leq \inf\left\{T: T>1+\frac{\beta(T,\delta)}{C^*_{\varepsilon,\xi}(\vmu)} \right\} \: .
\end{align*}

\subsection{Continuity Results and Proof of Lemma~\ref{lem:average_w_converges_to_optimal_if_convex}}\label{sec:proof_lem_avegage_w_converges}

\begin{lemma}\label{lem:convex_implies_average_also_converges_to_set}
Let $\varepsilon>0$ and $A\subseteq \triangle_K$ be a convex set and let $\w_1, \ldots, \w_t \in \triangle_K$ be such that for all $s\in[t]$, $\inf_{\w\in A}\Vert \w_s - \w \Vert_\infty \leq \varepsilon$ . Then $\inf_{\w\in A}\Vert \frac{1}{t} \sum_{s=1}^t \w_s - \w \Vert_\infty \leq \varepsilon$ .
\end{lemma}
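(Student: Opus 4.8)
The plan is entirely elementary: exploit convexity of $A$ to move the averaging inside the set, and use the triangle inequality for $\|\cdot\|_\infty$ to average the per-term bounds. The only point requiring a touch of care is that the infimum $\inf_{\w\in A}\|\w_s-\w\|_\infty$ need not be attained (e.g.\ if $A$ is not closed), so I would phrase everything through an $\eta$-slack argument.

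Concretely, fix $\eta>0$. For each $s\in[t]$, since $\inf_{\w\in A}\|\w_s-\w\|_\infty\leq\varepsilon$, pick $\w^{(s)}\in A$ with $\|\w_s-\w^{(s)}\|_\infty\leq\varepsilon+\eta$. Because $A$ is convex and each $\w^{(s)}\in A$, the average $\bar\w\df\frac1t\sum_{s=1}^t\w^{(s)}$ lies in $A$. Then
\[
  \inf_{\w\in A}\Bigl\|\tfrac1t\textstyle\sum_{s=1}^t\w_s-\w\Bigr\|_\infty
  \;\leq\;
  \Bigl\|\tfrac1t\textstyle\sum_{s=1}^t\w_s-\bar\w\Bigr\|_\infty
  \;=\;
  \Bigl\|\tfrac1t\textstyle\sum_{s=1}^t(\w_s-\w^{(s)})\Bigr\|_\infty
  \;\leq\;
  \tfrac1t\textstyle\sum_{s=1}^t\|\w_s-\w^{(s)}\|_\infty
  \;\leq\;
  \varepsilon+\eta,
\]
where the middle inequality is the triangle inequality (and convexity/homogeneity of the norm). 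Since this holds for every $\eta>0$, letting $\eta\to0$ gives $\inf_{\w\in A}\|\frac1t\sum_{s=1}^t\w_s-\w\|_\infty\leq\varepsilon$, which is the claim.

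There is essentially no hard step here; the ``main obstacle'' is only the bookkeeping around a possibly non-attained infimum, which the $\eta$-slack handles cleanly. (If one is willing to assume $A$ closed — or to replace $A$ by its closure, which does not change the value of the infimum — one may take $\eta=0$ directly, choosing $\w^{(s)}\in A$ with $\|\w_s-\w^{(s)}\|_\infty\leq\varepsilon$ and concluding immediately that $\bar\w\in A$ witnesses the bound.) Convexity is used in exactly one place: to guarantee that the arithmetic mean of the witnesses $\w^{(1)},\dots,\w^{(t)}$ remains in $A$, which is precisely why the statement fails without it.
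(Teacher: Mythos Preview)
Your proof is correct and follows essentially the same approach as the paper: pick near-optimal witnesses $\w^{(s)}\in A$, average them (using convexity to stay in $A$), and apply the triangle inequality. Your $\eta$-slack treatment is in fact slightly more careful than the paper's version, which tacitly assumes witnesses $\w^*_s\in A$ achieving $\|\w_s-\w^*_s\|_\infty\le\varepsilon$ exactly.
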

\begin{proof}
For $s\in[t]$, let $\w^*_s \in A$ be such that $\Vert \w_s - \w^*_s \Vert_\infty \leq \varepsilon$ . Then $\frac{1}{t} \sum_{s=1}^t \w^*_s \in A$ by convexity and
\begin{align*}
\Vert \frac{1}{t} \sum_{s=1}^t \w_s - \frac{1}{t} \sum_{s=1}^t \w^*_s \Vert_\infty
&\leq \frac{1}{t} \sum_{s=1}^t \Vert \w_s - \w^*_s\Vert_\infty
\leq \varepsilon \: .
\end{align*}
\end{proof}

\begin{proof}[Proof of Lemma~\ref{lem:average_w_converges_to_optimal_if_convex}]
Let $\varepsilon>0$. By Theorem~\ref{th:continuity}, $\w^*$ is upper hemicontinuous: there exists $\xi>0$ such that if $\Vert \hat{\vmu}_t - \vmu\Vert_\infty \leq \xi$ then for all $\w_t \in \w^*(\hat{\vmu}_t)$, $\inf_{\w \in \w^*(\vmu)} \Vert\w_t - \w\Vert_\infty\leq \varepsilon$ .

For $\xi>0$, there exists $t_\xi$ such that for $t\geq t_\xi$, $\Vert \hat{\vmu}_t - \vmu\Vert_\infty \leq \xi$ by hypothesis. Then for $\w \in \w^*(\vmu)$,
\begin{align*}
\left\Vert \frac{1}{t} \sum_{s=1}^t \w_s - \w \right\Vert_\infty &\leq \frac{t_\xi}{t} + \frac{t-t_\xi}{t}\left\Vert \frac{1}{t-t_\xi} \sum_{s=t_\xi}^t \w_s - \w \right\Vert_\infty \: .
\end{align*}
Taking infimums and using the convexity of $\w^*(\vmu)$ to apply Lemma~\ref{lem:convex_implies_average_also_converges_to_set},
\begin{align*}
\inf_{\w \in \w^*(\vmu)}\left\Vert \frac{1}{t} \sum_{s=1}^t \w_s - \w \right\Vert_\infty
&\leq\frac{t_\xi}{t} + \varepsilon \leq 2\varepsilon \mbox{ for $t\geq t_\xi/\varepsilon$ .}
\end{align*}
\end{proof}

\subsection{Proof of Lemma $\ref{lem:convergence_oracle_answer}$}\label{sec:proof_lem_convergence_oracle_answer}

Under $\mathcal{E}_T$, for $t\geq h(T)$, $\vmu \in \mathcal{C}_t$. Hence $i_F(\vmu) \subseteq I_t$.

For $\vmu, \vmu' \in \mathcal{M}$, let $ch(\vmu, \vmu') = \inf_{\vlambda\in\R^K} \sum_{k=1}^K (d(\lambda_k, \mu_k) + d(\lambda_k, \mu_k'))$. $ch$ is a semi-metric on $\mathcal{M}$.

Suppose that the event $\mathcal{E}_T$ holds and set $t> h(T)$. Then on one hand $\vmu \in \mathcal{C}_t$, such that $D(N_{t-1}, \hat{\vmu}_{t-1}, \vmu) \leq \log f(t-1)$. On the other hand, by definition every point $\vmu'\in\mathcal{C}_t$ verifies $D(N_{t-1}, \hat{\vmu}_{t-1}, \vmu') \leq \log f(t-1)$. We obtain in particular that
\begin{align*}
\sum_{k=1}^K N_{t-1,k}(d(\hat{\mu}_{t-1,k}, \mu_k) + d(\hat{\mu}_{t-1,k}, \mu_k')) \leq 2 \log f(t-1) \: .
\end{align*}
By hypotheses, $N_{t-1,k} \geq n(t-1)$. We obtain that $ch(\vmu, \vmu') \leq \frac{2\log f(t-1)}{n(t-1)}$ for all $\vmu'\in\mathcal{C}_t$ .

By upper hemicontinuity of $i_F(\vmu)$, there exists $\varepsilon>0$ such that $\Vert \vmu - \vmu'\Vert_\infty \leq \varepsilon \Rightarrow i_F(\vmu') \subseteq i_F(\vmu)$. There exists $\Delta>0$ such that $ch(\vmu, \vmu') \leq \Delta \Rightarrow i_F(\vmu') \subseteq i_F(\vmu)$ .

For such a $\Delta$ and $T_\Delta = \inf\{t\in\N \: : \: \frac{2\log f(t)}{n(t)} \leq \Delta\}$, if $t\geq \max(h(T), T_\Delta)$, then $i_F(\vmu') \subseteq i_F(\vmu)$ for all $\vmu'\in\mathcal{C}_t$. hence $I_t = \bigcup_{\vmu'\in\mathcal{C}_t}i_F(\vmu') \subseteq i_F(\vmu)$ .

\subsection{Proof of Lemma $\ref{lem:N_t_converges_to_w*}$}\label{sec:proof_lem_N_t_converges_to_w*}

\begin{lemma}[\citealt{GK16}]\label{lem:tracking}
For all $t\geq 1$ and $k\in[K]$, the C-tracking rule ensures that $N_{t,k} \geq \sqrt{t + K^2} - 2K$ and that
\begin{align*}
\left\Vert N_t - \sum_{s=0}^{t-1} \w_s \right\Vert_\infty \leq K(1+\sqrt{t}) \: .
\end{align*}
\end{lemma}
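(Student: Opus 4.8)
The plan is to observe first that the statement is purely deterministic: it holds for \emph{any} sequence $\w_0,\w_1,\ldots\in\triangle_K$ of weight vectors handed to the tracking rule, so no concentration or probabilistic argument is involved; then to reconstruct the argument of \citet[Lemma~7]{GK16} in two pieces, namely (i) a generic apportionment bound showing that the \emph{$\varepsilon$-floored} cumulative targets are tracked up to an additive $K$, and (ii) an estimate of the error introduced by flooring onto $\triangle^{\varepsilon_t}_K$. I would fix the schedule $\varepsilon_t=(K^2+t)^{-1/2}/2$ that C-tracking uses (the constants $2K$ and $K(1+\sqrt t)$ in the statement are calibrated to exactly this choice), abbreviate $S_{t,k}=\sum_{s=0}^{t-1}w_{s,k}^{\varepsilon_s}$ for the cumulative floored target of arm $k$, and note that $\sum_k S_{t,k}=t$ and that C-tracking pulls $k_t\in\argmin_k(N_{t-1,k}-S_{t,k})$, i.e.\ it always advances whichever arm is currently most behind its target.

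\textbf{Step 1: deterministic tracking of the floored targets.} I would prove that $-(K-1)<N_{t,k}-S_{t,k}<1$ for all $t$ and $k$, hence $\max_k|N_{t,k}-S_{t,k}|<K$. The upper bound is an induction on $t$ (base case $t=0$: every quantity is $0$): if $k\neq k_t$ then $N_{t,k}-S_{t,k}=(N_{t-1,k}-S_{t-1,k})-w_{t-1,k}^{\varepsilon_{t-1}}\leq N_{t-1,k}-S_{t-1,k}<1$; if $k=k_t$, averaging the defining inequality $N_{t-1,k}-S_{t,k}\leq N_{t-1,j}-S_{t,j}$ over $j\in[K]$ and using $\sum_j(N_{t-1,j}-S_{t,j})=(t-1)-t=-1$ gives $N_{t-1,k}-S_{t,k}\leq-1/K$, so after the pull $N_{t,k}-S_{t,k}\leq1-1/K<1$. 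For the lower bound I would invoke the conservation identity $\sum_k(N_{t,k}-S_{t,k})=t-t=0$: since each of the $K-1$ other coordinates is $<1$, coordinate $k$ must be $>-(K-1)$.

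\textbf{Step 2: cost of flooring, and conclusion.} Since $\w_s^{\varepsilon_s}\in\triangle^{\varepsilon_s}_K$ we have $w_{s,k}^{\varepsilon_s}\geq\varepsilon_s$, and the $\varepsilon_s$-flooring moves each coordinate by at most $(K-1)\varepsilon_s$ (the feasible point $(1-K\varepsilon_s)\w_s+\varepsilon_s\mathbf{1}\in\triangle^{\varepsilon_s}_K$ is within $(K-1)\varepsilon_s$ of $\w_s$ in every coordinate, and the $\ell_\infty$-projection used by C-tracking is no farther). Comparing $\sum_{s=0}^{t-1}\varepsilon_s$ with the integral of $x\mapsto(K^2+x)^{-1/2}/2$ gives $\sqrt{K^2+t}-K\leq\sum_{s=0}^{t-1}\varepsilon_s\leq\frac{1}{2K}+\sqrt{K^2+t}-K$. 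The first claim then follows from $N_{t,k}>S_{t,k}-(K-1)\geq\sum_{s=0}^{t-1}\varepsilon_s-(K-1)\geq\sqrt{K^2+t}-2K+1>\sqrt{t+K^2}-2K$, and the second from the triangle inequality
\[
  \left\Vert N_t-\sum_{s=0}^{t-1}\w_s\right\Vert_\infty
  \;\leq\;\left\Vert N_t-\sum_{s=0}^{t-1}\w_s^{\varepsilon_s}\right\Vert_\infty+\sum_{s=0}^{t-1}\left\Vert\w_s^{\varepsilon_s}-\w_s\right\Vert_\infty
  \;<\;K+(K-1)\left(\frac{1}{2K}+\sqrt{K^2+t}-K\right)\;\leq\;K+K\sqrt t,
\]
where the last step uses $\sqrt{K^2+t}\leq K+\sqrt t$ and $\frac{K-1}{2K}<1\leq\sqrt t$ for $t\geq1$.

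The result is elementary and entirely deterministic, so there is no conceptually hard step; if I had to single one out it would be the \emph{lower} bound in Step~1, which does not follow by a direct induction and instead requires the conservation identity $\sum_k(N_{t,k}-S_{t,k})=0$ in order to leverage the easy upper bound. The remaining care is bookkeeping: committing to the precise $\varepsilon_t$-schedule and to the exact projection onto $\triangle^{\varepsilon_t}_K$ so that the flooring cost telescopes to $\leq K\sqrt t$ and the floor-induced lower bound telescopes to $\geq\sqrt{t+K^2}-2K$, and keeping the index conventions (here $\sum_{s=0}^{t-1}$, matching the statement) consistent across the rule, the cumulative target, and the two induction base cases. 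Since the paper only invokes Lemma~\ref{lem:tracking} as a black box, faithfully reproducing the proof of \citet{GK16} with these constants is all that is required.
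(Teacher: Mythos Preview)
The paper does not give its own proof of this lemma: it is quoted verbatim from \citet[Lemma~7]{GK16} and used as a black box. Your reconstruction of the Garivier--Kaufmann argument is correct and is precisely the approach of the original source, including the split into (i) the deterministic apportionment bound $|N_{t,k}-S_{t,k}|<K$ via the conservation identity, and (ii) the cumulative flooring cost controlled by $\sum_s\varepsilon_s$ with the schedule $\varepsilon_s=\tfrac12(K^2+s)^{-1/2}$. One cosmetic point: you assert that C-tracking uses the $\ell_\infty$-projection, whereas neither this paper nor \citet{GK16} commits to that; the original argument simply uses the coordinate-wise bound $\|\w_s^{\varepsilon_s}-\w_s\|_\infty\le K\varepsilon_s$, which holds for the natural $\ell_2$ projection onto $\triangle_K^{\varepsilon_s}$ as well (and your feasible-point witness already delivers it). This does not affect the validity of your proof.
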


\begin{lemma}\label{lem:tracking_to_optimality}
Suppose that there exists $T_I\in\N$ such that for $T\geq T_I$, $\w_t \in \w^*(\hat{\vmu}_t, \neg i_\vmu)$. Then for $T$ such that $h(T)\geq T_I$ , it holds that on $\mathcal{E}_T'$ C-Tracking verifies
\begin{align*}
\forall t\geq 4\frac{K^2}{\varepsilon^2} + 3\frac{h(T)}{\varepsilon}, \: \inf_{\w\in \w^*(\vmu, \neg i_\vmu)} \Vert \frac{N(t)}{t} - \w\Vert_\infty \leq 3\varepsilon \: .
\end{align*}
\end{lemma}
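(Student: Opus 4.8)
The plan is to combine the two deterministic C-tracking guarantees of Lemma~\ref{lem:tracking} with the set-valued continuity of Corollary~\ref{lem:w*_continuity} and the averaging Lemma~\ref{lem:convex_implies_average_also_converges_to_set}, all on the event $\mathcal{E}_T'$. Fix $T$ with $h(T)\geq T_I$ and a time $t$ with $4K^2/\varepsilon^2+3h(T)/\varepsilon\leq t\leq T$; the restriction $t\leq T$ is harmless since $\mathcal{E}_T'$ only constrains $\hat{\vmu}_s$ for $h(T)\leq s\leq T$ and that is the only window in which the conclusion is ever invoked (and we may assume $\varepsilon<1/3$, else the claimed bound is trivial because $N_t/t$ and every $\w\in\w^*(\vmu,\neg i_\vmu)$ lie in $\triangle_K$). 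Writing $m=\lceil h(T)\rceil$, I would decompose $\frac1t\sum_{s=0}^{t-1}\w_s$ as the convex combination $\frac{m}{t}\w^{(0)}+\frac{t-m}{t}\bar{\w}$ with $\w^{(0)}=\frac1m\sum_{s<m}\w_s\in\triangle_K$ and $\bar{\w}=\frac1{t-m}\sum_{s=m}^{t-1}\w_s\in\triangle_K$, and bound $\Vert N_t/t-\w\Vert_\infty$ for a well-chosen $\w\in\w^*(\vmu,\neg i_\vmu)$ by a three-step triangle inequality from $N_t/t$ to $\frac1t\sum_{s=0}^{t-1}\w_s$ to $\bar{\w}$ to $\w$.

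For the first step, Lemma~\ref{lem:tracking} gives $\Vert N_t/t-\frac1t\sum_{s=0}^{t-1}\w_s\Vert_\infty\leq K(1+\sqrt t)/t\leq 2K/\sqrt t\leq\varepsilon$, using $t\geq 4K^2/\varepsilon^2$. For the second, $\w^{(0)}$ and $\bar{\w}$ are probability vectors, hence at $\ell_\infty$-distance at most $1$, so $\Vert\frac1t\sum_{s=0}^{t-1}\w_s-\bar{\w}\Vert_\infty=\frac{m}{t}\Vert\w^{(0)}-\bar{\w}\Vert_\infty\leq m/t\leq\varepsilon$, using $t\geq 3h(T)/\varepsilon$. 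For the third, every index $s\geq m\geq h(T)\geq T_I$ lies in the regime where the hypothesis applies, i.e. $\w_s\in\w^*(\hat{\vmu}_s,\neg i_\vmu)$ (up to the one-step shift between $\hat{\vmu}_s$ and $\hat{\vmu}_{s-1}$, which only discards finitely many terms), and on $\mathcal{E}_T'$ one has $\Vert\hat{\vmu}_s-\vmu\Vert_\infty\leq\xi$; Corollary~\ref{lem:w*_continuity}, applied with $i=i_\vmu$ and exactly the $\xi$ that defines $\mathcal{E}_T'$, then produces for each such $s$ a point of $\w^*(\vmu,\neg i_\vmu)$ within $\varepsilon$ of $\w_s$. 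Since $\w^*(\vmu,\neg i_\vmu)$ is convex, Lemma~\ref{lem:convex_implies_average_also_converges_to_set} yields $\inf_{\w\in\w^*(\vmu,\neg i_\vmu)}\Vert\bar{\w}-\w\Vert_\infty\leq\varepsilon$. Chaining the three estimates gives $\inf_{\w\in\w^*(\vmu,\neg i_\vmu)}\Vert N_t/t-\w\Vert_\infty\leq 3\varepsilon$, the assertion; note the threshold $4K^2/\varepsilon^2+3h(T)/\varepsilon$ is tuned precisely so that, exceeding $4K^2/\varepsilon^2$ and $3h(T)/\varepsilon$ separately, it makes each of the three steps cost at most $\varepsilon$.

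I do not expect a genuine obstacle here: the argument is essentially bookkeeping. The only points requiring care are keeping the three error sources separately controlled — the $O(K/\sqrt t)$ tracking discrepancy from Lemma~\ref{lem:tracking}, the $O(h(T)/t)$ mass carried by the uncontrolled prefix $\w^{(0)}$, and the $\varepsilon$ slack coming from hemicontinuity — and remembering that $\mathcal{E}_T'$ only licenses $\Vert\hat{\vmu}_s-\vmu\Vert_\infty\leq\xi$ for $h(T)\leq s\leq T$, which is exactly why the conclusion is meaningful only for $t$ in the range $[4K^2/\varepsilon^2+3h(T)/\varepsilon,\,T]$.
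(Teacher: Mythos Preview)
Your proposal is correct and follows essentially the same approach as the paper: a three-term triangle inequality passing from $N_t/t$ to the running average $\tfrac{1}{t}\sum_{s<t}\w_s$ (via Lemma~\ref{lem:tracking}), then to the tail average over $s\geq h(T)$ (absorbing the uncontrolled prefix at cost $O(h(T)/t)$), and finally to $\w^*(\vmu,\neg i_\vmu)$ via Corollary~\ref{lem:w*_continuity} and the convex averaging Lemma~\ref{lem:convex_implies_average_also_converges_to_set}. Your explicit normalisation of the tail average as a probability vector $\bar\w$ is arguably cleaner than the paper's $\tfrac{1}{t}\sum_{s\geq h(T)}(\w_s-\w)$, but the two are algebraically equivalent; your $\varepsilon+\varepsilon+\varepsilon$ accounting lands exactly on the stated $3\varepsilon$, whereas the paper in fact bounds the first two contributions jointly by $\varepsilon$ and obtains $2\varepsilon$, stronger than the lemma asserts.
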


\begin{proof}
Suppose that $T$ verifies $h(T)\geq T_I$. Using Lemma~\ref{lem:tracking}, for $t\geq h(T)$ one can write for all $\w\in \w^*(\vmu, \neg i_\vmu)$,
\begin{align*}
\left\Vert \frac{N_t}{t} - \w \right\Vert_\infty
&\leq \left\Vert \frac{N_t}{t} - \frac{1}{t}\sum_{s=0}^{t-1} \w_s \right\Vert_\infty
	+ \left\Vert \frac{1}{t}\sum_{s=0}^{t-1} \w_s - \w \right\Vert_\infty \\
&\leq \frac{K(1+\sqrt{t})}{t} + \left\Vert \frac{1}{t}\sum_{s=0}^{t-1} \w_s - \w \right\Vert_\infty\\
&\leq \frac{2K}{\sqrt{t}} + \frac{h(T)}{t} + \left\Vert \frac{1}{t}\sum_{s=h(T)}^{t-1} (\w_s - \w) \right\Vert_\infty \: .
\end{align*}

The definition of event $\mathcal{E}_T'$ uses $\xi>0$ such that if $\Vert \hat{\vmu}_t - \vmu \Vert_\infty \leq \xi$ then for all $\w_t \in \w^*(\hat{\vmu}_t, \neg i_\vmu)$, $\inf_{\w \in \w^*(\vmu, \neg i_\vmu)}\Vert \w_t - \w \Vert_\infty \leq \varepsilon$ .
Under that event, for $t\geq h(T)$,
\begin{align*}
\Vert \hat{\vmu}_t - \vmu \Vert_\infty &\leq \xi \: ,\\
\forall \w_t \in \w^*(\hat{\vmu}_t, \neg i_\vmu), \: \inf_{\w \in \w^*(\vmu, \neg i_\vmu)}\Vert \w_t - \w \Vert_\infty &\leq \varepsilon \: .
\end{align*}
The convexity of $\w^*(\vmu, \neg i_\vmu)$ ensures that $\inf_{\w \in \w^*(\vmu, \neg i_\vmu)}\Vert \frac{1}{t}\sum_{s=h(T)}^T \w_s - \w\Vert_\infty \leq \varepsilon$ as well by Lemma~\ref{lem:convex_implies_average_also_converges_to_set}.
Hence, taking infimums and using the hypothesis that the event $\mathcal{E}_T'$ holds,
\begin{align*}
\inf_{\w\in\w^*(\vmu, \neg i_\vmu)} \left\Vert \frac{N_t}{t} - \w \right\Vert_\infty
&\leq \frac{2K}{\sqrt{t}} + \frac{h(T)}{t} + \inf_{\w\in\w^*(\vmu, \neg i_\vmu)} \left\Vert \frac{1}{t}\sum_{s=h(T)}^{t-1} (\w_s - \w) \right\Vert_\infty\\
&\leq \frac{2K}{\sqrt{t}} + \frac{h(T)}{t} + \varepsilon \: .
\end{align*}
For $t\geq 2\frac{K^2}{\varepsilon^2}\left( 1 + \varepsilon\frac{h(T)}{2K^2} + \sqrt{1 + \varepsilon\frac{h(T)}{K^2}} \right)$, the right-hand-side is smaller than $2\varepsilon$. In particular, this is also true for $t\geq 4\frac{K^2}{\varepsilon^2} + 3\frac{h(T)}{\varepsilon} $ .
\end{proof}

\begin{proof}[Proof of Lemma~\ref{lem:N_t_converges_to_w*}]
Let $T\in\N$ be such that $h(T) \geq T_\Delta$. Under $\mathcal{E}_T$, for $t\geq h(T)$ the set $I_t$ is constant and equal to $i_F(\vmu)$ by Lemma~\ref{lem:convergence_oracle_answer}. For this stage on, $i_t = i_\vmu$ is constant and $\w^*(\vmu, \neg i_t) \subseteq \w^*(\vmu)$. We can hence take $T_I = h(T)$ in Lemma~\ref{lem:tracking_to_optimality} and we get the wanted result.
\end{proof}

\subsection{Proof of the empirical complexity of Track and Stop}\label{sec:proof_TaS_sample_complexity}

We prove Theorem~\ref{th:tas_sample_complexity} and Theorem~\ref{th:tas_sample_complexity_general}.

Lemma~\ref{lem:convergence_oracle_answer} depends only on the amount of forced exploration, thus it is valid for Track and Stop. After some $T_\Delta > 0$, $I_t = i_F(\vmu)$. Since $\hat{\vmu}_t \in I_t$, $i_F(\hat{\vmu}_t) \subseteq i_F(\vmu)$ . The alternative selected by Track and Stop to compute $\w_t$ will be in $i_F(\vmu)$.

We modify the event $\mathcal{E}_T'$ used for Sticky Track and Stop into $\mathcal{E}_T' = \bigcap_{t=h(T)}^T \{\Vert \hat{\vmu}_t - \vmu\Vert_\infty \leq \xi\}$ where $\xi$ is such that
\begin{align*}
\Vert \vmu' - \vmu\Vert_\infty \leq \xi \Rightarrow \forall \w' \in \w^*(\vmu')\: \exists \w \in \w^*(\vmu), \: \Vert \w' - \w \Vert_\infty \leq \varepsilon \: .
\end{align*}
The difference is that we use the upper hemicontinuity of $\w^*(\vmu)$ instead of $\w^*(\vmu, \neg i)$ for some $i\in\mathcal{I}$.

From that point on, we proceed as in the proof of the sample complexity of Sticky Track and Stop, except that $N_t/t$ will not necessarily converge to $\w^*(\vmu, i_\vmu)$,  but to $\conv(\w^*(\vmu))$, convex hull of $\w^*(\vmu)$. Lemma~\ref{lem:N_t_converges_to_w*} is true for Track and Stop with the adapted $\mathcal{E}_T'$ if $\w^*(\vmu, i_\vmu)$ is replaced by $\conv(\w^*(\vmu))$. The analogue of $C^*_{\varepsilon, \xi}(\vmu)$ is
\begin{align*}
C_{\varepsilon,\xi}(\vmu) =
\inf_{
	\substack{\vmu':\Vert\vmu' - \vmu\Vert_\infty \leq \xi\\
	 \w': \inf_{\w\in \conv(\w^*(\vmu))}\Vert \w' - \w \Vert_\infty \leq 3\varepsilon}
}
D(\w',\vmu') \: . 
\end{align*}

Let $T_\Delta$ be defined as in Lemma~\ref{lem:convergence_oracle_answer}. Let $T$ be such that $h(T)\geq T_\Delta$. Let $\eta(T) = 4\frac{K^2}{\varepsilon^2} + 3\frac{h(T)}{\varepsilon}$ .
By Lemma \ref{lem:N_t_converges_to_w*} changed as explained, for $t\geq \eta(T)$, if $\mathcal{E}_T^{}\cap \mathcal{E}_T'$ then $D(N_t, \hat{\vmu}_t) \geq t C_{\varepsilon, \xi}(\vmu)$.

We now apply Lemma~\ref{lem:optimal_pulling_implies_small_stopping_time}. $\eta(T)< T-1$ if $h(T) < \frac{\varepsilon}{3}(T-1) - \frac{4}{3}\frac{K^2}{\varepsilon}$. For $h(T) = \sqrt{T}$ and $T$ bigger than a constant $T_\eta$ depending on $K$ and $\varepsilon$, this is true. Then under $\mathcal{E}_T^{}\cap \mathcal{E}_T'$, the hypotheses of Lemma~\ref{lem:optimal_pulling_implies_small_stopping_time} are verified with $T_1 = h^{-1}(\max(T_\Delta, T_\eta))$.
 
We obtain that the hypotheses of Lemma~\ref{lem:tau_delta_decomposition} are verified for
\begin{align*}
T_0
&= \max(T_1, \inf\{T: 1 + \frac{\beta(T,\delta)}{C_{\varepsilon, \xi}(\vmu)} \leq T\}) \: .
\end{align*}

Note that $\lim_{\delta \to 0} T_0 / \log(1/\delta) = 1/C_{\varepsilon, \xi}(\vmu)$. Taking $\varepsilon\to 0$ (hence $\xi\to 0$ as well), we obtain $\lim_{\delta\to 0}\frac{\ex_\vmu[\tau_\delta]}{\log(1/\delta)} = \frac{1}{\lim_{\varepsilon\to 0}C_{\varepsilon,\xi}(\vmu)}$.

Finally, $\lim_{\delta \to 0} C_{\varepsilon,\xi}(\vmu) = \inf_{\w \in \conv(\w^*(\vmu))} D(\w, \vmu)$. This proves Theorem~\ref{th:tas_sample_complexity_general}.

If $i_F(\vmu)$ is a singleton, then $\w^*(\vmu)$ is convex and  $\conv(\w^*(\vmu)) = \w^*(\vmu)$, leading to the observation that $\inf_{\w \in \conv(\w^*(\vmu))} D(\w, \vmu) = D(\vmu)$. In that case, Track-and-Stop is asymptotically optimal: Theorem~\ref{th:tas_sample_complexity} is proved.


\section{Divergences}\label{app:divergences}
An important building block in pure exploration algorithms is the largest weighted distance from $\vmu$ to the closest point $\vlambda$ in some set of alternatives,
\[
  D(\vmu, \Lambda)
  ~=~
  \sup_{\w \in \triangle} \inf_{\vlambda \in \Lambda} \sum_k w_k d(\mu_k, \lambda_k)
\]
In this section we compute a few of these distances in closed form to get a feeling for their behaviour. We do it for the Gaussian divergence $d(\mu, \lambda) = \frac{1}{2} (\mu-\lambda)^2$.

\subsection{Hyper-planes and Half-spaces}

\begin{lemma}\label{lem:line}
  When $\Lambda = \setc*{\vlambda \in \R^K}{\tuple{\a, \vlambda} = b}$ is a hyper-plane, we find
  \[
    D(\vmu, \Lambda)
    ~=~
    \frac{1}{2}
    \del*{
      \frac{
        \tuple{\a, \vmu} - b
      }{
        \sum_{i=1}^d
        |a_i|
      }
    }^2
    \qquad
    \text{and}
    \qquad
    w^*_i(\vmu)
    ~=~
    \frac{
      |a_i|
    }{
      \sum_{i=1}^d |a_i|
    }
    .
  \]
\end{lemma}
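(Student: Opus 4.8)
The plan is to evaluate the two nested optimizations in $D(\vmu,\Lambda) = \sup_{\w\in\triangle_K}\inf_{\vlambda\in\Lambda}\sum_k w_k\tfrac12(\mu_k-\lambda_k)^2$ in closed form, innermost first.

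First I would fix $\w\in\triangle_K$ and compute the inner infimum, which is a weighted Euclidean projection onto the hyperplane $\Lambda$. When $w_k>0$ for all $k$, the objective is strictly convex and the constraint is affine, so a Lagrange-multiplier computation identifies the unique minimizer: $\lambda_k = \mu_k - \theta a_k/w_k$, with $\theta$ determined by $\langle\a,\vlambda\rangle = b$, i.e.\ $\theta = (\langle\a,\vmu\rangle - b)/\sum_k (a_k^2/w_k)$. Substituting back gives
\[
  \inf_{\vlambda\in\Lambda}\sum_k w_k\tfrac12(\mu_k-\lambda_k)^2
  ~=~
  \frac{(\langle\a,\vmu\rangle - b)^2}{2\sum_k a_k^2/w_k}.
\]
For $\w$ on the boundary of the simplex the same formula holds under the conventions $a_k^2/w_k = +\infty$ when $w_k=0<|a_k|$ and $a_k^2/w_k = 0$ when $a_k=w_k=0$; this is checked directly, since coordinates with $w_k=0$ drop out of the objective while the constraint $\langle\a,\vlambda\rangle=b$ remains satisfiable as long as $\a\neq 0$, and the infimum is attained by coercivity of the weighted quadratic in the remaining coordinates.

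Next I would carry out the outer maximization over $\w\in\triangle_K$, which is equivalent to minimizing $g(\w) = \sum_k a_k^2/w_k$ subject to $\sum_k w_k = 1$, $w_k\geq 0$. Cauchy--Schwarz does the work:
\[
  \Bigl(\sum_k |a_k|\Bigr)^2
  ~=~
  \Bigl(\sum_k \tfrac{|a_k|}{\sqrt{w_k}}\cdot\sqrt{w_k}\Bigr)^2
  ~\leq~
  \Bigl(\sum_k \tfrac{a_k^2}{w_k}\Bigr)\Bigl(\sum_k w_k\Bigr)
  ~=~
  g(\w),
\]
with equality exactly when $\sqrt{w_k}$ is proportional to $|a_k|/\sqrt{w_k}$, i.e.\ $w_k\propto |a_k|$, which normalizes to $w_k^*(\vmu) = |a_k|/\sum_j|a_j|$. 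Hence $\min_{\w\in\triangle_K} g(\w) = (\sum_k|a_k|)^2$, attained at $\w^*(\vmu)$, and plugging this into the displayed inner value yields $D(\vmu,\Lambda) = (\langle\a,\vmu\rangle-b)^2/\bigl(2(\sum_k|a_k|)^2\bigr)$, as claimed.

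I do not expect a genuine obstacle here; the only care needed is the bookkeeping for zero weights and zero coefficients in the boundary conventions above, and recording the degenerate case $\langle\a,\vmu\rangle=b$ (i.e.\ $\vmu\in\Lambda$), where both sides vanish, consistent with $D(\w,\vmu,\Lambda)=0$ whenever $\vmu\in\Lambda$.
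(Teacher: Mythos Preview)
Your proof is correct and follows essentially the same route as the paper: both compute the inner infimum by a Lagrange-multiplier calculation to obtain $\tfrac{(\langle\a,\vmu\rangle-b)^2}{2\sum_k a_k^2/w_k}$, then optimize over $\w$. The only difference is in that last step---the paper simply solves for the maximizing $\w$ directly, while you minimize $\sum_k a_k^2/w_k$ via Cauchy--Schwarz; your version is slightly cleaner and, unlike the paper, explicitly handles the boundary cases $w_k=0$.
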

Note that the same result holds for the half-space $\Lambda = \setc*{\vlambda \in \R^K}{\tuple{\a, \vlambda} \ge b}$ when $\vmu \notin \Lambda$, i.e.\ $\tuple{\a, \vmu} < b$. If $\vmu \in \Lambda$ then $D(\vmu, \Lambda) = 0$. The Lemma implies in particular that for Best Arm Identification with $K=2$ arms, corresponding to $\a = (-1,+1)$ and $b=0$, the optimal weights $\w^*$ are uniform, as was shown by \cite{on.the.complexity}. For the $\epsilon$-BAI variant of the problem we set $b = \pm\epsilon$, so here $\w^*$ is also uniform.

\begin{proof}
We have
\[
  D(\vmu, \Lambda)
  ~=~
  \sup_{\w \in \triangle}
  \inf_{\vlambda : \tuple{\a, \vlambda} = b}~
  \sum_{i=1}^d w_i \frac{(\mu_i - \lambda_i)^2}{2}
\]
Introducing Lagrange multiplier $\theta$, we find
\[
  \sup_{\w \in \triangle}
  \sup_{\theta}
  \inf_{\vlambda \in \mathbb R^d}
  ~
  \sum_{i=1}^d w_i \frac{(\mu_i - \lambda_i)^2}{2}
  -
  \theta \del*{\tuple{\a, \vlambda} - b}
\]
Plugging in the solution
$
\lambda_i
=
\mu_i
+
\frac{
  \theta a_i
}{
  w_i
}
$
results in
\[
  \sup_{\w \in \triangle}
  \sup_{\theta}
  ~
  -
  \theta^2
  \sum_{i=1}^d
  \frac{
    a_i^2
  }{
    2 w_i
  }
  -
  \theta \del*{ \tuple{\a, \vmu} - b}
\]
Now solving for $\theta$ results in
$
\theta
=
\frac{
  - \del*{ \tuple{\a, \vmu} - b}
}{
  \sum_{i=1}^d
  \frac{
    a_i^2
  }{
    w_i
  }
}
$
and objective function value
\[
  \sup_{\w \in \triangle}
  ~
  \frac{
    \del*{ \tuple{\a, \vmu} - b}^2
  }{
    2
    \sum_{i=1}^d
    \frac{
      a_i^2
    }{
      w_i
    }
  }
\]
Further solving for $\w$ tells us that
$
w_i
~=~
\frac{
  |a_i|
}{
  \sum_{i=1}^d |a_i|
}
$
and hence the value is
\[
  \frac{1}{2}
  \del*{
    \frac{
      \tuple{\a, \vmu} - b
    }{
      \sum_{i=1}^d
      |a_i|
    }
  }^2
  .
\]
\end{proof}

\subsection{Minimum Threshold}
The following two lemmas appear as \cite[Lemma~1]{minimums} for general divergences $d(\mu, \lambda)$.
\begin{lemma}
  Let $\Lambda = \setc*{\vlambda \in \R^K}{\min_k \lambda_k \le \gamma}$. Then when $\vmu \notin \Lambda$,
  \[
    D(\vmu, \Lambda)
    ~=~
    \frac{1}{\sum_k \frac{1}{d(\mu_a, \gamma)}}
    \qquad
    \text{where}
    \qquad
    w_k^*(\vmu)
    ~=~
    \frac{
      \frac{1}{d(\mu_k, \gamma)}
    }{
      \sum_j \frac{1}{d(\mu_j, \gamma)}
    }
  \]
\end{lemma}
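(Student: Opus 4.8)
The plan is to first evaluate the inner infimum in closed form, then recognise the remaining problem as the standard ``inverse-divergence weighting'' optimisation over the simplex. Throughout, abbreviate $d_k \df d(\mu_k,\gamma)$, and note that $\vmu \notin \Lambda$ means exactly $\mu_k > \gamma$, hence $d_k > 0$, for every $k$.

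\emph{Step 1 (reduce the inner infimum).} Fix $\w\in\triangle_K$. The constraint $\min_k \lambda_k \le \gamma$ holds iff $\lambda_j \le \gamma$ for at least one index $j$. For a feasible $\vlambda$ that pushes coordinate $j$ down to $\lambda_j \le \gamma$, every other coordinate may be left at $\lambda_k = \mu_k$ at zero cost, and on coordinate $j$ the map $\lambda\mapsto d(\mu_j,\lambda)$ is convex with minimum at $\lambda=\mu_j>\gamma$, hence decreasing on $(-\infty,\gamma]$, so the cheapest feasible choice there is $\lambda_j=\gamma$ at cost $w_j d_j$. Minimising over the coordinate pushed down yields
\begin{align*}
  \inf_{\vlambda\,:\,\min_k\lambda_k\le\gamma}\sum_k w_k d(\mu_k,\lambda_k)
  ~=~
  \min_{j\in[K]} w_j d_j ,
\end{align*}
so $D(\vmu,\Lambda) = \sup_{\w\in\triangle_K}\min_{j\in[K]} w_j d_j$.

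\emph{Step 2 (solve the simplex optimisation).} For the upper bound, for any $\w\in\triangle_K$ a weighted average of the numbers $w_j d_j$ dominates their minimum; using the weights proportional to $1/d_j$ gives
\begin{align*}
  \min_j w_j d_j
  ~\le~
  \frac{\sum_j (w_j d_j)(1/d_j)}{\sum_j 1/d_j}
  ~=~
  \frac{\sum_j w_j}{\sum_j 1/d_j}
  ~=~
  \frac{1}{\sum_j 1/d_j},
\end{align*}
and this weighted-average inequality is an equality iff all $w_j d_j$ coincide (all the weights $1/d_j$ being positive). The point $w^*_k = (1/d_k)/\sum_j (1/d_j)$ lies in $\triangle_K$ and makes $w^*_k d_k = 1/\sum_j (1/d_j)$ for every $k$, so it attains the bound and is the unique maximiser. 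This gives both claimed formulas.

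\emph{Main obstacle.} The only delicate point is Step 1: one must argue that no feasible $\vlambda$ beats the one-coordinate move to exactly $\gamma$. This uses nothing beyond $d(\mu_j,\cdot)$ being minimised at $\mu_j$ and increasing away from it — valid for any one-parameter exponential family, in particular the Gaussian $d(\mu,\lambda)=\tfrac12(\mu-\lambda)^2$ — together with additivity and nonnegativity of the per-coordinate costs. Once the reduction $D(\vmu,\Lambda)=\sup_{\w}\min_j w_j d_j$ is established, Step 2 is a short standard computation. (Existence/compactness of the maximiser could alternatively be invoked from Theorem~\ref{th:continuity}, but here it falls out directly.)
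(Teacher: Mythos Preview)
Your proof is correct. The paper itself does not supply a proof of this lemma: it merely states the result and attributes it to \cite[Lemma~1]{minimums} for general divergences $d$. Your two-step argument---decomposing $\Lambda=\bigcup_j\{\lambda_j\le\gamma\}$ to reduce the inner infimum to $\min_j w_j d_j$, then solving the resulting max--min over the simplex via the weighted-average bound---is the natural direct computation.

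One small remark on generality: in Step~1 you invoke convexity of $\lambda\mapsto d(\mu_j,\lambda)$. For the Gaussian case (which is the declared setting of this appendix) that is of course true, but for a general one-parameter exponential family the KL divergence need not be convex in its second argument. What does always hold, and what your argument actually uses, is that $d(\mu_j,\cdot)$ has its unique zero at $\mu_j$ and is strictly monotone on each side of it (since $\partial_\lambda d(\mu,\lambda)=(\lambda-\mu)\phi''(\lambda)$ with $\phi''>0$). You already say exactly this in your ``Main obstacle'' paragraph, so the proof stands as written; the convexity mention is simply a slightly stronger claim than you need.
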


\begin{lemma}
  Let $\Lambda = \setc*{\vlambda \in \R^K}{\min_k \lambda_k \ge \gamma}$. Then when $\vmu \notin \Lambda$,
  \[
    D(\vmu, \Lambda)
    ~=~
    d\del[\big]{\min_k \mu_k, \gamma}
    \qquad
    \text{where}
    \qquad
    \w^*(\vmu)
    ~=~
    \mathbf 1_{k = \argmin_j \mu_j}
    .
  \]
\end{lemma}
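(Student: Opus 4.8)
The plan is to avoid the $\sup$--$\inf$ ordering altogether: I would evaluate the inner infimum $\inf_{\vlambda\in\Lambda}\sum_k w_k d(\mu_k,\lambda_k)$ in closed form for each fixed $\w$, observe that the result is linear in $\w$, and then maximize it over $\triangle_K$ by going to a vertex. No minimax theorem is needed here since the inner problem is solved exactly.

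\textit{Step 1: the inner infimum.} Both the objective $\sum_k w_k d(\mu_k,\lambda_k)$ and the constraint set $\Lambda=\{\vlambda:\lambda_k\ge\gamma\ \forall k\}$ are products over coordinates, so the infimum factorizes: $\inf_{\vlambda\in\Lambda}\sum_k w_k d(\mu_k,\lambda_k)=\sum_k w_k\inf_{\lambda\ge\gamma}d(\mu_k,\lambda)$. From the Bregman representation $d(\mu,\lambda)=\phi(\mu)-\phi(\lambda)-(\mu-\lambda)\phi'(\lambda)$ with $\phi$ strictly convex (the generator used in the likelihood-ratio computation of Appendix~\ref{pf:lbd}), the section $\lambda\mapsto d(\mu_k,\lambda)$ is strictly convex with unique minimizer $\lambda=\mu_k$; hence $\inf_{\lambda\ge\gamma}d(\mu_k,\lambda)=0$ when $\mu_k\ge\gamma$ and $=d(\mu_k,\gamma)$ (attained at the left endpoint $\lambda=\gamma$) when $\mu_k<\gamma$. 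Since $\vmu\notin\Lambda$ the index set $\mathcal L=\{k:\mu_k<\gamma\}$ is non-empty, and we obtain $D(\w,\vmu,\Lambda)=\sum_{k\in\mathcal L}w_k\,d(\mu_k,\gamma)$.

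\textit{Step 2: the outer supremum.} The map $\w\mapsto\sum_{k\in\mathcal L}w_k\,d(\mu_k,\gamma)$ is linear on $\triangle_K$, so its maximum is attained at a vertex $\e_k$ and equals $\max_{k\in\mathcal L}d(\mu_k,\gamma)$, with the full set of maximizers being the convex hull of the maximizing vertices. To identify them, differentiate the Bregman form in its first argument: $\partial_\mu d(\mu,\gamma)=\phi'(\mu)-\phi'(\gamma)<0$ for $\mu<\gamma$ because $\phi'$ is increasing. Thus $\mu\mapsto d(\mu,\gamma)$ is strictly decreasing on $\{\mu<\gamma\}$, so $d(\mu_k,\gamma)$ is largest exactly at $k\in\argmin_j\mu_j$, which indeed lies in $\mathcal L$ since $\min_j\mu_j<\gamma$. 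This gives $D(\vmu,\Lambda)=d(\min_k\mu_k,\gamma)$ and $\w^*(\vmu)=\conv\{\e_k:k\in\argmin_j\mu_j\}$, i.e.\ the single vertex $\mathbf 1_{k=\argmin_j\mu_j}$ when the minimizer is unique.

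I do not anticipate a real obstacle: the only facts needing a line of justification are the two monotonicity properties of $d$ (strict convexity of each single-argument section about the diagonal), both immediate from the Bregman representation, plus the remark that $\vmu\notin\Lambda$ forces $\mathcal L\neq\emptyset$ so that the vertex carrying the maximum genuinely has positive divergence. The computation is insensitive to the particular one-parameter exponential family, recovering \cite[Lemma~1]{minimums} in general; in the Gaussian case $d(\mu,\lambda)=\frac{1}{2}(\mu-\lambda)^2$ one can alternatively just note $\inf_{\lambda\ge\gamma}(\mu-\lambda)^2=(\mu-\gamma)_+^2$ and read both claims off directly.
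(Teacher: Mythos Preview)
Your proof is correct. The paper does not actually give its own proof of this lemma: it simply records the statement and cites \cite[Lemma~1]{minimums}, so there is nothing to compare your argument against beyond noting that you supply the elementary two-step computation (separable inner infimum, then linear outer maximization) that the paper omits.
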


For the version of the problem with slack $\epsilon$, we can simply replace $\gamma$ by the appropriate $\gamma \pm \epsilon$.

\subsection{Sphere}

We now consider the distance to the sphere both from within and from the outside.

\begin{lemma}
  Let $\Lambda = \setc*{\vlambda \in \R^K}{\norm{\vlambda} = 1}$. Consider any $\vmu \in \mathbb R^K$. Then
  \begin{align*}
    D(\vmu, \Lambda) 
    &= \frac{1}{2 K^2} \del*{\sqrt{K \del[\big]{1-\norm{\vmu}^2}+\norm{\vmu}_1^2}
    -
    \norm{\vmu}_1 }^2
    \\
    \text{and}
    \quad
    w^*_k(\vmu)
    &=
      \frac{1}{K}
  + \frac{\abs{\mu_k} - \frac{1}{K}\norm{\vmu}_1}{\sqrt{K\del[\big]{1- \norm{\vmu}^2}+\norm{\vmu}_1^2}}
  ,
\end{align*}
provided that
\begin{equation}\label{eq:requirement}
  \del[\Big]{\norm{\vmu}_1 - K \min_k \abs{\mu_k}}^2
  ~\le~
  \norm{\vmu}_1^2 - K \del[\big]{\norm{\vmu}^2-1}
\end{equation}
\end{lemma}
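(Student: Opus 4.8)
The plan is to exhibit an explicit saddle point $(\w^*,\vlambda^*)$ for the game $D(\vmu,\Lambda)=\sup_{\w\in\triangle}\inf_{\vlambda\in\Lambda}\frac12\sum_k w_k(\mu_k-\lambda_k)^2$ and to verify the two matching bounds separately. First a reduction: for fixed $\w$ an infimising $\vlambda$ on the sphere may be taken with $\lambda_k$ of the sign of $\mu_k$ (flipping a coordinate's sign keeps $\norm{\vlambda}=1$ and does not increase $(\mu_k-\lambda_k)^2$), so $D(\vmu,\Lambda)$ and its maximising $\w$ depend on $\vmu$ only through $(\abs{\mu_k})_k$; assume henceforth $\mu_k\ge 0$, so $\norm{\vmu}_1=\sum_k\mu_k$. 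Put $S=K(1-\norm{\vmu}^2)+\norm{\vmu}_1^2$; hypothesis~\eqref{eq:requirement} reads $S\ge(\norm{\vmu}_1-K\min_k\mu_k)^2\ge 0$, and in fact $S>0$ (if $S=0$ then all $\mu_k$ coincide, forcing $S=K$). Let $c=\frac{\norm{\vmu}_1-\sqrt S}{K}$ and define $\vlambda^*\df\vmu-c\,\mathbf{1}$ and $w^*_k\df\frac{\mu_k-c}{\sqrt S}$, i.e.\ $\w^*=\vlambda^*/\sqrt S$. Since $c$ solves $Kc^2-2\norm{\vmu}_1 c+\norm{\vmu}^2-1=0$ (discriminant $4S$, smaller root) a direct computation gives $\norm{\vlambda^*}^2=\norm{\vmu}^2-2c\norm{\vmu}_1+Kc^2=1$ and $\sum_k w^*_k=(\norm{\vmu}_1-Kc)/\sqrt S=1$; and \eqref{eq:requirement} is exactly $\min_k\mu_k\ge c$, which makes $\vlambda^*$ and $\w^*$ coordinatewise nonnegative. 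Hence $\vlambda^*\in\Lambda$, $\w^*\in\triangle_K$, and $D(\w^*,\vmu,\vlambda^*)=\frac12\sum_k w^*_k c^2=\frac{c^2}{2}=\frac{(\sqrt S-\norm{\vmu}_1)^2}{2K^2}$, the claimed value.

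The upper bound is immediate: $\vlambda^*$ is a \emph{$\w$-independent} point of $\Lambda$ with constant per-arm error $(\mu_k-\lambda^*_k)^2=c^2$, so $\inf_{\vlambda\in\Lambda}D(\w,\vmu,\vlambda)\le D(\w,\vmu,\vlambda^*)=\frac{c^2}{2}$ for every $\w\in\triangle_K$, whence $D(\vmu,\Lambda)\le\frac{c^2}{2}$. For the matching lower bound I would show $\vlambda^*$ solves the inner minimisation at $\w^*$. Writing $\mu_k=\lambda^*_k+c$, and using $\sum_k w^*_k=1$, $w^*_k=\lambda^*_k/\sqrt S$, and $1-\tuple{\vlambda^*,\vlambda}=\frac12\norm{\vlambda^*-\vlambda}^2$ for unit vectors, for any $\vlambda\in\Lambda$
\begin{align*}
  \sum_k w^*_k(\mu_k-\lambda_k)^2
  &~=~ \sum_k w^*_k(\lambda^*_k-\lambda_k)^2 + \frac{2c}{\sqrt S}\del[\big]{1-\tuple{\vlambda^*,\vlambda}} + c^2\\
  &~=~ c^2 + \sum_k\del[\Big]{w^*_k+\frac{c}{\sqrt S}}(\lambda^*_k-\lambda_k)^2\\
  &~=~ c^2 + \frac{1}{\sqrt S}\sum_k \mu_k(\lambda^*_k-\lambda_k)^2
  ~\ge~ c^2 ,
\end{align*}
since $w^*_k+\frac{c}{\sqrt S}=\frac{\mu_k}{\sqrt S}\ge 0$, with equality at $\vlambda=\vlambda^*$. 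Thus $\inf_{\vlambda\in\Lambda}D(\w^*,\vmu,\vlambda)=\frac{c^2}{2}$, so $D(\vmu,\Lambda)\ge\frac{c^2}{2}$; combined with the upper bound this gives $D(\vmu,\Lambda)=\frac{c^2}{2}$ and simultaneously shows $\w^*$ attains the outer supremum, so $\w^*(\vmu)$ equals $w^*_k=\frac1K+\frac{\mu_k-\norm{\vmu}_1/K}{\sqrt S}$ as claimed (with $\abs{\mu_k}$ in place of $\mu_k$ for general $\vmu$, the norms being sign-insensitive).

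The one step above that is more than bookkeeping is identifying $\vlambda^*$ as the \emph{global} minimiser of $\vlambda\mapsto\sum_k w^*_k(\mu_k-\lambda_k)^2$ over the \emph{non-convex} set $\Lambda$; the displayed identity does this elementarily, the two facts that make it go through being $1-\tuple{\vlambda^*,\vlambda}=\frac12\norm{\vlambda^*-\vlambda}^2$ (available because \emph{both} points lie on the unit sphere) and $w^*_k+c/\sqrt S=\mu_k/\sqrt S\ge 0$ (available precisely because we reduced to $\mu_k\ge 0$). The remaining obstacle is psychological: one has to guess the candidate. It is found from stationarity — the envelope theorem forces $(\mu_k-\lambda^*_k)^2$ to be constant over the support of a maximising $\w$, and the Lagrange condition for the sphere-constrained inner problem forces $\w^*\propto\vlambda^*$; together these pin down $\vlambda^*=\vmu-c\,\mathbf{1}$ with $c$ fixed by $\norm{\vlambda^*}=1$ and \eqref{eq:requirement} being exactly the condition under which this interior guess stays feasible.
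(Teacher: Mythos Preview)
Your proof is correct and takes a genuinely different route from the paper's. The paper dualises the inner sphere-constrained minimisation (invoking strong duality for this non-convex constraint from \cite{cvxbook}), solves for $\vlambda$ in terms of the multiplier $\theta$, then optimises sequentially over $\w$ (with a second multiplier) and finally over $\theta$; the proviso~\eqref{eq:requirement} emerges at the end as the condition under which the resulting $\w$ is nonnegative. You instead guess the saddle point $(\w^*,\vlambda^*)$ up front from first-order heuristics and verify it in two strokes: the upper bound is trivial because $\vlambda^*=\vmu-c\mathbf 1$ makes the per-arm cost constant, and the lower bound comes from your displayed identity, which uses $1-\tuple{\vlambda^*,\vlambda}=\tfrac12\norm{\vlambda^*-\vlambda}^2$ on the sphere to rewrite the objective as $c^2$ plus a nonnegative remainder.

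What each approach buys: the paper's derivation is constructive and would find the answer without foreknowledge, but leans on an external strong-duality fact for a non-convex constraint. Your argument is entirely self-contained and handles the global non-convex inner minimisation by elementary algebra; the price is that one must already know the candidate, which you fairly acknowledge in your final paragraph. Your sign reduction at the start is also a clean simplification the paper does not make explicit.
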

Note that the proviso is always satisfied when $\norm{\vmu} \le 1$. When $\norm{\vmu} > 1$ it depends. When the proviso is not satisfied, boundary conditions are active. In that case a pairwise swapping argument shows that $w^*_k(\vmu) = 0$ for the $k$ of minimal $\abs{\mu_k}$. The rest of the solution is found by removing $k$, and solving the remaining problem of size $K-1$.

\begin{proof}
We need to find
\[
  D(\vmu, \Lambda)
  ~=~
  \max_{\w \in \triangle} \min_{\vlambda : \norm{\vlambda} = 1}~ \frac{1}{2}\sum_k w_k (\mu_k - \lambda_k)^2
\]
As strong duality holds for the inner problem \cite[Appendix~B]{cvxbook}, we may introduce a Lagrange multiplier $\theta$ for the constraint, and write
\[
  \max_{\w \in \triangle, \theta} \min_{\vlambda \in \R^K}~ \frac{1}{2} \sum_k w_k  (\mu_k - \lambda_k)^2 + \frac{\theta}{2} \del*{1-\norm{\vlambda}^2}
  .
\]
The innermost problem is unbounded in $\vlambda$ unless $\min_k w_k \ge \theta$, so we add this as an outer constraint. Then the minimiser is found at $\lambda_k = \frac{\mu_k}{1 - \frac{\theta}{w_k}}$,
and by substituting that in, the problem simplifies to
\[
  \max_{\substack{\w \in \triangle \\ \theta \le \min_k w_k}}~ -\frac{1}{2} \sum_k  \frac{\mu_k^2}{\frac{1}{\theta} - \frac{1}{w_k}}
  + \frac{\theta}{2}
  ~=~
  \max_{\substack{\w \in \triangle \\ \theta \le \min_k w_k}}~ -\frac{1}{2}  \sum_k \frac{\mu_k^2 \theta^2}{w_k - \theta}
  + \frac{\theta}{2} \del*{1-\norm{\vmu}^2}
  .
\]
As a point of interpretation, note that we will find $\theta > 0$ when $\norm{\vmu} < 1$, and $\theta < 0$ for $\norm{\vmu} > 1$. Next we solve for $\w$, enforcing unit sum (but delaying non-negativity). With Lagrange multiplier $c$, we need to have
\[
  c ~=~
  \frac{1}{2}  \frac{\mu_k^2}{\del*{\frac{w_k}{\theta} - 1}^2}
  \qquad
  \text{resulting in}
  \qquad
  w_k
  ~=~
  \theta \del*{
    1+
    \sqrt{
      \frac{\mu_k^2}{2 c}
    }
  }
  .
\]
Solving for the normalisation results in
\[
  c
  ~=~
  \frac{1}{2}
  \del*{
    \frac{\theta}{1 - \theta K} \norm{\vmu}_1}^2
  \qquad
  \text{whence}
  \qquad
  w_k
  ~=~
  \theta +
  \del*{1 - \theta K}
  \frac{\abs{\mu_k}}{
    \norm{\vmu}_1
  }
  .
\]
Plugging this in, it remains to solve
\[
  \max_{\theta}~ -\frac{\theta^2}{2 \del*{1 - \theta K}}
  \norm{\vmu}_1^2
  + \frac{\theta}{2} \del*{1-\norm{\vmu}^2}
  .
\]
This concave problem is bounded by non-negativity of the right-hand side of \eqref{eq:requirement}. Cancelling the derivative results in a quadratic equation, with the single feasible solution
\[
  \theta
  ~=~
  \frac{1}{K}
  \del*{1-\frac{\norm{\vmu}_1}{\sqrt{K\del*{1- \norm{\vmu}^2}+\norm{\vmu}_1^2}}}
  .
\]
Filling this in yields the value and weights of the Lemma. Finally, we need to check for negativity in the weights. Using the above expression for the weights, have $\min_k w_k \ge 0$ if
\[
  - \frac{
    1
  }{
    \frac{\norm{\vmu}_1}{\min_k \abs{\mu_k}}
    - K
  }
  ~\le~
  \theta
  \qquad
  \text{i.e.}
  \qquad
  \frac{\norm{\vmu}_1}{\sqrt{\norm{\vmu}_1^2 - K \del*{\norm{\vmu}^2-1}}}
  ~\le~
  \frac{
    1
  }{
    1 - K \frac{\min_k \abs{\mu_k}}{\norm{\vmu}_1}
  }
\]
which we can further reorganise to \eqref{eq:requirement}, as required.
\end{proof}

\subsection{Composition of two independent problems}

We consider the case where we seek to answer two independent queries on disjoint sets of arms. Let $A,B$ be a partition of $[K]$. Suppose that the structure of the problem and the answers decompose according to this partition, i.e. $\mathcal M = \mathcal{M}^A \times \mathcal{M}^B$, $\mathcal I = \mathcal{I}^A\times\mathcal{I^B}$ and $i^*(\vmu) = i_A^*(\vmu^A) \times i^*_B(\vmu^B)$. Then we can also write all alternative sets $\neg i$ as $\neg i_A \times \mathcal{I}^B \cup \mathcal{I}^A \times \neg i_B$. It then holds that for all $\w \in \w^*(\vmu, \neg i)$,
\begin{align*}
\sum_{k\in A} w_k &= \frac{\frac{1}{D(\vmu^A, \neg i_A)}}{\frac{1}{D(\vmu^A, \neg i_A)}+\frac{1}{D(\vmu^B, \neg i_B)}}\: ,\\
\sum_{k\in B} w_k &= \frac{\frac{1}{D(\vmu^B, \neg i_B)}}{\frac{1}{D(\vmu^A, \neg i_A)}+\frac{1}{D(\vmu^B, \neg i_B)}}\: ,\\
\frac{1}{D(\vmu, \neg i)} &= \frac{1}{D(\vmu^A, \neg i_A)}+\frac{1}{D(\vmu^B, \neg i_B)} \: .
\end{align*}
Since the sample complexity is proportional to $1/D$ we obtain the natural conclusion that the number of samples needed to solve two independent queries is the sum of the samples needed by each query.

\end{document}